\def\eqref#1{Eq.~(\ref{#1})}
\def\1{\bm{1}}
\def\vc{{\bm{c}}}
\def\vp{{\bm{p}}}
\def\vq{{\bm{q}}}
\def\vr{{\bm{r}}}
\def\vu{{\bm{u}}}
\def\vv{{\bm{v}}}
\def\vx{{\bm{x}}}
\def\vy{{\bm{y}}}
\def\vz{{\bm{z}}}
\DeclareMathAlphabet{\mathsfit}{\encodingdefault}{\sfdefault}{m}{sl}
\SetMathAlphabet{\mathsfit}{bold}{\encodingdefault}{\sfdefault}{bx}{n}
\def\gL{{\mathcal{L}}}
\def\gO{{\mathcal{O}}}
\def\sS{{\mathbb{S}}}
\newcommand{\E}{\mathbb{E}}
\DeclareMathOperator*{\var}{Var}
\definecolor{mediumpersianblue}{rgb}{0.0, 0.4, 0.65}
\definecolor{citecolor}{RGB}{0, 50, 110}
\definecolor{linkcolor}{RGB}{150, 50, 50}
\newcommand{\up}[1]{\textbf{\textcolor[rgb]{0.8 0 0}{+#1}} }
\newcommand{\revise}[1]{#1}
\newcommand{\dw}[1]{\textbf{\textcolor[rgb]{0.8 0 0}{-#1}} }
\newcommand{\method}{GH}
\newcommand{\methodspace}{GH }
\newcommand\blfootnote[1]{%
  \begingroup
  \renewcommand\thefootnote{}\footnote{#1}%
  \addtocounter{footnote}{-1}%
  \endgroup
}
\theoremstyle{plain}
\newtheorem{theorem}{Theorem}[section]
\newtheorem{lemma}[theorem]{Lemma}
\theoremstyle{definition}
\newtheorem{definition}[theorem]{Definition}
\theoremstyle{remark}
\title{Combating Representation Learning Disparity with Geometric Harmonization}
\author{Zhihan Zhou$^1$ \, Jiangchao Yao$^{1,2\dagger}$ \, Feng Hong$^1$ \, 
Ya Zhang$^{1,2}$ \, Bo Han$^{3}$ \, Yanfeng Wang$^{1,2\dagger}$ \\
$^{1}$Cooperative Medianet Innovation Center, Shanghai Jiao Tong University \\  $^{2}$Shanghai AI Laboratory  \quad $^{3}$Hong Kong Baptist University \\ 
\texttt{\{zhihanzhou, Sunarker, feng.hong, ya\_zhang, wangyanfeng\}@sjtu.edu.cn} \\ \quad 
\texttt{bhanml@comp.hkbu.edu.hk} \\
}
\begin{document}

\maketitle

\begin{abstract}
Self-supervised learning~(SSL) as an effective paradigm of representation learning has achieved tremendous success on various curated datasets in diverse scenarios. Nevertheless, when facing the long-tailed distribution in real-world applications, it is still hard for existing methods to capture transferable and robust representation. \revise{Conventional SSL methods, pursuing \emph{sample-level uniformity}, easily leads to representation learning disparity where head classes dominate the feature regime but tail classes passively collapse.} To address this problem, we propose a novel Geometric Harmonization~(\method) method to encourage \emph{category-level uniformity} in representation learning, which is more benign to the minority and almost does not hurt the majority under long-tailed distribution. Specially, \methodspace measures the population statistics of the embedding space on top of self-supervised learning, and then infer an fine-grained instance-wise calibration to constrain the space expansion of head classes and avoid the passive collapse of tail classes. Our proposal does not alter the setting of SSL and can be easily integrated into existing methods in a low-cost manner.
Extensive results on a range of benchmark datasets show the effectiveness of \methodspace with high tolerance to the distribution skewness. \revise{Our code is available at \href{https://github.com/MediaBrain-SJTU/Geometric-Harmonization}{https://github.com/MediaBrain-SJTU/Geometric-Harmonization}.}\blfootnote{$\dagger$ The corresponding authors are Jiangchao Yao and Yanfeng Wang.}
\end{abstract}

\section{Introduction}\label{sec:intro}

Recent years have witnessed a great success of self-supervised learning to learn generalizable representation~\citep{caron2020unsupervised,chen2020simple,doersch2015unsupervised,wang2015unsupervised}. Such rapid advances mainly benefit from the elegant training on the label-free data, which can be collected in a large volume. However, the real-world natural sources usually exhibit the long-tailed distribution~\citep{reed2001pareto}, and directly learning representation on them \revise{can} lead to the distortion issue of the embedding space, namely, the majority dominates the feature regime~\citep{zhang2021deep} and the minority collapses~\citep{mixon2022neural}. Thus, it becomes urgent to pay attention to representation learning disparity, especially as fairness of machine learning draws increasing attention~\citep{jiang2021self,liu2021self,yang2020rethinking,zhou2022contrastive}.

Different from the flourishing supervised long-tailed learning~\citep{kang2019decoupling,menon2021long, yang2020rethinking}, self-supervised learning under long-tailed distributions is still under-explored, since there is no labels available for the calibration. Existing explorations to overcome this challenge mainly resort to the possible tailed sample discovery and provide the implicit bias to representation learning. For example, BCL~\citep{zhou2022contrastive} leverages the memorization discrepancy of deep neural networks (DNNs) on unknown head classes and tail classes to drive an instance-wise augmentation. SDCLR~\citep{jiang2021self} contrasts the feature encoder and its pruned counterpart to discover hard examples that mostly covers the samples from tail classes, and efficiently enhance the learning preference towards tailed samples. DnC~\citep{tian2021divide} resorts to a divide-and-conquer methodology to mitigate the data-intrinsic heterogeneity and avoid the representation collapse of minority classes. \citet{liu2021self} adopts a data-dependent sharpness-aware minimization scheme to build support to tailed samples in the optimization. However, few works hitherto have considered the intrinsic limitation of the widely-adopted contrastive learning loss, and design the corresponding balancing mechanism to promote the representation learning parity.

\begin{figure*}[!t]
	\centering
	\includegraphics[height=0.27\textwidth, width=0.27\textwidth]{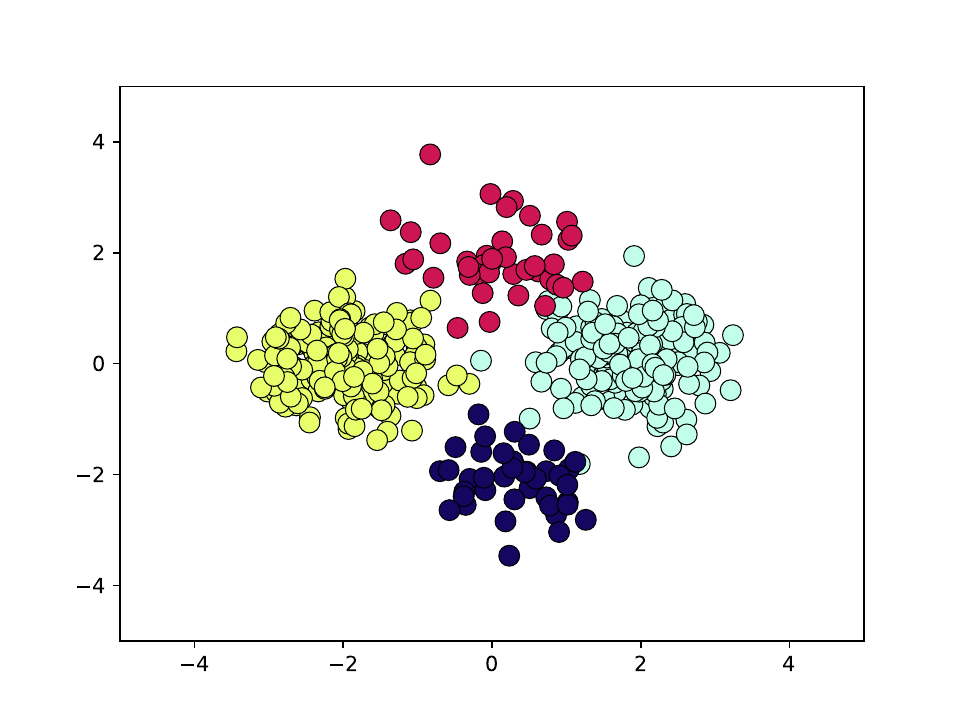}
	\hspace{7mm}
	\includegraphics[height=0.27\textwidth, width=0.27\textwidth]{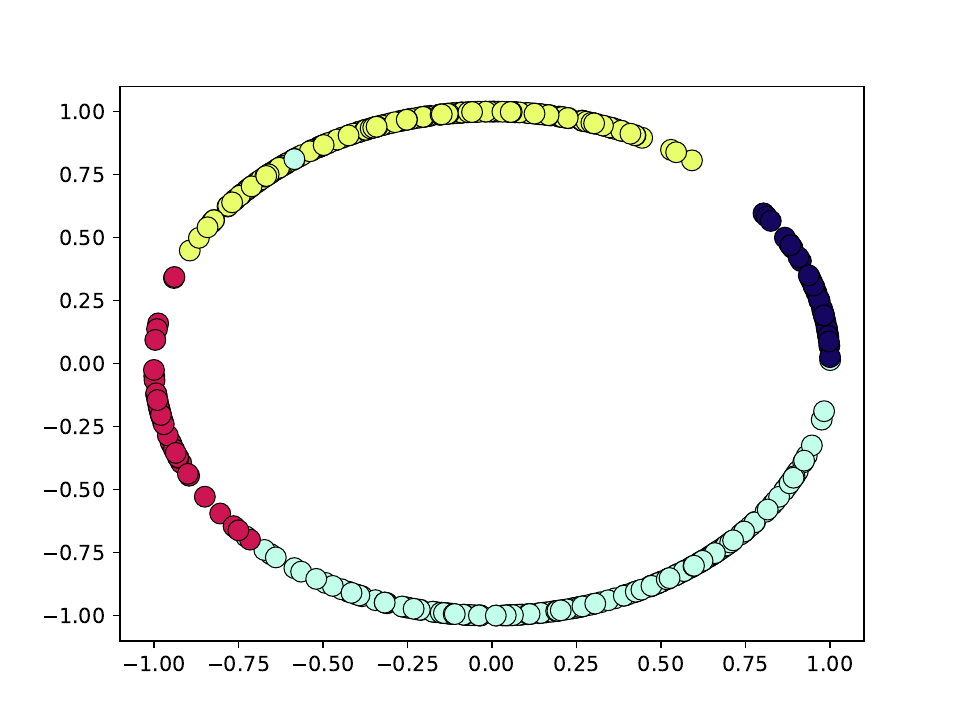}
	\hspace{7mm}
	\includegraphics[height=0.27\textwidth, width=0.27\textwidth]{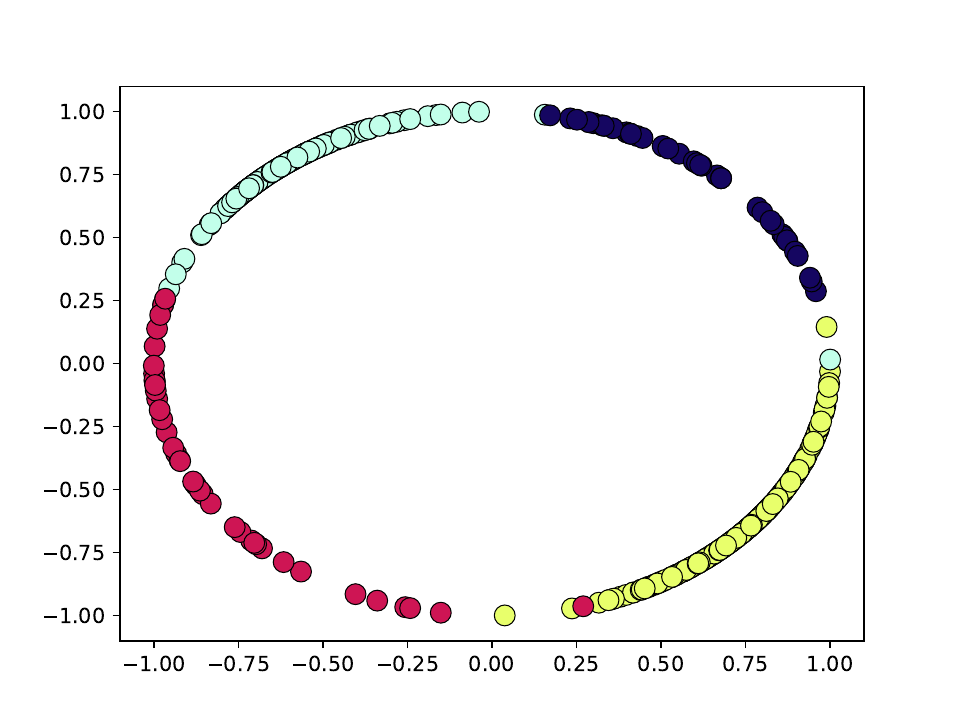} 
	\vspace{-2mm}
    \caption{Comparison of Geometric Harmonization and the plain SSL method on a 2-D imbalanced synthetic dataset. (Left) Visualization of the 2-D synthetic dataset. (Middle) The embedding distribution of each category learnt by the vanilla contrastive learning loss is approximately proportional to the \revise{number of samples}, leading to the undesired representation learning disparity.   (Right) \methodspace mitigates the adverse effect of class imbalance and approaches to the category-level uniformity.}
    \vspace{-5mm}
    \label{fig:toy}
\end{figure*}

We rethink the characteristic of the contrastive learning loss, and try to understand \emph{``Why the conventional contrastive learning underperforms in self-supervised long-tailed context?"} To answer this question, let us consider two types of representation uniformity: (1) \emph{Sample-level uniformity}. As \revise{stated} in~\citep{wang2020understanding}, the contrastive learning targets to distribute the representation of data points uniformly in the embedding space. Then, the feature span of each category is proportional to their corresponding \revise{number of samples}. (2) \emph{Category-level uniformity}. This uniformity pursues to split the region equally for different categories without considering their corresponding \revise{number of samples}~\citep{papyan2020prevalence, graf2021dissecting}. In the class-balanced scenarios, the former uniformity naturally implies the latter uniformity, resulting in the equivalent separability for classification. However, in the long-tailed distributions, they are different: sample-level uniformity leads to the feature regime that is biased towards the head classes considering their dominant sample quantity and sacrifices the tail classes due to the limited sample quantity. By contrast, category-level uniformity means the equal allocation \textit{w.r.t.} classes, which balances the space of head and tail classes, and is thus more benign to the downstream classification~\citep{fang2021exploring,graf2021dissecting,li2022targeted}. Unfortunately, there is no support for promoting category-level uniformity in contrastive learning loss, which explains the question arisen at the beginning.

In this study, we propose a novel method, termed as \emph{Geometric Harmonization} (\method) to combat representation learning disparity in SSL under long-tailed distributions. Specially, \methodspace uses a geometric uniform structure to measure the uniformity of the embedding space in the coarse granularity. Then, a surrogate label allocation is computed to provide a fine-grained instance-wise calibration, which explicitly compresses the greedy representation space expansion of head classes, and constrain the passive representation space collapse of tail classes. The alternation in the conventional loss refers to an extra efficient optimal-transport optimization that dynamically pursues the category-level uniformity. In \cref{fig:toy}, we give a toy experiment\footnote{For more details, please refer to \cref{appendix:impdetail}.}
to visualize the distribution of the embedding space without and with \method. In a nutshell, our contributions can be summarized as follows,
\begin{enumerate}
    \item  To our best knowledge, we are the first to investigate the drawback of the contrastive learning loss in self-supervised long-tailed context and point out that the resulted sample-level uniformity is an intrinsic limitation to the representation parity, motivating us to pursue category-level uniformity with more benign downstream generalization~(\cref{sec:method}). 
 
    \item We develop a novel and efficient \emph{Geometric Harmonization}~(\cref{fig:method}) to combat the representation learning disparity in SSL, which dynamically harmonizes the embedding space of SSL to approach the category-level uniformity with the theoretical guarantee.
    \item Our method can be easily plugged into existing SSL methods for addressing the data imbalance without much extra cost. Extensive experiments on a range of benchmark datasets demonstrate the consistent improvements in learning robust representation with our \method. 
 
\end{enumerate}
\section{Related Work}
\label{sec:related-work}

\textbf{Self-Supervised Long-tailed Learning.} There are several recent explorations devoted to this direction~\citep{lin2017focal,jiang2021self,tian2021divide,liu2021self,zhou2022contrastive}. BCL~\citep{zhou2022contrastive} leverages the memorization effect of DNNs to automatically drive an instance-wise augmentation, which enhances the learning of tail samples. SDCLR~\citep{jiang2021self} constructs a self-contrast between model and its pruned counterpart to learn more balanced representation. Classic Focal loss~\citep{lin2017focal} leverages the loss statistics to putting more emphasis on the hard examples, which has been applied to self-supervised long-tailed learning~\citep{zhou2022contrastive}. DnC~\citep{tian2021divide} benefits from the parameter isolation of multi-experts during the divide step and the information aggregation during the conquer step to prevent the dominant invasion of majority.
\citet{liu2021self} proposes to penalize the sharpness surface in a reweighting manner to calibrate class-imbalance learning. \revise{Recently, TS~\citep{kukleva2023temperature} employs a dynamic strategy on the temperature factor of contrastive loss, harmonizing instance discrimination and group-wise discrimination. PMSN~\citep{assran2023hidden} proposes the power-law distribution prior, replacing the uniform prior, to enhance the quality of learned representations. }

\textbf{Hyperspherical Uniformity.} 
The distribution uniformity has been extensively explored from the physic area, \textit{e.g.}, Thomson problem~\citep{thomson1904xxiv,smale1998mathematical}, to machine learning area like some kernel-based extensions, \textit{e.g.}, Riesz s-potential~\citep{hardin2005minimal,liu2018learning} or Gaussian potential~\citep{cohn2007universally, borodachov2019discrete, wang2020understanding}. Some recent explorations regarding the features of DNNs~\citep{papyan2020prevalence,fang2021exploring,mixon2022neural} discover a terminal training stage when the embedding collapses to the geometric means of the classifier \textit{w.r.t.} each category. Specially, these optimal class means specify a perfect geometric uniform structure with clear geometric interpretations and generalization guarantees~\citep{zhu2021geometric,yang2022we,kasarla2022maximum}.
In this paper, we first extend this intuition into self-supervised learning and leverage the specific structure to combat the representation disparity in SSL.

\section{Method}

\subsection{Problem Formulation}

We denote the dataset $\mathcal{D}$, for each data point $(\vx,\vy)\in \mathcal{D}$, the input $\vx \in \mathbb{R}^m$ and the associated label  $\vy \in \{1,\dots,L\}$. Let $N$ denote the \revise{number of samples, $\mathrm{R} = {N_{max}}\slash{N_{min}}$ denote the imbalanced ratio (IR), where $N_{max},N_{min}$ is the number of samples in the largest and smallest class, respectively. Let $n_i$ denote the number of samples in class $i$.} In SSL, the ground-truth $\vy$ can not be accessed and the goal is to transform an image to an embedding via DNNs, \textit{i.e.}, $f_\theta:\mathbb{R}^m \rightarrow \mathbb{R}^d$. In the linear probing evaluation, we construct a supervised learning task with balanced datasets. A linear classifier $g(\cdot)$ is built on top of the frozen $f_\theta(\cdot)$ to produce the prediction, \textit{i.e.,} $g(f_\theta(\vx))$.

\subsection{Geometric Harmonization}\label{sec:method}
As aforementioned, most existing SSL methods in long-tailed context leverage the contrastive learning loss, which encourages the sample-level uniformity in the embedding space. 
Considering the intrinsic limitation illustrated in Figure~\ref{fig:toy}, we incorporate the geometric clues from the embedding space to calibrate the current loss, enabling our pursuit of category-level uniformity. In the following, we first introduce a specific geometric structure that is critical to Geometric Harmonization.

\begin{definition} \label{def:structure} (Geometric Uniform Structure). Given a set of vertices $\mathbf{M} \in \mathbb{R}^{d \times K}$, the geometric uniform structure satisfies the following between-class duality
\begin{equation}
     \mathbf{M}_i^\top \cdot \mathbf{M}_j =C, \ \ \forall i,j \in \{1,2,\ldots,K\},\ i\neq j,
\end{equation}
\revise{where $\left\lVert \mathbf{M}_i\right\rVert =1, \forall i \in \{1,2,\ldots,K\}$, $K$ is the number of geometric vertices and $C$ is a constant.}
\end{definition}

\begin{figure*}[!t]
	\centering

        \includegraphics[width=0.96\textwidth]{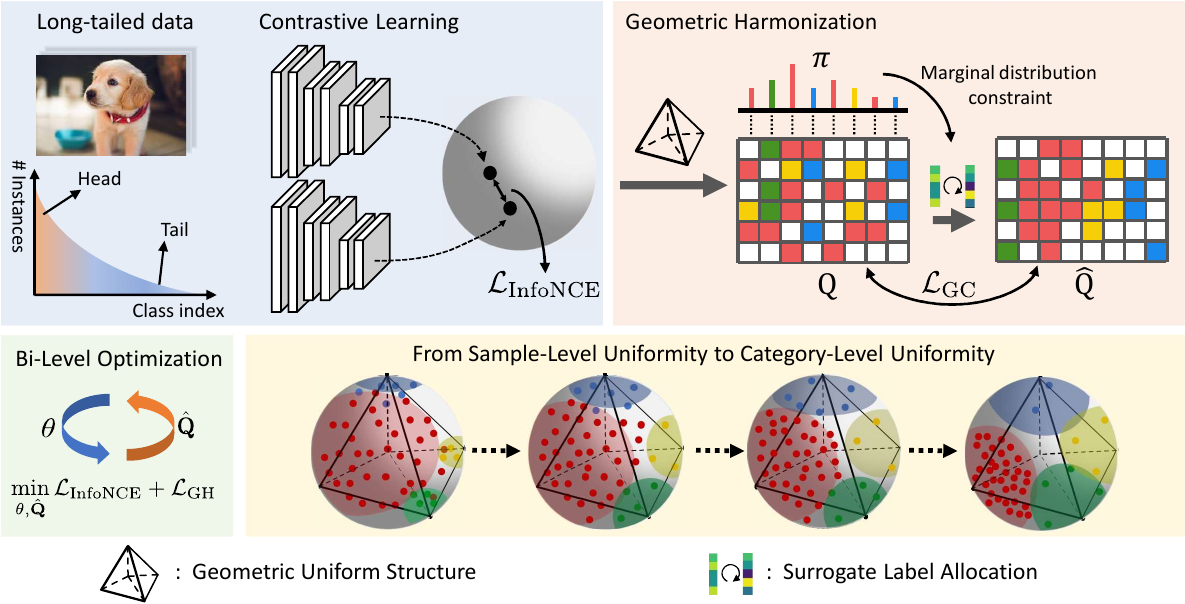}

    \caption{\textbf{Overview of Geometric Harmonization}~(\textit{w/} InfoNCE). \revise{To achieve harmonization with the category-level uniformity, we require ground-truth labels for supervision during training. However, these labels are not available under the self-supervised paradigm. Moreover, estimating surrogate labels directly from the geometric uniform structure is challenging and noisy, especially when the representation is not ideally distributed. To fullfill this gap,} we utilize the geometric uniform structure to measure the embedding space, and the captured population statistics are used for an instance-wise calibration by surrogate label allocation, which provides a supervision feedback to counteract the sample-level uniformity. Specially, our method are theoretically grounded to approach category-level uniformity at the loss minimum. The additional model parameters incurred by \methodspace are analytically and empirically demonstrated to be trained in an efficient manner.}
    \vspace{-3mm}
    \label{fig:method}
\end{figure*}

Above structure provides a characteristic that any two vectors in $\mathbf{M}$ have the same angle, namely, the unit space are equally partitioned by the vectors in $\mathbf{M}$. This fortunately follows our expectation about category-level uniformity. Specially, if we use $\mathbf{M}$ as a constant classifier to involve into training, and have the oracle label $\vy$ of the long-tailed data ($K=L$) to supervise the below prediction
\begin{equation}
    \vq = p(\vy|f_\theta(\vx),\mathbf{M}) = \exp\left(\mathbf{M}_{y}^\top \cdot f_\theta(\vx)/\gamma_{\mathrm{\method}} \right) \big/ \left(\sum_{i=1}^{K}\exp\left( \mathbf{M}_i^\top \cdot f_\theta(\vx)/\gamma_{\mathrm{\method}}\right) \right), \label{eq:gus}
\end{equation}
then according to the neural collapse theory~\citep{papyan2020prevalence}, the representation of all samples will fit the geometric uniform structure of $\mathbf{M}$ in the limit, namely, approach category-level uniformity.
However, the technical challenge {is the complete absence of annotation $\vy$ in our context, making directly constructing the objective $\min \E[\vy\log \vq]$ with Eq.~(\ref{eq:gus}) impossible to combat the representation disparity.

\label{sec:GC}
\textbf{Surrogate Label Allocation.} To address the problem of unavailable labels, we explore constructing the surrogate geometric labels $\hat{\vq}$ to supervise the training of Eq.~(\ref{eq:gus}). Concretely, we utilize the recent discriminative clustering idea~\citep{asano2020self} to acquire such geometric labels, formulated as follows 
\begin{equation} \label{eq:obj}
    \min_{\hat{\mathbf{Q}}=[\hat{\vq}_1,\dots,\hat{\vq}_N]} \mathcal{L}_{\mathrm{\method}} =  -\frac{1}{|\mathcal{D}|} \sum_{\vx_i \sim \mathcal{D}}  \hat{\vq_i} \log \vq_i,
     \ \ \ \mathrm{s.t.} \ \ \hat{\mathbf{Q}} \cdot \mathbbm{1}_N = N \cdot \boldsymbol{\pi}, \ \hat{\mathbf{Q}}^\top \cdot \mathbbm{1}_K =  \mathbbm{1}_N,
\end{equation}
where $\hat{\vq}_i$ refers to the soft assignment constrained in a $K$-dimensional probability simplex and $\boldsymbol{\pi} \in \mathbb{R}^{K}_{+}$ refers to \revise{the distribution constraint}. As we are dealing with long-tailed data, we propose to use the geometric uniform structure $\boldsymbol{\mathbf{M}}$ to automatically compute the population statistic of the embedding space as $\boldsymbol{\pi}$. Finally, \eqref{eq:obj} can be analytically solved by \textit{Sinkhorn-Knopp algorithm}~\citep{cuturi2013sinkhorn} (refer to \cref{appendix:algorithm} for \cref{algorithm:allocation}).
Note that, the above idea builds upon a conjecture: the constructed surrogate geometric labels are mutually correlated with the oracle labels so that they have the similar implication to approach category-level uniformity. We will empirically verify the rationality of such an assumption via \textit{normalized mutual information} in \cref{exp:ablation}.

\textbf{Overall Objective.} \eqref{eq:obj} can be easily integrated into previous self-supervised long-tailed learning methods for geometric harmonization, \textit{e.g.,} SDCLR~\citep{jiang2021self} and BCL~\citep{zhou2022contrastive}. For simplicity, we take their conventional InfoNCE loss~\citep{oord2018representation} as an example and write the overall objective as follows
\begin{equation}\label{eq:overall}
    \min_{\theta,\hat{\mathbf{Q}}} \mathcal{L} = \mathcal{L}_{\mathrm{InfoNCE}} + w_{\mathrm{\method}} \mathcal{L}_{\mathrm{\method}},
\end{equation}
\revise{where $w_{\mathrm{\method}}$ represents the weight of the geometric harmonization loss.} Optimizing \eqref{eq:overall} refers to a bi-level optimization style: in the outer-loop, optimize $\min_{\hat{\mathbf{Q}}} \mathcal{L}_{\mathrm{\method}}$ with fixing $\theta$ to compute the surrogate geometric labels; in the inner-loop, optimize $\min_{\theta} \mathcal{L}_{\mathrm{InfoNCE}} + \mathcal{L}_{\mathrm{\method}}$ with fixing $\hat{\mathbf{Q}}$ to learn the representation model. \revise{The additional cost compared to} the vanilla method \revise{primarily arises from} from the outer-loop, which will be discussed in \cref{sec:method-analysis} and verified in \cref{exp:ablation}.

\begin{wraptable}{r}{0.58\linewidth}
\vspace{-3pt}
\centering
\caption{Linear probing of vanilla discriminative clustering methods and variants of \methodspace on CIFAR-100-LT.}\label{tab:clustering}
\resizebox{0.58\textwidth}{!}{
\begin{tabular}{c|c|cc|ccc}
\toprule[1.5pt]
IR       & SimCLR & +SeLA & +SwAV & +\method & \textit{w/o} GUS & \textit{w/o} $\boldsymbol{\pi}$     \\ \midrule[0.6pt]\midrule[0.6pt]
100 & 50.7 & 50.5 & 52.2 & 54.0 &53.3 &53.1  \\
50  & 52.2 & 52.0 & 53.0 & 55.4 &54.6 &54.4  \\
10  & 55.7  & 56.0 & 56.1 & 57.4 &56.7 &56.4  \\ \bottomrule[1.5pt]   
\end{tabular}}
\end{wraptable}
Compared with previous explorations~\citep{asano2020self,caron2020unsupervised}, the uniqueness of \methodspace lies in the following three aspects: (1) \emph{Geometric Uniform Structure}. The pioneering works mainly resort to a learnable classifier to perform clustering, which can easily be distorted in the long-tailed scenarios~\citep{fang2021exploring}. Built on the geometric uniform classifier, our method is capable to provide high-quality clustering results with clear geometric interpretations. (2) \emph{Flexible Class Prior}. The class prior $\boldsymbol{\pi}$ is assumed to be uniform among the previous attempts. When moving to the long-tailed case, this assumption will strengthen the undesired sample-level uniformity. In contrast, our methods can potentially cope with any distribution with the automatic surrogate label allocation. (3) \emph{Theoretical Guarantee.} \methodspace is theoretically grounded to achieve the category-level uniformity in the long-tailed scenarios (refer to \cref{sec:theory}), which has never been studied in previous methods. To gain more insights into our method, we further compare GH with discriminative clustering methods~(SeLA~\citep{asano2020self}, SwAV~\citep{caron2020unsupervised}) and investigate the impact of various components in \method. From the results in \cref{tab:clustering}, we can see that \methodspace consistently outperforms the vanilla discriminative clustering baselines in the linear probing evaluation. Notably, we observe that both GUS and the class prior $\boldsymbol{\pi}$ play a critical role in our \method.

\subsection{Theoretical Understanding}\label{sec:theory}

Here, we reveal the theoretical analysis of \methodspace on promoting the representation learning to achieve category-level uniformity instead of sample-level uniformity. Let us begin with a deteriorated case of sample-level uniformity under the extreme imbalance, \textit{i.e.}, minority collapse~\citep{fang2021exploring}.

\begin{lemma}\label{lemma:mc}
    (Minority collapse) Assume the samples follow the uniform distribution \revise{$n_1=n_2=\dots=n_{L_H}=n_H$, $n_{L_H+1}=n_{L_H+2}=\dots=n_{L}=n_T$} in head and tail classes respectively. Assume $d \geq L$ and $n_H/n_T \rightarrow +\infty$, the lower bound~(\cref{lemma:lower}) of $\mathcal{L}_{\mathrm{InfoNCE}}$ achieves the minimum when the class means of the tail classes collapse to an identical vector:
    \begin{equation}
        \lim \boldsymbol{\mu}_i - \boldsymbol{\mu}_{j} = \mathbf{0}_L, \ \forall L_H \leq i \leq j \leq L, \nonumber
    \end{equation}
    where $\boldsymbol{\mu}_l = \frac{1}{n_l}\sum_{i=1}^{n_l} f_\theta(\vx_{l,i})$ denotes the class means and $\vx_{l,i}$ is the $i$-th data point with label $l$.
\end{lemma}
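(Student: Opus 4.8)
The plan is to argue entirely at the level of the lower bound $\underline{\mathcal{L}}$ from \cref{lemma:lower}, which bounds $\mathcal{L}_{\mathrm{InfoNCE}}$ from below by a quantity depending on the data only through the class-mean matrix $\mathbf{M}_\star=[\boldsymbol{\mu}_1,\dots,\boldsymbol{\mu}_L]\in\mathbb{R}^{d\times L}$ and the counts $(n_l)$ — intuitively it comes from Jensen's inequality, pushing the per-sample features inside the convex $\log\!\sum\!\exp$ structure so that each $f_\theta(\vx_{l,i})$ is replaced by $\boldsymbol{\mu}_l$. First I would note that $\underline{\mathcal{L}}$ sees $\mathbf{M}_\star$ only through the Gram matrix $\mathbf{G}=\mathbf{M}_\star^\top\mathbf{M}_\star\in\mathbb{R}^{L\times L}$, that $\mathbf{G}\mapsto\underline{\mathcal{L}}(\mathbf{G})$ is convex (a sum of $\log\!\sum\!\exp$ terms applied to affine functions of $\mathbf{G}$, plus linear self-alignment terms), and that the admissible set $\{\mathbf{G}\succeq0,\ \operatorname{rank}\mathbf{G}\le d,\ G_{ll}\le r^{2}\}$ (with $r$ the feature-norm budget) becomes convex once the rank constraint is dropped, which is legitimate because $d\ge L$. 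So minimizing the lower bound is a convex program in $\mathbf{G}$.

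\textbf{Symmetry and block structure.} Under $n_1=\dots=n_{L_H}=n_H$ and $n_{L_H+1}=\dots=n_L=n_T$, the objective $\underline{\mathcal{L}}(\mathbf{G})$ is invariant under the group $S_{L_H}\times S_{L_T}$ acting by simultaneous row/column permutations within the head indices and within the tail indices. Since $\underline{\mathcal{L}}$ is convex and the feasible set is invariant, averaging any minimizer over the group is again a minimizer (Jensen), so some optimal $\mathbf{G}^\star$ has the block form with a common head self-correlation $a$, head--head cross-correlation $b$, tail self-correlation $c$, tail--tail cross-correlation $e$, and head--tail cross-correlation $h$; the norm constraint is active at the optimum, forcing $a=c$. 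This collapses $\underline{\mathcal{L}}$ to a function of a few scalars, and the target conclusion $\lim(\boldsymbol{\mu}_i-\boldsymbol{\mu}_j)=\mathbf{0}_L$ for tail $i,j$ is exactly the claim that the minimizer satisfies $\|\boldsymbol{\mu}_i-\boldsymbol{\mu}_j\|^{2}=2(c-e)\to0$, i.e.\ $e\to c$, as $n_H/n_T\to\infty$.

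\textbf{Asymptotics as $n_H/n_T\to\infty$.} Split $\underline{\mathcal{L}}=\underline{\mathcal{L}}_H+\underline{\mathcal{L}}_T$ into the contributions of head-class and tail-class samples. In $\underline{\mathcal{L}}_H$ the head samples carry total weight of order $L_H n_H$ and each tail mean enters only inside $\log\!\sum\!\exp$ denominators with weight of order $n_T$; in $\underline{\mathcal{L}}_T$ the tail samples carry total weight of order $L_T n_T$, and two tail means interact only through a term of order $n_T^{2}$. Hence, after normalizing, the term that could drive $e$ away from $c$ is $O(n_T^{2}/n_H)\to0$ relative to the term of order $\Theta(n_T)$ that couples each individual tail mean to the head block; rescaling once more, the leading-order tail problem \emph{decouples}: each tail $\boldsymbol{\mu}_k$ independently minimizes the \emph{same} strictly convex functional $\boldsymbol{\mu}_k\mapsto\sum_{j\le L_H}\psi\!\left(\boldsymbol{\mu}_j^{\top}\boldsymbol{\mu}_k\right)$ (plus its own bounded self-term) determined by the limiting head configuration, which is itself the symmetric simplex-ETF-type solution of the head-only subproblem furnished by the neural-collapse analyses cited in \cref{sec:related-work}. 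As this functional has a unique minimizer on the norm-constrained set, all tail means converge to that common vector, which is precisely minority collapse; the uniform bound $\|\boldsymbol{\mu}_k\|\le r$ supplies the compactness to pass to the limit along subsequences.

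\textbf{Main obstacle.} The delicate step is the interchange of $n_H/n_T\to\infty$ with $\argmin$. One must verify: (i) the rescaled minimizers remain in a fixed compact set, which is immediate from the norm budget; (ii) every subsequential limit solves the decoupled limiting problem — requiring uniform control that the $O(n_T^{2}/n_H)$ tail--tail repulsion and all remaining lower-order cross terms vanish and cannot conspire to block the collapse; and (iii) the limiting tail functional genuinely has a \emph{unique} minimizer, which rests on strict convexity of $\log\!\sum\!\exp$ on the affine slice cut out by the active norm constraint together with non-degeneracy of the limiting head configuration. The convex-analysis bookkeeping of the first two paragraphs and point (i) are routine; essentially all the work sits in the order-of-magnitude comparison $n_T^{2}\ll n_H n_T$ behind (ii) and the uniqueness in (iii).
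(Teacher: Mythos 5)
Your proposal and the paper's proof differ fundamentally in ambition: the paper's proof is a two-line citation, whereas you attempt a from-scratch re-derivation. Concretely, the paper observes that the lower bound in \cref{lemma:lower} is written in terms of the mean cross-entropy loss $\mathcal{L}^{\mu}_{\mathrm{CE}}$, and then invokes the minority-collapse theorem of \citet{fang2021exploring} for the (unconstrained-features) cross-entropy minimizer to conclude. Your outline — reduce to a convex program over the class-mean Gram matrix, exploit $S_{L_H}\times S_{L_T}$ symmetry to get a two-block ansatz, then send $n_H/n_T\to\infty$ and argue the tail block collapses — is essentially a sketch of the \emph{proof} of that theorem, not an alternative to it. That is not wrong, but it re-proves a blackboxed external result that the paper simply cites, and the hard steps you flag as the ``main obstacle'' (uniform control of the cross-terms, uniqueness of the limiting tail minimizer) are precisely what Fang et al.\ supply.

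There is also a genuine gap in your reduction step. You claim the lower bound from \cref{lemma:lower} ``depends on the data only through the class-mean matrix $\mathbf{M}_\star$.'' It does not. That bound reads
\[
  \mathcal{L}_{\mathrm{InfoNCE}}(f)\ \geq\ \mathcal{L}^{\mu}_{\mathrm{CE}}(f)\ -\ \sqrt{\var\!\left(f(\vx)\mid\vy\right)}\ -\ \mathcal{O}\!\left(J^{-1/2}\right)\ +\ \log(J/L),
\]
and both the mean-CE term $\mathcal{L}^{\mu}_{\mathrm{CE}}(f)=\E_{p(\vx,\vy)}\bigl[-\log\frac{\exp(f(\vx)^\top\boldsymbol{\mu}_\vy)}{\sum_i\exp(f(\vx)^\top\boldsymbol{\mu}_i)}\bigr]$ and the intra-class variance term depend on the full per-sample feature distribution, not just on $\mathbf{M}_\star^\top\mathbf{M}_\star$. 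To make your convex program in $\mathbf{G}$ legitimate you must first argue variability collapse (each $f_\theta(\vx_{l,i})$ equals $\boldsymbol{\mu}_l$ at the optimum), or work directly in the unconstrained-features model where the per-sample features are free variables and the optimum is attained at the class means — that is exactly the setting of Fang et al.\ and is implicit in the paper's cite. Without this step, the passage from the per-sample lower bound to a function of $\mathbf{G}$ is not justified, and the negative variance term in particular is pulling in the opposite direction from what you want (minimizing the lower bound rewards large within-class variance, all else equal). The symmetry-reduction and $n_T^2\ll n_H n_T$ order-of-magnitude argument that follow are the right shape, but as written they remain a plan for a proof rather than a proof; citing \citet{fang2021exploring} for the head/tail asymptotics is both shorter and airtight.
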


This phenomenon indicates all representations of the minority will collapse completely to one point without considering the category discrepancy, which \revise{aligns with} our observation regarding the passive collapse of tailed samples in Figure~\ref{fig:toy}. 
To further theoretically analyze \method, we first quantitatively define category-level uniformity in the following, and then theoretically claim that with the geometric uniform structure~(Definition.~\ref{def:structure}) and the perfect aligned allocation~(\eqref{eq:obj}), we can achieve the loss minimum at the stage of realizing category-level uniformity.

\begin{definition}\label{def:cu}
    (Categorical-level Uniformity)  We define categorical-level uniformity on the embedding space \textit{w.r.t} the geometric uniform structure $\mathbf{M}$ when it satisfies
    \begin{equation}
        \boldsymbol{\mu}_k^*=\mathbf{M}_k, \ \forall k=1,2,\dots,K, \nonumber
    \end{equation}
    where $\boldsymbol{\mu}_k^* = \frac{1}{n_k}\sum_{i=1}^{n_k} f_\theta^*(\vx_{k,i})$ represents the class mean for samples assigned with the surrogate geometric label $k$ in the embedding space.
\end{definition}

\begin{theorem}\label{theorem:opt}
(Optimal state for $\mathcal{L}$) Given Eq.~(\ref{eq:overall}) under the proper optimization strategy, when it arrives at the category-level uniformity~(\cref{def:cu}) defined on the geometric uniform structure $\mathbf{M}$~(\cref{def:structure}), we will achieve the minimum of the overall loss $\mathcal{L}^*$ as
\begin{equation}
    \mathcal{L}^* = -2\sum_{l=1}^{L}\boldsymbol{\pi}_{l}^{\vy}\log\left(1/\left(1+(K-1)\exp(C-1)\right)\right) + \log\left(J/L\right),
\end{equation}
where $J$ denotes the size of the collection of the negative samples and $\boldsymbol{\pi}^{\vy}$ refers to the marginal distribution of the latent ground-truth labels $\vy$.
\end{theorem}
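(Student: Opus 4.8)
The plan is to compute the two pieces of $\mathcal{L} = \mathcal{L}_{\mathrm{InfoNCE}} + w_{\mathrm{\method}}\mathcal{L}_{\mathrm{\method}}$ separately at the category-level uniform state, and to argue that this state is simultaneously optimal for both. First I would handle the $\mathcal{L}_{\mathrm{\method}}$ term. By \cref{def:cu}, at the optimum every sample $\vx_{k,i}$ assigned surrogate label $k$ has $f_\theta^*(\vx_{k,i}) = \mathbf{M}_k$, so plugging into \eqref{eq:gus} with the geometric uniform structure gives $\mathbf{M}_k^\top f_\theta^*(\vx_{k,i}) = 1$ and $\mathbf{M}_j^\top f_\theta^*(\vx_{k,i}) = C$ for $j\neq k$; hence $\vq_i$ is the constant vector with entry $1/(1+(K-1)\exp(C-1))$ on the correct coordinate (after absorbing $\gamma_{\mathrm{\method}}$, which I would track carefully or assume set to $1$). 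Because the perfectly aligned allocation makes $\hat{\vq}_i$ a one-hot (or at least supported) on coordinate $k$, the cross-entropy $-\hat{\vq}_i\log\vq_i$ collapses to $-\log(1/(1+(K-1)\exp(C-1)))$, and summing over $\mathcal{D}$ weighted by the class proportions $\boldsymbol{\pi}^{\vy}_l$ yields the first summand (the factor of $2$ presumably coming from the two-view contrastive setup, which I would make explicit).

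Next I would handle $\mathcal{L}_{\mathrm{InfoNCE}}$ at the same state. The idea is that under category-level uniformity the positive pair has inner product $1$ (both views map to $\mathbf{M}_k$), while each of the $J$ negatives sits at one of the $K$ vertices with inner products in $\{1, C\}$; averaging the log-sum-exp over the negative sampling distribution, and using that the negatives are drawn so that a fraction $1/L$ land on the same vertex, produces the residual $\log(J/L)$ term after the vertex-dependent contributions cancel against the positive term. This is essentially a bookkeeping computation once the geometry is fixed; I would use \cref{lemma:lower} (the stated lower bound on $\mathcal{L}_{\mathrm{InfoNCE}}$) to certify that this value is in fact the minimum and not merely an evaluation, and invoke \cref{lemma:mc} only as motivation for why the unconstrained InfoNCE minimizer differs.

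The final step is to argue joint optimality: I must show that the category-level uniform configuration minimizes $\mathcal{L}_{\mathrm{\method}}$ given the $\hat{\mathbf{Q}}$ produced by \eqref{eq:obj}, and simultaneously attains the InfoNCE lower bound, so that no trade-off between the two terms prevents reaching $\mathcal{L}^*$. This rests on the ``perfect aligned allocation'' hypothesis stated before the theorem — that the Sinkhorn solution with prior $\boldsymbol{\pi}$ computed from the population statistics is consistent with placing each cluster at a distinct vertex — together with a neural-collapse-style argument (as in \citet{papyan2020prevalence}) that the cross-entropy against a fixed simplex-equiangular classifier is minimized exactly at the vertices. I expect this consistency/compatibility step to be the main obstacle: the two losses are minimized over the shared variable $\theta$, and one has to verify that the minimizer of the regularizer does not pay extra InfoNCE cost, i.e. that the equiangular vertex configuration is genuinely feasible for both constraint sets at once. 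I would isolate this as the key lemma, reduce everything else to the two direct computations above, and flag that the $\gamma_{\mathrm{\method}}$, temperature, and view-count constants must be pinned down to get the stated closed form exactly.
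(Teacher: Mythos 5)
There is a genuine gap in your decomposition of the final value, and it would lead you to the wrong answer if you carried out your plan literally. You evaluate $\mathcal{L}_{\mathrm{\method}}$ at the vertices and get a \emph{single} copy of $-\sum_l \boldsymbol{\pi}^{\vy}_l\log\bigl(1/(1+(K-1)\exp(C-1))\bigr)$, then speculate that the factor of $2$ ``presumably'' comes from the two-view cross-supervision. Meanwhile you claim $\mathcal{L}_{\mathrm{InfoNCE}}$ collapses to the bare constant $\log(J/L)$ because ``the vertex-dependent contributions cancel against the positive term.'' Both of these are incorrect, and they are two faces of the same mistake. In the paper's proof, the InfoNCE term is rewritten via the lower bound (\cref{lemma:lower}) and shown to attain equality at the category-level uniform state: $\mathcal{L}_{\mathrm{InfoNCE}} = \mathcal{L}^{\boldsymbol{\mu}}_{\mathrm{CE}} + \log(J/L)$. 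The $\mathcal{L}^{\boldsymbol{\mu}}_{\mathrm{CE}}$ piece does \emph{not} vanish or cancel: evaluated at $\boldsymbol{\mu}_k=\mathbf{M}_k$ it is exactly $-\log\bigl(1/(1+(K-1)\exp(C-1))\bigr)$, i.e. the same closed form as $\mathcal{L}_{\mathrm{\method}}$. The factor of $2$ in the theorem is the sum of these two CE-type terms (one hidden inside the tight InfoNCE bound, one from GH), not a view-symmetrization artifact. If you do your ``direct computation'' of the log-sum-exp over negatives carefully you will see that only $\log J$ and the positive inner product $1$ cancel; the class-dependent softmax denominator survives as precisely the missing $\mathcal{L}^{\boldsymbol{\mu}}_{\mathrm{CE}}$ contribution.

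The other structural difference is worth noting. You correctly flag that the compatibility of the two objectives over the shared variable $\theta$ is the crux, and you gesture at a neural-collapse argument, but you treat it as something to prove separately from the two evaluations. The paper's route is tighter: once $\mathcal{L}_{\mathrm{InfoNCE}}$ has been replaced by $\mathcal{L}^{\boldsymbol{\mu}}_{\mathrm{CE}} + \log(J/L)$, the whole objective is a single constrained cross-entropy-like functional $\mathcal{L}^{\boldsymbol{\mu}}_{\mathrm{CE}} + \mathcal{L}_{\mathrm{\method}}$ of the class means, and the paper shows by a Lagrange-multiplier computation on the unit-norm constraint that the gradient with respect to each $\boldsymbol{\mu}_k$ vanishes at $\boldsymbol{\mu}_k=\mathbf{M}_k$. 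That one stationarity calculation both certifies optimality and sets up the final evaluation, so there is no residual ``trade-off'' concern to handle separately. Your plan would have you prove the compatibility step with machinery imported from \citet{papyan2020prevalence} rather than the two-line Lagrange argument that the lower-bound reformulation makes available.
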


This guarantees the desirable solution with the minimal intra-class covariance and the maximal inter-class covariance under the geometric uniform structure~\cite{papyan2020prevalence,fang2021exploring}, which benefits the downstream generalization. Notably, no matter the data distribution is balanced or not, our method can persistently maintain the theoretical merits on calibrating the class means to achieve category-level uniformity. We also empirically demonstrate the comparable performance with \methodspace on the balanced datasets in Section~\ref{exp:ablation}, as in this case category-level uniformity is equivalent to sample-level uniformity.

\subsection{Implementation and Complexity analysis}\label{sec:method-analysis}
In \cref{algorithm:method} of \cref{appendix:algorithm}, we give the complete implementation of our method. One point that needs to be clarified is that we learn the label allocation $\hat{\vq}$ in the mini-batch manner. 
In addition, the geometric prediction $\vq$ and the adjusted $\hat{\vq}$ are computed at the beginning of every epoch as the population-level statistic will not change much in a few mini-batches. Besides, we maintain a momentum update mechanism to track the prediction of each sample to stabilize the training, \textit{i.e.}, $\vq^{m} \leftarrow \beta \vq^{m} + (1-\beta)\vq$. When combined with the joint-embedding loss, we naturally adopt a cross-supervision mechanism $\min \E[\hat{\vq}^{+}\log \vq]$ for the reconciliation with contrastive baselines. The proposed method is illustrated in Figure~\ref{fig:method} for visualization.

For complexity, 
assume that the standard optimization of deep neural networks requires forward and backward step in each mini-batch update with the time complexity as $\mathcal{O}(B\Lambda)$, where $B$ is the mini-batch size and $\Lambda$ is the parameter size. At the parameter level, we add an geometric uniform structure with the complexity as $\mathcal{O}(BKd)$, where $K$ is the number of geometric labels and $d$ is the embedding dimension. For Sinkhorn-Knopp algorithm, it only refers to a simple matrix-vector multiplication as shown in \cref{algorithm:allocation}, whose complexity is $\mathcal{O}(E_s(B+K+BK))$ with the iteration step $E_s$. The complexity incurred in the momentum update is $\mathcal{O}(BK)$. Since $K,d$ and $E_s$ are significantly smaller than the model parameter $\Lambda$ of a million scale, the computational overhead involved in \methodspace is negligible compared to $\mathcal{O}(B\Lambda)$. The additional storage for a mini-batch of samples is the matrix $\mathbf{Q}^{m} \in \mathbb{R}^{K \times B}$, which is also negligible to the total memory usage. To the end, \methodspace incurs only a small computation or memory cost and thus can be plugged to previous methods in a low-cost manner. The empirical comparison about the computational cost is summarized in Table~\ref{tab:ccost}.

\section{Experiments}
\label{sec:Experiments}
\subsection{Experimental Setup} \label{sec:exp}

\textbf{Baselines.} \revise{We mainly choose five baseline methods, including (1) \textit{plain contrastive learning}: SimCLR~\citep{chen2020simple}, (2) \textit{hard example mining}: Focal~\citep{lin2017focal}, (3) \textit{asymmetric network pruning}: SDCLR~\citep{jiang2021self}, (4) \textit{multi-expert ensembling}: DnC~\citep{tian2021divide}, (5) \textit{memorization-guided augmentation}: BCL~\citep{zhou2022contrastive}. Empirical comparisons with more baseline methods can be referred to \cref{appendix:moresslbaseline}}.

\textbf{Implementation Details.} 
Following previous works~\citep{jiang2021self, zhou2022contrastive}, we use ResNet-18~\citep{he2016deep} as the backbone for small-scale dataset~(CIFAR-100-LT~\citep{cao2019learning}) and ResNet-50~\citep{he2016deep} for large-scale datasets~(ImageNet-LT~\citep{liu2019large}, Places-LT~\citep{liu2019large}). For experiments on CIFAR-100-LT, we train model with the SGD optimizer, batch size 512, momentum 0.9 and weight decay factor $5 \times 10^{-4}$ for 1000 epochs. For experiments on ImageNet-LT and Places-LT, we only train for 500 epochs with the batch size 256 and weight decay factor $1 \times 10^{-4}$. For learning rate schedule, we use the cosine annealing decay with the learning rate $0.5 \rightarrow 1e^{-6}$ for all the baseline methods. As \methodspace is combined with baselines, a proper warming-up of 500 epochs on CIFAR-100-LT and 400 epochs on ImageNet-LT and Places-LT are applied. The cosine decay is set as $0.5 \rightarrow 0.3$, $0.3 \rightarrow 1e^{-6}$ respectively. For hyper-parameters of \method, we provide a default setup across all the experiments: set the geometric dimension $K$ as 100, \revise{$w_{\mathrm{\method}}$ as 1} and the temperature $\gamma_{\mathrm{\method}}$ as 0.1. In the surrogate label allocation, we set the regularization coefficient $\lambda$ as 20 and Sinkhorn iterations $E_s$ as 300. Please refer to \cref{appendix:impdetail} for more experimental details.

\textbf{Evaluation Metrics.} Following~\citep{jiang2021self,zhou2022contrastive}, \emph{linear probing} on a balanced dataset is used for evaluation. We conduct full-shot evaluation on CIFAR-100-LT and few-shot evaluation on ImageNet-LT and Places-LT. For comprehensive performance comparison, we present the linear probing performance and the standard deviation among three disjoint groups, \textit{i.e.}, [many, medium, few] partitions~\citep{liu2019large}.

\subsection{Linear Probing Evaluation}

\definecolor{greyL}{RGB}{235,235,235}
\begin{table*}[!t]
\centering
\vspace{-3pt}
\caption{Linear probing results on CIFAR-100-LT with different imbalanced ratios~(100, 50, 10), ImageNet-LT and Places-LT. Many/Med/Few~($\uparrow$) indicate the average accuracy~($\%$) \textit{w.r.t} fine-grained partitions according to the class cardinality. Std~($\downarrow$) means the standard deviation of the group-level performance and Avg~($\uparrow$) is the average accuracy~($\%$) of the full test set. Following the previous work~\cite{jiang2021self,zhou2022contrastive}, Std represents a balancedness measure to quatify the variance among three specified groups. Improv.~($\uparrow$) represents the averaging performance improvements \textit{w.r.t.} different baseline methods. \textbf{We report the error bars with the multi-run experiments in \cref{tab:errorbar} in \cref{appendix:results}}.}\label{tab:main}

\resizebox{\textwidth}{!}{%
\begin{tabular}{c|c|cc|cc|cc|cc|cc|c}
\toprule[1.5pt]
\multicolumn{2}{c|}{Dataset}                & SimCLR & +\method  & Focal & +\method  & SDCLR & +\method  & DnC   & +\method & BCL   & +\method & \textbf{Improv.} \\\midrule[0.6pt]\midrule[0.6pt]
\multirow{5}{*}{\rotatebox[origin=c]{90}{CIFAR-R100}}          & Many & 54.97 & 57.38                          & 54.24 & 57.01                          & 57.32 & 57.44           & 55.41   & 57.56            & 59.15 & 59.50      & \up{1.56}               \\ 
                                    & Med  & 49.39  & 52.27                           & 49.58 & 52.93                          & 50.70 & 52.85        & 51.30    & 53.74                  & 54.82 & 55.73    & \up{2.35}                       \\ 
                                    & Few  & 47.67  & 52.12                           & 49.21 & 51.74                          & 50.45 & 54.06        & 50.76    & 53.26                  & 55.30 & 57.67    & \up{3.09}                       \\ 
                                    & Std  & 3.82   & 2.99                            & 2.80  & 2.76                            & 3.90  & 2.38        & 2.54    &  2.36                  & 2.37  & 1.89      & \dw{0.61}                      \\ 
                                    & \cellcolor{greyL}Avg  & \cellcolor{greyL}50.72  & \cellcolor{greyL}53.96                           & \cellcolor{greyL}51.04 & \cellcolor{greyL}53.92                           & \cellcolor{greyL}52.87 & \cellcolor{greyL}54.81          & \cellcolor{greyL}52.52 & \cellcolor{greyL}54.88                 & \cellcolor{greyL}56.45 & \cellcolor{greyL}57.65    & \cellcolor{greyL}\up{2.32}                      \\\midrule[0.6pt]
\multirow{5}{*}{\rotatebox[origin=c]{90}{CIFAR-R50}}           & Many & 56.00  & 58.88                           & 55.40 & 57.97                           & 57.50 & 58.47            & 56.03    &  59.04             & 59.44 & 60.82    & \up{2.16}                       \\
                                    & Med  & 50.48  & 53.00                           & 51.14 & 53.55                           & 51.85 & 53.88       & 52.68    &  55.05                  & 54.73 & 57.58        & \up{2.44}                  \\
                                    & Few  & 50.12  & 54.27                           & 50.02 & 53.58                           & 52.15 & 53.58       & 50.83    &  54.81                  & 57.30 & 58.55        & \up{2.87}                  \\
                                    & Std  & 3.30   & 3.09                            & 2.84  & 2.54                            & 3.18  & 2.74        & 2.64    &   2.38                 & 2.36  & 1.66           & \dw{0.38}                 \\
                                    & \cellcolor{greyL}Avg  & \cellcolor{greyL}52.24  & \cellcolor{greyL}55.42                           & \cellcolor{greyL}52.22 & \cellcolor{greyL}55.06                         & \cellcolor{greyL}53.87 & \cellcolor{greyL}55.34            & \cellcolor{greyL}53.21 & \cellcolor{greyL}56.33              & \cellcolor{greyL}57.18 & \cellcolor{greyL}59.00     & \cellcolor{greyL}\up{2.49}                      \\\midrule[0.6pt]
\multirow{5}{*}{\rotatebox[origin=c]{90}{CIFAR-R10}}           & Many & 57.85  & 59.26                           & 58.18 & 60.06                          & 58.47 & 59.21             & 59.82    &  61.09            & 60.41 & 61.41     & \up{1.26}                     \\
                                    & Med  & 55.06  & 56.91                           & 55.82 & 56.79                           & 54.79 & 56.06       & 56.67    &  58.33                  & 57.15 & 59.27        & \up{1.57}                   \\
                                    & Few  & 54.03  & 55.85                           & 54.64 & 57.24                           & 52.97 & 55.58       & 56.21    &  57.33                  & 59.76 & 60.30        & \up{1.74}               \\
                                    & Std  & 1.98   & 1.75                            & 1.80  & 1.77                            & 2.80  & 1.97        & 1.96    &   1.95                  & 1.73  & 1.07          & \dw{0.35}                \\
                                    & \cellcolor{greyL}Avg  & \cellcolor{greyL}55.67  & \cellcolor{greyL}57.36                          & \cellcolor{greyL}56.23 & \cellcolor{greyL}58.05                         & \cellcolor{greyL}55.44 & \cellcolor{greyL}56.97             & \cellcolor{greyL}57.59 & \cellcolor{greyL}58.94              & \cellcolor{greyL}59.12 & \cellcolor{greyL}60.34     & \cellcolor{greyL}\up{1.52}                    \\\midrule[0.6pt]\midrule[0.6pt]
\multirow{5}{*}{\rotatebox[origin=c]{90}{ImageNet-LT}}        & Many & 41.69  & 41.53                          & 42.04 & 42.55                          & 40.87 & 41.92                & 41.70    &  42.19         & 42.92 & 43.22         & \up{0.44}                  \\
                            & Med  & 33.96  & 36.35                           & 35.02 & 36.75                           & 33.71 & 36.53               & 34.68    &  36.63          & 35.89 & 38.16               & \up{2.23}           \\
                            & Few  & 31.82  & 35.84                           & 33.32 & 36.28                           & 32.07 & 36.04               & 33.58    &  35.86          & 33.93 & 36.96               & \up{3.25}           \\
                            & Std  & 5.19   & 3.15                            & 4.62  & 3.49                            & 4.68  & 3.26                & 4.41    &   3.45       & 4.73  & 3.32                    & \dw{1.39}      \\
                            &\cellcolor{greyL}Avg  &\cellcolor{greyL}36.65  &\cellcolor{greyL}38.28                           &\cellcolor{greyL}37.49 & \cellcolor{greyL}38.92                          &\cellcolor{greyL}36.25 & \cellcolor{greyL}38.53                        & \cellcolor{greyL}37.23 & \cellcolor{greyL}38.67              &\cellcolor{greyL}38.33 &\cellcolor{greyL}39.95       & \cellcolor{greyL}\up{1.68}                    \\\midrule[0.6pt]\midrule[0.6pt]
\multirow{5}{*}{\rotatebox[origin=c]{90}{Places-LT}}   & Many & 31.98  & 32.46                          & 31.69 & 32.40                           & 32.17 & 32.78               & 32.07    &  32.51         & 32.69 & 33.22                & \up{0.55}           \\
                            & Med  & 34.05  & 35.03                           & 34.33 & 35.14                           & 34.71 & 35.60               & 34.51    &  35.55          & 35.37 & 36.00               & \up{0.87}            \\
                            & Few  & 35.63  & 36.14                           & 35.73 & 36.49                           & 35.69 & 36.18               & 35.84   &  35.91          & 37.18 & 37.62                & \up{0.45}           \\
                            & Std  & 1.83   & 1.89                           & 2.05  & 2.08                           & 1.82  & 1.82                  & 1.91    &  1.87           & 2.26  & 2.23                 & 0.00           \\
                            & \cellcolor{greyL}Avg  & \cellcolor{greyL}33.61  & \cellcolor{greyL}34.33                           & \cellcolor{greyL}33.65 & \cellcolor{greyL}34.42                           & \cellcolor{greyL}33.99 & \cellcolor{greyL}34.70                  & \cellcolor{greyL}33.90 & \cellcolor{greyL}34.52              & \cellcolor{greyL}34.76 & \cellcolor{greyL}35.32     & \cellcolor{greyL}\up{0.68}                 \\\bottomrule[1.5pt]    

\end{tabular}
}\vspace*{-10pt}
\end{table*}

\textbf{CIFAR-100-LT.} In \cref{tab:main}, we summarize the linear probing performance of baseline methods \textit{w/} and \textit{w/o} \methodspace on a range of benchmark datasets, and provide the analysis as follows.

(1) \emph{Overall Performance.} \methodspace achieves the competitive results \textit{w.r.t} the [many, medium, few] groups, yielding a overall performance improvements averaging as 2.32$\%$, 2.49$\%$ and 1.52$\%$ on CIFAR-100-LT with different imbalanced ratios. It is worth noting that on the basis of the previous state-of-the-art BCL, our \methodspace further achieves improvements by 1.20$\%$, 1.82$\%$ and 1.22$\%$, respectively. Our \methodspace consistently improves performance across datasets with varying degrees of class imbalance, demonstrating its potential to generalize to practical scenarios with more complex distributions. Specially, our method does not require any prior knowledge or assumptions about the underlying data distribution, highlighting the robustness and versatility to automatically adapt to the data.

(2) \emph{Representation Balancedness.} In \cref{sec:intro}, we claim that \methodspace helps compress the expansion of head classes and avoid the passive collapse of tail classes, yielding the more balanced representation distribution. To justify this aspect, we compare the variance in linear probing performance among many/medium/few groups, namely, their groupwise standard deviation. According to \cref{tab:main}, our \methodspace provides [1.56$\%$, 2.35$\%$, 3.09$\%$], [2.16$\%$, 2.44$\%$, 2.87$\%$] and [1.26$\%$, 1.57$\%$, 1.74$\%$] improvements \textit{w.r.t} [many, medium, few] groups on CIFAR-100-LT-R100/R50/R10 with more preference to the minority classes for representation balancedness. Overall, \methodspace substantially improves the standard deviation by [0.61,~0.38,~0.35] on different levels of imbalance.

\textbf{ImageNet-LT and Places-LT.} \cref{tab:main} shows the comparison of different baseline methods on large-scale dataset ImageNet-LT and Places-LT, in which we have consistent observations. As can be seen, on more challenging real-world dataset, \methodspace still outperforms other methods in terms of overall accuracy, averaging as 1.68$\%$,~0.68$\%$ on ImageNet-LT and Places-LT. Specifically, our method provides [0.44$\%$,~2.23$\%$,~3.25$\%$] and [0.55$\%$,~0.87$\%$,~0.45$\%$] improvements in linear probing \textit{w.r.t.} [many, medium, few] groups on ImageNet-LT and Places-LT. The consistent performance overhead indicates the robustness of our method to deal with long-tailed distribution with different characteristics. Moreover, the averaging improvement of standard deviation is 1.39 on ImageNet-LT, indicating the comprehensive merits of our \methodspace on the minority classes towards the representation balancedness. However, an interesting phenomenon is that the fine-grained performance exhibits a different trend on Places-LT. As can be seen, the performance of head classes is even worse than that of tail classes. The lower performance of the head partition can be attributed to the fact that it contains more challenging classes. 
As a result, we observe that the standard deviation of our \methodspace does not significantly decrease on Places-LT, which requires more effort and exploration for improvement alongside \method.

\begin{table*}[t]
\caption{Supervised long-tailed learning by finetuning on CIFAR-100-LT, ImageNet-LT and Places-LT.  We compare the performance of five self-supervised learning methods as the pre-training stage for downstream supervised logit adjustment~\citep{menon2021long} method. Improv.~($\uparrow$) represents the averaging performance improvements \textit{w.r.t.} different baseline methods. Besides, the performance of logit adjustment via learning from scratch is also reported for comparisons.}
\label{table:downstream}
\resizebox{1\textwidth}{!}{
\begin{tabular}{c|c|cccccccccc|c}
\toprule[1.5pt]
\multirow{2}{*}{Dataset} & \multirow{2}{*}{LA} & \multicolumn{10}{c|}{Logit adjustment pretrained with the following SSL methods} & \multirow{2}{*}{\textbf{Improv.}} \\ \cmidrule[0.6pt]{3-12}
&    & SimCLR  & \multicolumn{1}{c|}{+\method} & Focal & \multicolumn{1}{c|}{+\method} & SDCLR & \multicolumn{1}{c|}{+\method} & DnC & \multicolumn{1}{c|}{+\method} & BCL   & +\method &   \\ \midrule[0.6pt]\midrule[0.6pt]
CIFAR-LT             & 46.61               & 49.81  & \multicolumn{1}{c|}{50.84}    & 49.83 & \multicolumn{1}{c|}{51.04}    & 49.79 & \multicolumn{1}{c|}{50.73} & 49.97 & \multicolumn{1}{c|}{50.84}   & 50.38 & 51.32   & \up{1.00}   \\
ImageNet-LT          & 48.27               & 51.10                   &  \multicolumn{1}{c|}{51.67}    & 51.15 & \multicolumn{1}{c|}{51.82}    & 50.94 & \multicolumn{1}{c|}{51.64} & 51.31 &  \multicolumn{1}{c|}{51.88}  & 51.43 & 52.06    & \up{0.63}   \\ 
Places-LT            &  27.07                   & 32.63                   &  \multicolumn{1}{c|}{33.86}     & 32.69      &   \multicolumn{1}{c|}{33.75}     &  32.55     &    \multicolumn{1}{c|}{34.03}         & 32.98 &  \multicolumn{1}{c|}{34.09}  &  33.15     & 34.48  & \up{1.24}        \\ \bottomrule[1.5pt]
\end{tabular}}

\end{table*}

\subsection{Downstream Finetuning Evaluation}
\revise{\textbf{Downstream supervised long-tailed learning.}} Self-supervised learning has been proved to be beneficial as a pre-training stage of supervised long-tailed recognition to exclude the explicit bias from the class imbalance~\citep{yang2020rethinking,liu2021self,zhou2022contrastive}. 

\revise{To validate the effectiveness} of our \method, we conduct self-supervised pre-training as the initialization for downstream supervised classification tasks on CIFAR-100-LT-R100, ImageNet-LT and Places-LT. The state-of-the-art logit adjustment~\citep{menon2021long} is chosen as the downstream baseline. The combination of \methodspace + LA can be interpreted as a compounded method where \methodspace aims at the re-balanced representation extraction and LA targets the classifier debiasing. In \cref{table:downstream}, we can find that the superior performance improvements are achieved by self-supervised pre-training over the plain supervised learning baseline. \revise{Besides,} our method can also consistently outperform other SSL baselines, averaging as 1.00$\%$, 0.63$\%$ and 1.24$\%$ on CIFAR-100-LT-R100, ImageNet-LT and Places-LT. These results demonstrate that \methodspace are well designed to facilitate long-tailed representation learning and improve the generalization for downstream supervised tasks.

\revise{\textbf{Cross-dataset transfer learning}. To further demonstrate the representation transferability of our \method, we conduct more comprehensive experiments on the large-scale, long-tailed dataset CC3M~\citep{sharma2018conceptual} with various cross-dataset transferring tasks, including downstream classification, object detection and instance segmentation. Specifically, we report the finetuning classification performance on ImageNet, Places and fine-grained visual datasets Caltech-UCSD Birds (CUB200)~\citep{wah2011caltech}, Aircrafts~\citep{maji2013fine}, Stanford Cars~\citep{krause20133d},  Stanford Dogs~\citep{khosla2011novel}, NABirds~\citep{van2015building}. Besides, we evaluate the quality of the learned representation by finetuning the model for object detection and instance segmentation on COCO2017 benchmark~\citep{lin2014microsoft}. As shown in \cref{table:downstreamclf,table:downstreamdetseg}, we can see that our proposed GH consistently outperforms the baseline across various tasks and datasets. It further demonstrates the importance of considering long-tailed data distribution under large-scale unlabeled data in the pretraining stage. This can potentially be attributed to that our geometric harmonization motivates a more balanced and general emebdding space, improving the generalization ability of the pretrained model to a range of real-world downstream tasks.}

\begin{table}[!t]
\centering
\revise{
\caption{Image classification on ImageNet, Places and fine-grained visual classification on various fine-grained datasets, pretrained on large-scale long-tailed CC3M and then finetuned.}\label{table:downstreamclf}
\setlength{\tabcolsep}{3.7pt}
\resizebox{1\textwidth}{!}{
\begin{tabular}{ @{} l cc c cccccc @{} }
\toprule
& \multicolumn{2}{c}{Image Classification} & & \multicolumn{6}{c}{Fine-Grained Visual Classification}\\
\cmidrule{2-3} \cmidrule{5-10}
&ImageNet	&Places	& &CUB200	&Aircraft	&StanfordCars	&StanfordDogs	&NABirds	&Average \\
\midrule
SimCLR &52.06 &37.65 & &44.61 &65.89 &57.63 &50.99 &46.86 &53.20 \\
\rowcolor[HTML]{EFEFEF} 
+GH &\textbf{53.39} &\textbf{38.47} & &\textbf{45.76} &\textbf{68.08} &\textbf{60.24} &\textbf{52.88} &\textbf{47.58} &\textbf{54.91} \\
\bottomrule
\end{tabular}}}
\end{table}
\begin{table}[!t]
\begin{minipage}[t]{0.58\textwidth}
\centering
\revise{
\caption{Object detection and instance segmentation with finetuned features on COCO2017 benchmark, pretrained on large-scale long-tailed CC3M.}\label{table:downstreamdetseg}
\vspace{.4em}
\renewcommand{\arraystretch}{1.2}
\resizebox{1\textwidth}{!}{
\begin{tabular}{ @{} l ccc c ccc @{} }
\toprule
& \multicolumn{3}{c}{Object Detection} & & \multicolumn{3}{c}{Instance Segmentation}\\
\cmidrule{2-4} \cmidrule{6-8}
&AP$^{bbox}$ & AP$^{bbox}_{50}$ & AP$^{bbox}_{75}$ & &AP$^{mask}$ & AP$^{mask}_{50}$ & AP$^{mask}_{75}$ \\
\midrule
SimCLR &31.7 &51.0 &33.9 & &30.2 &49.8 &32.1 \\
\rowcolor[HTML]{EFEFEF} 
+GH &\textbf{32.7} &\textbf{52.2} &\textbf{35.2} & &\textbf{31.1} &\textbf{50.8} &\textbf{33.0} \\
\bottomrule
\end{tabular}}}
\end{minipage}
\begin{minipage}[t]{0.42\textwidth}
\centering
\revise{
\caption{Inter-class uniformity~($\uparrow$) and neighborhood uniformity~($\uparrow$) of pretrained features on CIFAR-LT.}\label{table:uniformity}
\vspace{.4em}
\setlength{\tabcolsep}{3.7pt}
\resizebox{1\textwidth}{!}{
\begin{tabular}{ @{} l cc c cc @{} }
\toprule
& \multicolumn{2}{c}{\small Inter-class Uniformity} & & \multicolumn{2}{c}{\small Neighborhood Uniformity}\\
\cmidrule{2-3} \cmidrule{5-6}
&SimCLR & +GH & &SimCLR & +GH \\
\midrule
C100 &1.00 & \cellcolor{greyL}\textbf{2.80} & & 0.72 & \cellcolor{greyL}\textbf{2.00} \\
C50 &1.23 & \cellcolor{greyL}\textbf{2.73} & & 0.91 & \cellcolor{greyL}\textbf{1.94} \\
C10 &1.18 & \cellcolor{greyL}\textbf{2.60} & & 0.85 & \cellcolor{greyL}\textbf{1.83} \\
\bottomrule
\end{tabular}}}
\end{minipage}
\end{table}

\begin{figure*}[!t]
\centering
\subfigure[Geometric Dimension]{
\includegraphics[height=0.22\linewidth]{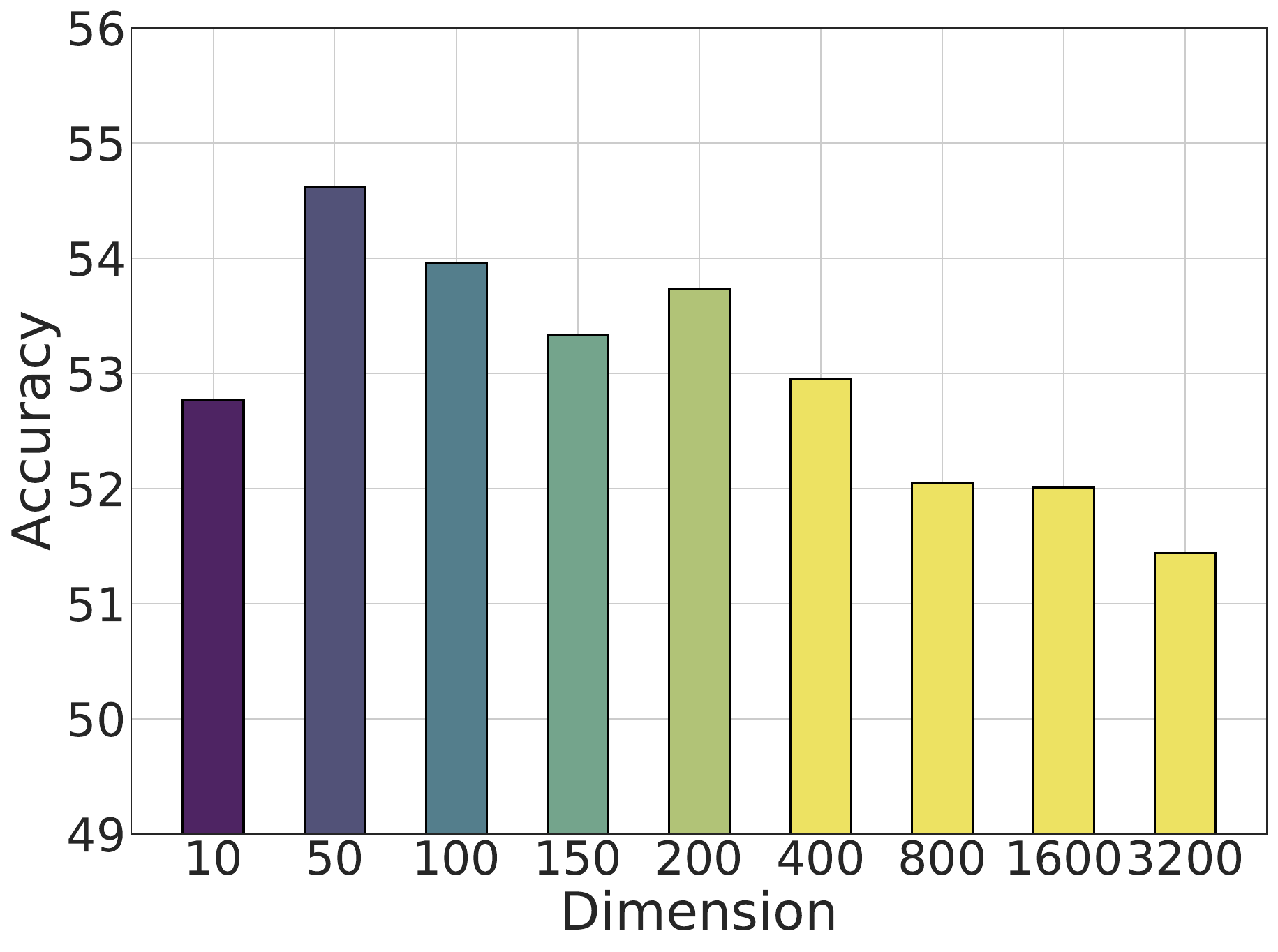}
\label{fig:ablation_dimension}
}
\subfigure[Label Quality]{
\includegraphics[height=0.22\textwidth]
{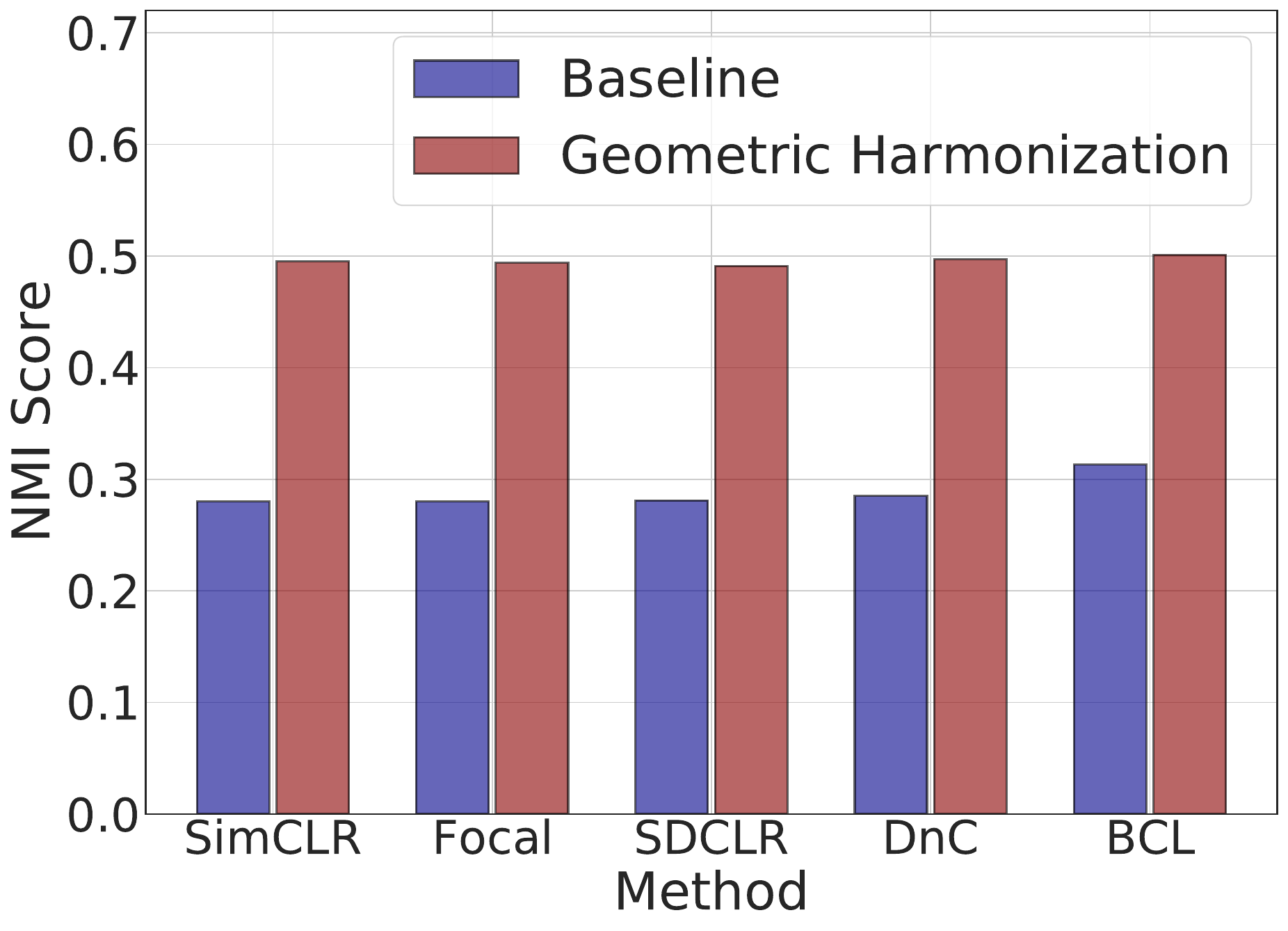}
\label{fig:ablation_nmi}}
\subfigure[Label-Distribution Prior]{
\includegraphics[height=0.22\textwidth]
 {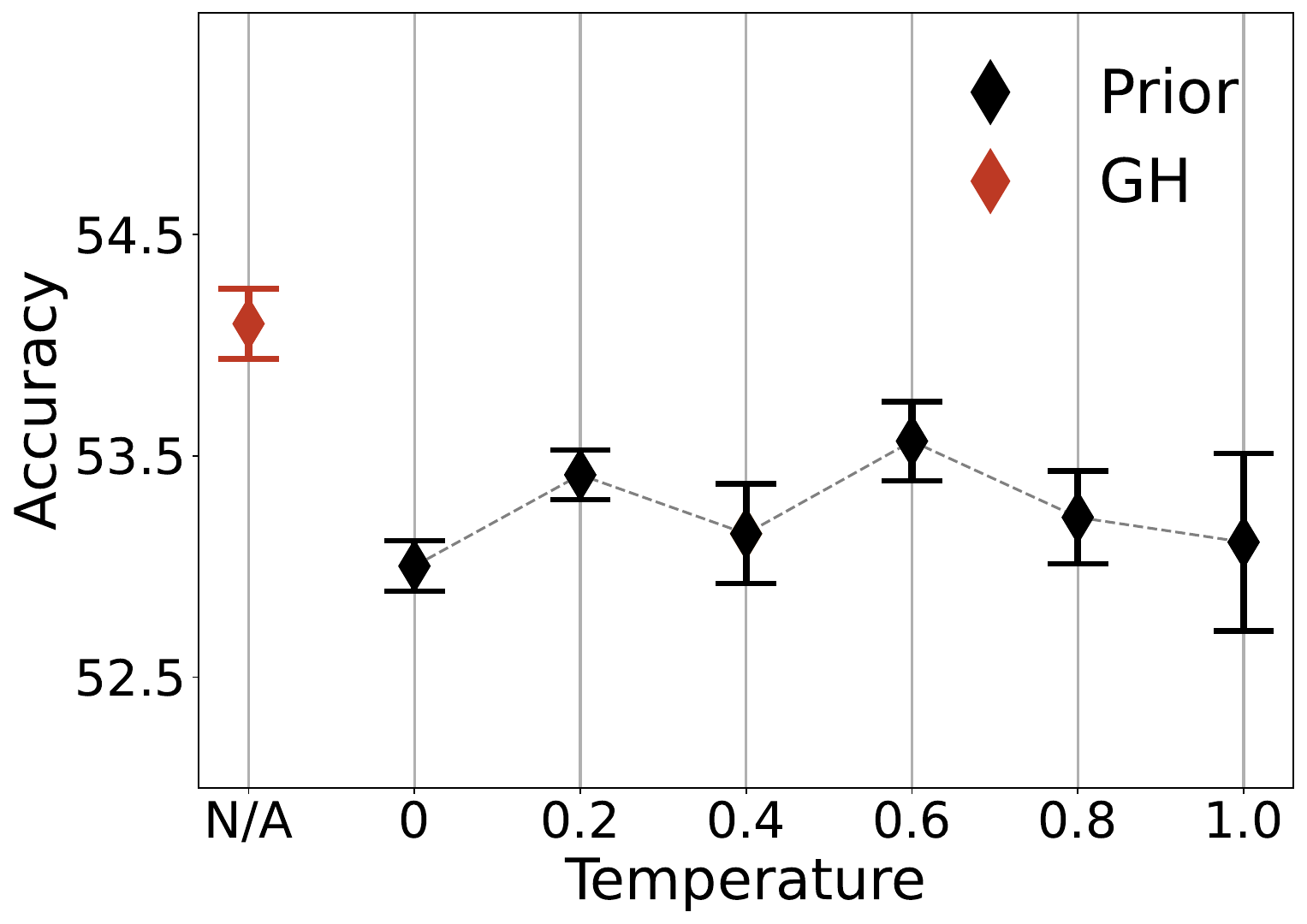}
\label{fig:ablation_fix_imb}
    }
\vspace*{-8pt}
\caption{(a) Linear probing performance \textit{w.r.t.} the dimension $K$ of the geometric uniform structure $\mathbf{M}$~(\cref{appendix:structure}) on CIFAR-LT-R100. (b) NMI score between the surrogate geometric labels and the ground-truth labels in the training stage on CIFAR-LT-R100. (c) Average linear probing  and the error bars of the surrogate label allocation with variants of the label prior on CIFAR-LT-R100.}
\vspace*{-10pt}
\end{figure*}


\begin{table*}[t]
\centering
\revise{
\caption{Linear probing of more SSL variants on CIFAR-100-LT with different imbalanced ratios.}\label{tab:moressl}}
\resizebox{\textwidth}{!}{%
\begin{tabular}{c|cc|cc|cc|cc|cc}
\toprule[1.5pt]
Dataset   & SimSiam &+\method & Barlow &+\method & BYOL &+\method & MoCo-v2 &+\method & MoCo-v3 &+\method \\
\midrule[0.6pt]\midrule[0.6pt]
CIFAR-R100                                     & 49.01 &51.43                   & 48.70 &51.23    & 51.43  & 52.87  & 51.49 &53.53   & 54.08 & 55.82          \\
CIFAR-R50                                    & 48.98 &53.54                    & 49.29 &51.95     & 52.04 & 53.84 & 52.68 &55.01    & 55.34 &  56.45             \\
CIFAR-R10                                     & 55.51 &57.03                    & 53.11 &56.34   & 55.86 & 57.28 & 58.23 & 60.11    & 59.10 & 60.57    \\\bottomrule[1.5pt]      
\end{tabular}}
\vspace*{-15pt}
\end{table*}

\subsection{Further Analysis and Ablation Studies} \label{exp:ablation}

\textbf{Dimension of Geometric Uniform Structure.} As there is even no category number $L$ available in SSL paradigm, we empirically compare our \methodspace with different geometric dimension $K$ on CIFAR-100-LT-R100, as shown in \cref{fig:ablation_dimension}. From the results, \methodspace is generally robust to the change of $K$, but slightly exhibits the performance degeneration when the dimension is extremely large or small. Intuitively, when $K$ is extremely large, our \methodspace might pay more attention to the uniformity among sub-classes, while the desired uniformity on classes is not well guaranteed. Conversely, when $K$ is extremely small, the calibration induced by \methodspace is too coarse that cannot sufficiently avoid the internal collapse within each super-class. For discussions of the structure, it can refer to \cref{appendix:structure}.

\textbf{Surrogate Label Quality Uncovered.} To justify \revise{the effectiveness of surrogate label allocation},  we compare the NMI scores~\citep{strehl2002cluster} between the surrogate and ground-truth label in \cref{fig:ablation_nmi}. We observe that \methodspace significantly improves the NMI scores across baselines, indicating that the geometric labels are effectively calibrated to better capture the latent semantic information. Notably, the improvements of the existing works are marginal, which further verifies the superiority of \method.

\textbf{Exploration with Other Label-Distribution Prior.} To further understand $\boldsymbol{\pi}$, we assume the ground-truth label distribution is available and incorporate the oracle $\boldsymbol{\pi}^{\vy}$ into the surrogate label allocation. Comparing the results of the softened variants $\boldsymbol{\pi}_{\gamma_{\mathrm{T}}}^{\vy}$ with the temperature ${\gamma_{\mathrm{T}}}$ in \cref{fig:ablation_fix_imb}, we observe that \methodspace outperforms all the counterparts equipped with the oracle prior. A possible reason is that our method automatically captures the inherent geometric statistics from the embedding space, which is more reconcilable to the self-supervised learning objectives.

\revise{\textbf{Uniformity Analysis.} In this part, we conduct experiments with two uniformity metrics~\citep{wang2020understanding, li2022targeted}: }
\begin{equation}
    \mathrm{U}=\frac{1}{L(L-1)}\sum_{i=1}^L\sum_{j=1, j\neq i}^L||\boldsymbol{\mu}_i-\boldsymbol{\mu}_j||_2, \ \ \  \mathrm{U}_k=\frac{1}{Lk}\sum_{i=1}^L\min_{j_1,\dots, j_k}(\sum_{m=1}^k||\boldsymbol{\mu}_i-\boldsymbol{\mu}_{j_m}||_2), \nonumber
\end{equation}
\revise{where $j_1, \dots, j_k \neq i$ represent different classes. Specifically, $\mathrm{U}$ evaluates average distances between different class centers and $\mathrm{U}_k$ measures how close one class is to its neighbors. As shown in \cref{table:uniformity}, our \methodspace outperforms in both inter-class uniformity and neighborhood uniformity when compared with the baseline SimCLR~\citep{chen2020simple}. This indicates that vanilla contrastive learning struggles to achieve the uniform partitioning of the embedding space, while our \methodspace effectively mitigates this issue.}

\textbf{Comparison with More SSL Methods.} In \cref{tab:moressl},~we present a more comprehensive comparison of different SSL baseline methods, \revise{including MoCo-v2~\cite{he2020momentum}, MoCo-v3~\cite{chen2021mocov3} and various non-contrastive methods such as SimSiam~\cite{chen2021exploring}, BYOL~\cite{grill2020bootstrap} and Barlow~Twins~\cite{zbontar2021barlow}.} From the results, we can see that the combinations of different SSL methods and our \methodspace can achieve consistent performance improvements, averaging as 2.33$\%$, 3.18$\%$ and 2.21$\%$ on CIFAR-100-LT. This demonstrates the prevalence of representation learning disparity under data imbalance in general SSL settings.

\begin{wraptable}{r}{0.46\linewidth}
\centering
\vspace*{-3pt}
\caption{Linear probing of joint optimization on CIFAR-100-LT with different IRs.} \label{tab:joint}
\vspace{2pt}
\resizebox{0.43\textwidth}{!}{
\begin{tabular}{c|ccc}
\toprule[1.5pt]
 Method & C100 & C50 & C10 \\
\midrule
SimCLR & 50.72 & 52.24 & 55.67 \\\midrule
+\methodspace (Joint) & 50.18 & 52.31 & 54.98 \\
w/ warm-up & 51.14 & 52.75 & 55.37 \\\midrule
+\methodspace (Bi-level) & 53.96 & 55.42 & 57.36 \\
\bottomrule[1.5pt]
\end{tabular}
}\vspace*{-10pt}
\end{wraptable}

\textbf{On Importance of Bi-Level Optimization.} 
In \cref{tab:joint}, we empirically compare the direct joint optimization strategy to \eqref{eq:overall}. From the results, we can see that the joint optimization (\textit{w/} or \textit{w/o} the warm-up strategy) does not bring significant performance improvement over SimCLR compared with that of our bi-level optimization, probably due to the undesired collapse in label allocation~\cite{asano2020self}. This demonstrates the necessity of the proposed bi-level optimization for~\eqref{eq:overall} to stabilize the training. 

\revise{\textbf{Qualitative Visualization.} We conduct t-SNE visualization of the learned features to provide further qualitative intuitions. For simplity, we randomly selected four head classes and four tail classes on CIFAR-LT to generate the t-SNE plots. Based on the results in \cref{fig:appendix_tsne}, the observations are as follows: (1) SimCLR: head classes exhibit a large presence in the embedding space and heavily squeeze the tail classes, (2) GH: head classes reduce their occupancy, allowing the tail classes to have more space. This further indicates that the constructed surrogate labels can serve as the high-quality supervision, effectively guiding the harmonization towards the geometric uniform structure.}

\revise{\textbf{Sensitivity Analysis.} To further validate the stability of our \method, We conduct empirical comparison with different weight $w_{\mathrm{\method}}$, temperature $\gamma_{\mathrm{\method}}$, regularization coefficient $\lambda$ and Sinkhorn iteration $E_s$ on CIFAR-LT, as shown in \cref{fig:weight,fig:appendix_ablation}. From the results, we can see that our \methodspace can consistently achieve satisfying performance with different hyper-parameter.}

\begin{figure}[!t]
    \begin{minipage}[H]{0.68\textwidth}
        \centering
        \subfigure{
        \begin{minipage}[H]{0.44\textwidth}
        \centering
        \includegraphics[width=1\linewidth]{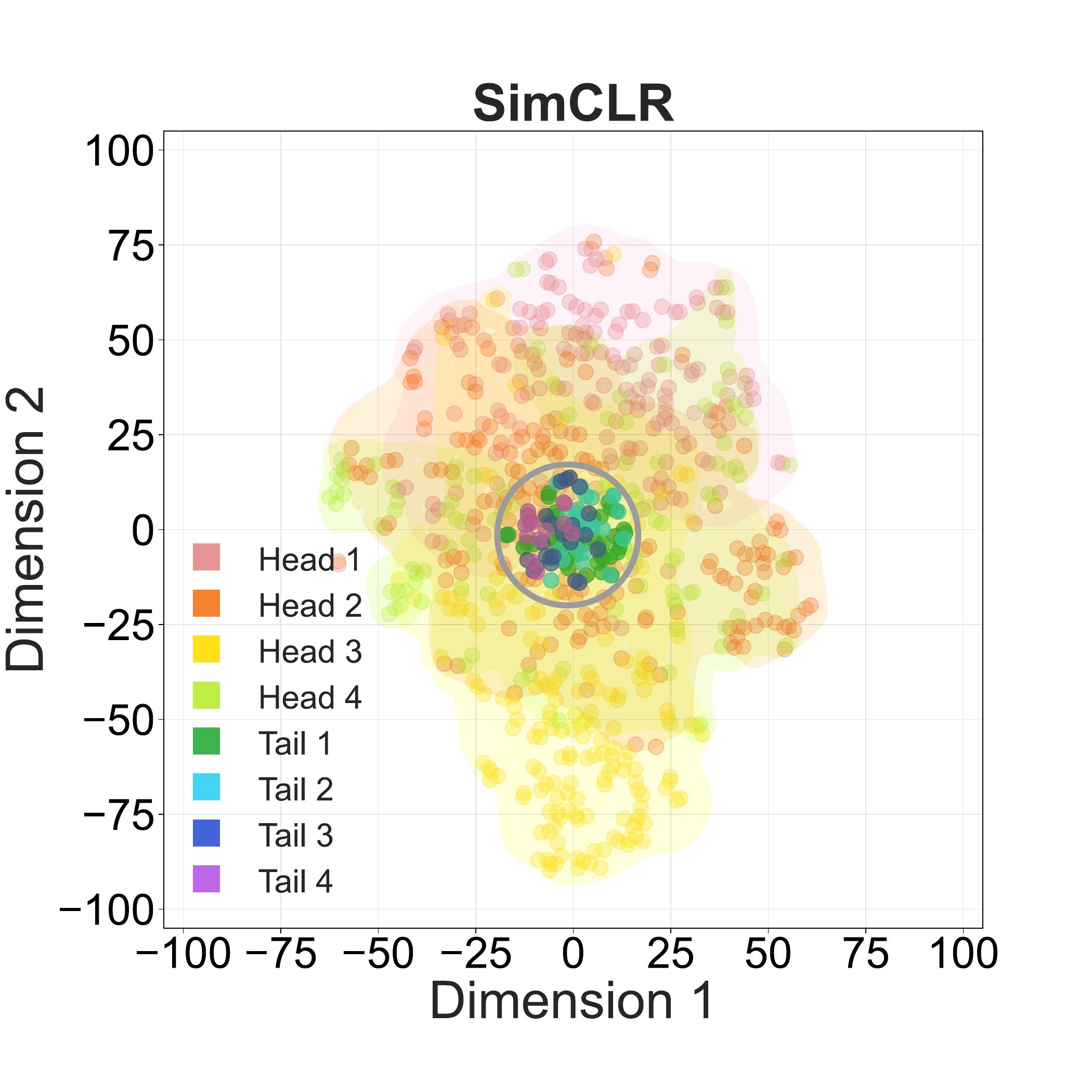}
        \end{minipage}
        }
        \subfigure{
        \begin{minipage}[H]{0.44\textwidth}
        \centering
        \includegraphics[width=1\linewidth]{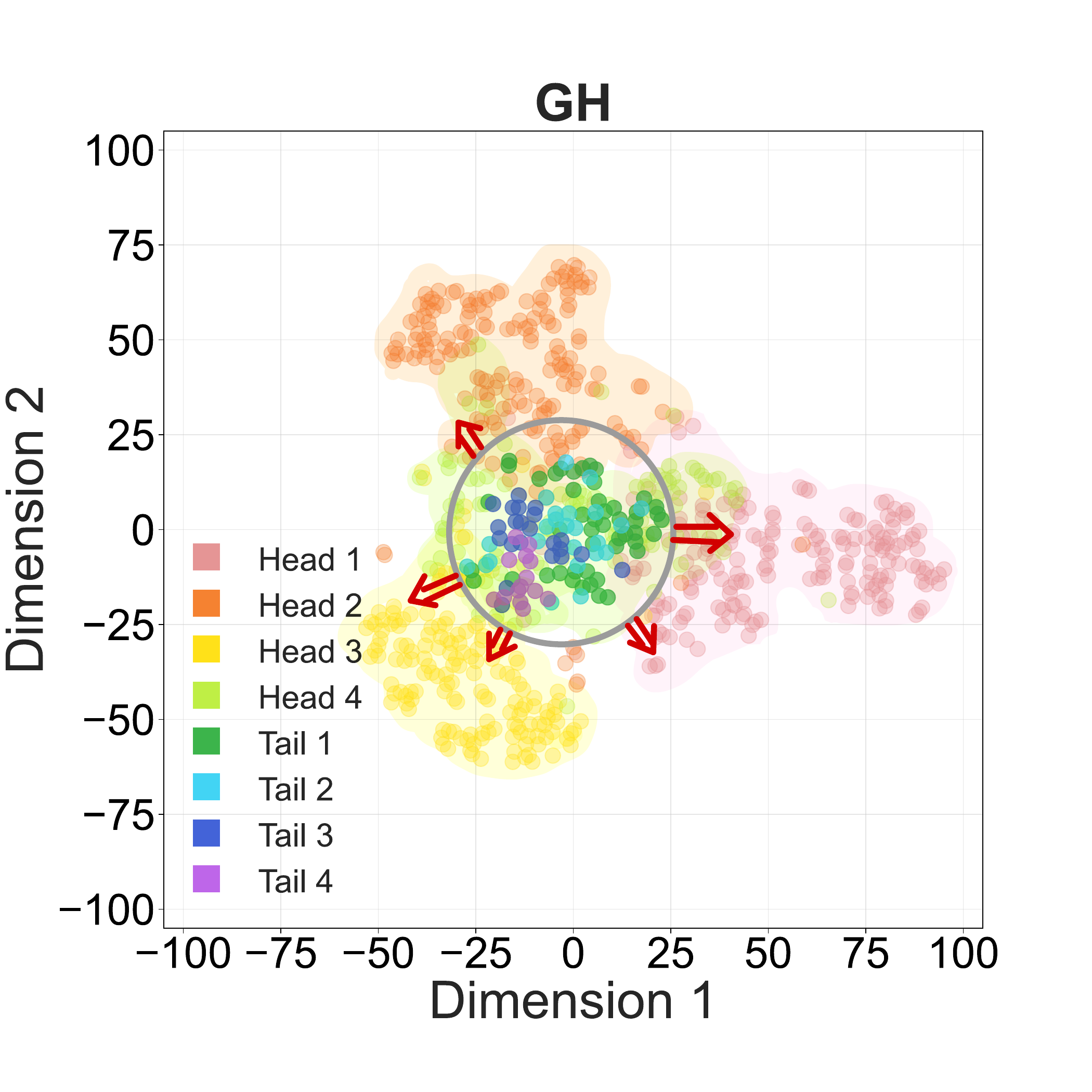}
        \end{minipage}
        }
        \vspace{-15pt}
        \caption{T-SNE visualization of the pretrained features from 8 randomly selected classes \textit{w.r.t} CIFAR-LT training images.}
        \label{fig:appendix_tsne}
    \end{minipage}
    \begin{minipage}[H]{0.25\textwidth}
        \vspace{15pt}
        \centering
        \includegraphics[width=1.15\linewidth]{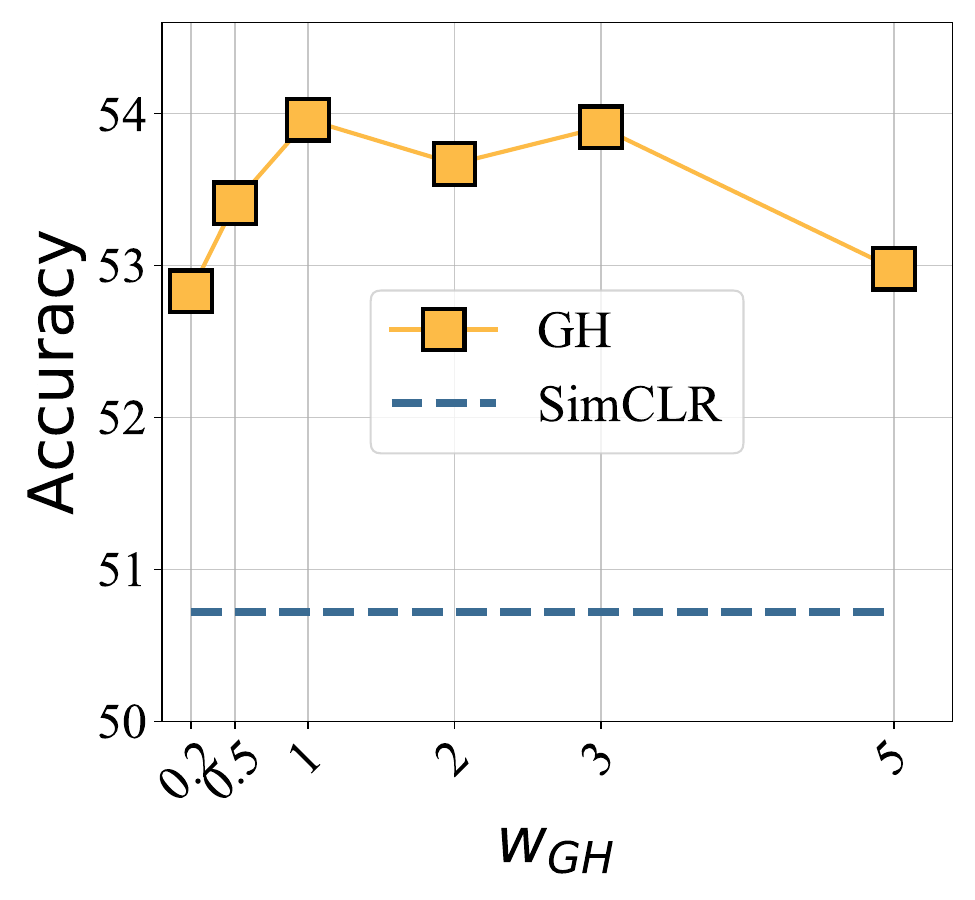}
        \vspace{-20pt}
        \caption{Ablations of $w_{GH}$ on CIFAR-LT.}\label{fig:weight}
    \end{minipage}
\end{figure}

\section{Conclusion}
In this paper, we delve into the defects of the conventional contrastive learning in self-supervised long-tail context, \textit{i.e.}, representation learning disparity, motivating our exploration on the inherent intuition for approaching the category-level uniformity. From the geometric perspective, we propose a novel and efficient Geometric Harmonization algorithm to counteract the long-tailed effect on the embedding space, \textit{i.e}, over expansion of the majority class with the passive collapse of the minority class. Specially, our proposed \methodspace leverages the geometric uniform structure as an optimal indicator and manipulate a fine-grained label allocation to rectify the distorted embedding space. We theoretically show that our proposed method can harmonize the desired geometric property in the limit of loss minimum. It is also worth noting that our method is orthogonal to existing self-supervised long-tailed methods and can be easily plugged into these methods in a lightweight manner. Extensive experiments demonstrate the consistent efficacy and robustness of our proposed \method. We believe that the geometric perspective has the great potential to evolve the general self-supervised learning paradigm, especially when coping with the class-imbalanced scenarios.

\section*{Acknowledgement}

This work was supported by the National Key R\&D Program of China (No. 2022ZD0160702),  STCSM (No. 22511106101, No. 22511105700, No. 21DZ1100100), 111 plan (No. BP0719010) and National Natural Science Foundation of China (No. 62306178). BH was supported by the NSFC Young Scientists Fund No. 62006202, NSFC General Program No. 62376235, and Guangdong Basic and Applied Basic Research Foundation No. 2022A1515011652.

\bibliography{main}
\bibliographystyle{plainnat}

\newpage
\appendix
\onecolumn

\vbox{%
    \hsize\textwidth
    \linewidth\hsize
    \vskip 0.1in
    \rule{\textwidth}{3.5pt}
  	\vskip 0.1in
    \centering
    {\LARGE\bf Appendix: Combating Representation Learning Disparity with Geometric Harmonization \par}
    \vskip 0.1in
	\rule{\textwidth}{1pt}
  }

\section*{Contents}
\startcontents[chapters]
\printcontents[chapters]{}{1}{\contentsmargin{1em}}

\section*{Reproducibility Statement}\addcontentsline{toc}{section}{Reproducibility Statement}

We provide our source codes to ensure the
reproducibility of our experimental results. Below we summarize several critical aspects \textit{w.r.t} the reproducible results:

\begin{itemize}
    \item \textbf{Datasets.} The datasets we used are all publicly accessible, which is introduced in \cref{appendix:dataset}. For long-tailed subsets, we strictly follows previous work~\cite{kang2019decoupling} on CIFAR-100-LT to avoid the bias attribute to the sampling randomness. On ImageNet-LT and Places-LT, we employ the widely-used data split first introduced in \cite{liu2019large}.
    \item \textbf{Source code.} Our code is available at \href{https://github.com/MediaBrain-SJTU/Geometric-Harmonization}{https://github.com/MediaBrain-SJTU/Geometric-Harmonization}.
    \item \textbf{Environment.} All the experiments are conducted on NVIDIA GeForce RTX 3090 with Python 3.7 and Pytorch 1.7.
\end{itemize}

\section{Additional Discussions of Related Works}\label{appendix:related-work}

\subsection{Supervised Long-tailed Learning}
 As the explorations on the classifier learning~\citep{kang2019decoupling,yao2023latent} are orthogonal to the self-supervised learning paradigms, we mainly focus on the representation learning in supervised long-tailed recognition. The pioneering work~\citep{kang2019decoupling} first explores representation and classifier learning with a disentangling mechanisms and shows the merits of instance-balanced sampling strategy on the representation learning stage. Subsequently, \citet{yang2020rethinking} points out the negative impact of label information and proposes to improve the representation learning with semi-supervised learning and self-supervised learning. This motivates a stream of research works diving into the representation learning. Supervised contrastive learning~\citep{kang2020exploring, cui2021parametric} is leveraged with rebalanced sampling or prototypical learning design to pursue a more balanced representation space. \citet{li2022targeted} explicitly regularizes the class centers to a maximum separation structure with similar drives to the balanced feature space.

 \subsection{Contrastive Learning is Still Vulnerable to Long-tailed Distribution}

 The prior works~\citep{kang2020exploring,liu2021self} point out that contrastive learning can extract more balanced features compared with the supervised learning paradigm. However, several subsequent works~\citep{jiang2021self, zhou2022contrastive} empirically observes that contrastive learning is still vulnerable to the long-tailed distribution, which motivates their model-pruning strategy~\citep{jiang2021self} and memorization-oriented augmentation~\citep{zhou2022contrastive} to rebalance the representation learning. In this paper, we delve into the intrinsic limitation of the contrastive learning method in the long-tailed context, \textit{i.e}, approaching sample-level uniformity to deteriorate the embedding space.

\subsection{Unsupervised Clustering}

Deep Cluster~\citep{caron2018deep} applies K-Means clustering to generate pseudo-labels for the unlabeled data, which are then iteratively leveraged as the supervised signal to train a classifier. SeLa~\citep{asano2020self} first casts the pseudo-label generation as an optimal transport problem and leverages a uniform prior to guide the clustering. SwAV~\citep{caron2020unsupervised} adopts mini-batch clustering instead of dataset-level clustering, enhancing the practical applicability of the optimal transport-based clustering method. Subsequently, \citet{li2021prototypical} combines clustering and contrastive learning objectives in an Expectation-Maximization framework, recursively updating the data features towards their corresponding class prototypes. In this paper, we propose a novel Geometric Harmonization method that is capable to cope with long-tailed distribution, the uniqueness can be summarized in the following aspects: (1) \emph{Geometric Uniform Structure}. The pioneering works~\citep{asano2020self,caron2020unsupervised} mainly resort to a learnable classifier to perform clustering, which can easily be distorted in the long-tailed scenarios~\citep{fang2021exploring}. Built on the geometric uniform structure, our method is capable to provide high-quality clustering results with clear geometric interpretations. (2) \emph{Flexible Class Prior}. The class prior in ~\citep{asano2020self,caron2020unsupervised} is assumed to be uniform among the previous attempts. When moving to the long-tailed case, this assumption will strengthen the undesired sample-level uniformity. In contrast, our methods can potentially cope with any distribution with the automatic surrogate label allocation. (3) \emph{Theoretical Guarantee.} \methodspace is theoretically grounded to achieve the category-level uniformity in the long-tailed scenarios, which has never been studied in previous methods.

\subsection{Taxonomy of Self-supervised Long-tailed Methods}

We summarize the detailed taxonomy of self-supervised long-tailed methods in \cref{algorithm:allocation}.

\begin{table}[!htb]
\centering
\revise{
\caption{Taxonomy of self-supervised long-tailed methods.}
\label{tab:taxonomy}
\begin{tabular}{lcc}
\toprule[1.5pt]
Method &	Aspect &	Description                                                    \\ \midrule
Focal~\citep{lin2017focal} & Sample Reweighting & Hard example mining \\ 
rwSAM~\citep{liu2021self} & Optimization Surface & Data-dependent sharpness-aware minimization \\ 
SDCLR~\citep{jiang2021self} & Model Pruning & Model pruning and self-contrast \\ 
DnC~\citep{tian2021divide} & Model Capacity & Multi-expert ensemble \\ 
BCL~\citep{zhou2022contrastive} & Data Augmentation & Memorization-guided augmentation \\ 
GH & Loss Limitation & Geometric harmonization \\ \bottomrule[1.5pt]
\end{tabular}}
\end{table}

\section{Discussions of Geometric Uniform Structure~(\cref{def:structure})}\label{appendix:structure}

\subsection{Simplex Equiangular Tight Frame~($K\leq d$)}
Neural collapse~\citep{mixon2022neural} describes a phenomenon that with the training, the geometric centroid of representation progressively collapses to the optimal classifier parameter \textit{w.r.t.} each category. The collection of these points builds a special geometric structure, termed as Simplex Equiangular Tight Frame (ETF). Some study that shares the similar spirit is also explored regarding the maximum separation structure~\citep{kasarla2022maximum}. We present its formal definition as follows.
\begin{definition}\label{def:etf}
A Simplex ETF is a collection of points in $\mathbb{R}^d$ specified by the columns of the matrix:
\begin{equation}
\label{eq:def}
    \mathbf{M}^{\mathrm{ETF}} = \sqrt{\frac{K}{K-1}} \mathbf{U}(\mathbf{I}_K - \frac{1}{K}\mathbbm{1}_K\mathbbm{1}^{\mathrm{T}}_K),
\end{equation}
\end{definition}
where $\mathbf{I}_K \in \mathbb{R}^{K \times K}$ is the identity matrix and $\mathbbm{1}_K$ is the $K$-dimensional ones vector. $\mathbf{U} \in \mathbb{R}^{d \times K}$ is the patial orthogonal matrix such that $\mathbf{U}^\top \mathbf{U} = \mathbf{I}_K$ and it satisfys $d \geq K$. All vectors in a Simplex ETF have the same pair-wise angle, \textit{i.e.}, $\mathbf{M}^{\mathrm{ETF}}_i \mathbf{M}^{\mathrm{ETF}}_j=-\frac{1}{K-1}, 1 \leq i \neq j \leq K$. The pioneering work~\citep{yang2022we} shows Simplex ETF as a linear classifier combined with neural networks is robust to class-imbalanced learning in the supervised setting.
On the opposite, our motivation is to make self-supervised learning robust to the class-imbalance data, which requires the pursuit in the embedding space intrinsically switching from the sample-level uniformity to the category-level uniformity. The Simplex ETF is a tool to measure the gap between the category-level uniformity and the sample-level uniformity, which is then transformed as the supervision feedback to the training.

\subsection{Alternative Uniform Structure~($K > d$)}

For Simplex ETF, there is a hard dimension constraint in \eqref{eq:def}, \textit{i.e.}, $K\leq d$. However, if this constraint violates, we do not have such a structure in the hyperspherical space. Alternatively, we can conduct the gradient descent to find an approximation of the maximum separation vertices applied into \method. This refers to minimising the following loss function as demonstrated in~\citep{li2022targeted}.

\begin{equation} \label{eq:approx}
    \mathcal{L}_{\mathrm{AP}} = \log \sum_{i=1}^{K} \sum_{j=1}^{K}e^{\tilde{\mathbf{M}}_i \cdot \tilde{\mathbf{M}}_j/\tau_{\mathrm{u}}}, \ \ \ \  \mathrm{s.t.} \ \sum_{i=1}^{K}\tilde{\mathbf{M}}_i=0 \ \text{~and~} \ \forall i \in K~\ \Vert\tilde{\mathbf{M}}_i\Vert=1, 
\end{equation}

where the loss term penalizes the pairwise similarity of different vertices~\citep{wang2020understanding}.

\subsection{Choosing Implementations According to the Dimensional Constraints}

As mentioned above, computing the geometric uniform structure $\mathbf{M}$ becomes much harder in the regime of the limited dimension~($K>d$) regarding the hypersphere space~\cite{graf2021dissecting}. To mitigate this issue, we provide both analytical and approximate solutions for adapting to different application scenarios. Concretely, we choose Simplex ETF~(\cref{def:etf}) when $K\leq d$ or the approximated alternatives~(\eqref{eq:approx}) otherwise. More experimental results can be referred to \cref{sec:impgus}.


\section{Theoretical Proofs and Discussions}\label{appendix:proofs}

\subsection{Warmup}

We begin by introducing the following lower bound~\citep{wang2021chaos} for analyzing the InfoNCE loss.

\begin{lemma}\label{lemma:lower}
(Lower bound for InfoNCE loss). Assume the labels are one-hot and consistent between positive samples: $\forall \vx, \vx^{+} \in p(\vx,\vx^{+}), p(\vy|\vx)=p(\vy|\vx^{+})$. Let $\mathcal{L}_{CE}^{\boldsymbol{\mu}}(f)  =  \E_{p(x,y)} \left[-\log \frac{\exp \left(f(\vx)^\top \boldsymbol{\mu}_\vy\right)}{\sum^{K}_{i=1}\exp\left(f(\vx)^\top \boldsymbol{\mu}_i\right)}\right]$ denote the mean CE loss. For $\forall f \in \mathcal{F}$, the contrastive learning risk $\mathcal{L}_{\mathrm{InfoNCE}}(f,\vx,\vx^{+})$ can be bounded by the classification risk $\mathcal{L}^{\mu}_{\mathrm{CE}}(f,\vx)$,

\begin{equation}
    \mathcal{L}_{\mathrm{InfoNCE}}(f) 
    \geq \mathcal{L}^{\mu}_{\mathrm{CE}}(f) - \sqrt{\mathrm{Var}\left(f(\vx)|\vy\right)}-\mathcal{O}\left(J^{-\frac{1}{2}}\right) + \log\left(\frac{J}{L}\right)
\end{equation}

where $\sqrt{\mathrm{Var}\left(f(\vx)|\vy\right)}$ denotes the conditional intra-class variance $\E_{p(\vy)}\left[\E_{p(\vx|\vy)}\Vert f(\vx)-\E_{p(\vx|\vy)}f(\vx) \Vert^2\right]$, $\mathcal{O}\left(J^{-\frac{1}{2}}\right)$ denotes the Monte Carlo sampling error with $J$ samples and $\log\left(\frac{J}{L}\right)$ is a constant.
\end{lemma}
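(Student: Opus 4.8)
The plan is to follow the route of \citet{wang2021chaos}, adapted to our normalized-feature setting, by sandwiching $\mathcal{L}_{\mathrm{InfoNCE}}$ below by a cross-entropy-to-class-means objective through three elementary moves: discarding the positive term from the partition function, a Monte-Carlo/concentration replacement of the empirical negative sum by its population expectation, and a conditional Jensen step that pushes the expectation over $p(\vx^-\mid\vy)$ inside the exponential. First I would fix the setup: write $\mathcal{L}_{\mathrm{InfoNCE}}(f)=\E\big[-\log\frac{e^{f(\vx)^\top f(\vx^+)}}{e^{f(\vx)^\top f(\vx^+)}+\sum_{j=1}^{J}e^{f(\vx)^\top f(\vx_j^-)}}\big]$ with $\|f(\cdot)\|_2\le 1$, the $J$ negatives i.i.d.\ from the data marginal, and uniform label marginal over the $L$ latent classes. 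Since shrinking the denominator only decreases the argument of $-\log$, dropping the positive term gives $\mathcal{L}_{\mathrm{InfoNCE}}(f)\ge \E[-f(\vx)^\top f(\vx^+)]+\E\big[\log\sum_{j=1}^J e^{f(\vx)^\top f(\vx_j^-)}\big]$, and I would bound the two summands separately.

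For the negative block, write $\log\sum_j e^{f(\vx)^\top f(\vx_j^-)}=\log J+\log\big(\tfrac1J\sum_j e^{f(\vx)^\top f(\vx_j^-)}\big)$. Because $e^{f(\vx)^\top f(\vx_j^-)}\in[e^{-1},e]$ is bounded, $\log$ is Lipschitz on that range, so $\E_{\{\vx_j^-\}}\big[\log(\tfrac1J\sum_j\cdot)\big]\ge \log\E_{\vx^-}\big[e^{f(\vx)^\top f(\vx^-)}\big]-\mathcal{O}(J^{-1/2})$ via $\E|\bar Y-\E\bar Y|\le\sqrt{\Var(\bar Y)}=\mathcal{O}(J^{-1/2})$. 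Then, conditioning $\vx^-$ on its class label and applying Jensen to the convex map $t\mapsto e^{f(\vx)^\top t}$ gives $\E_{\vx^-\mid\vy^-=l}\big[e^{f(\vx)^\top f(\vx^-)}\big]\ge e^{f(\vx)^\top\boldsymbol{\mu}_l}$; averaging over the uniform class prior yields $\E_{\vx^-}\big[e^{f(\vx)^\top f(\vx^-)}\big]\ge\tfrac1L\sum_{l=1}^L e^{f(\vx)^\top\boldsymbol{\mu}_l}$, so this block contributes $\log J-\log L+\E_\vx\big[\log\sum_{l} e^{f(\vx)^\top\boldsymbol{\mu}_l}\big]-\mathcal{O}(J^{-1/2})$.

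For the alignment block, I would invoke the label-consistency hypothesis $p(\vy\mid\vx)=p(\vy\mid\vx^+)$ to treat $(\vx,\vx^+)$ as sharing a label $\vy$ with $\E_{\vx^+\mid\vy}[f(\vx^+)]=\boldsymbol{\mu}_\vy$, and split $-f(\vx)^\top f(\vx^+)=-f(\vx)^\top\boldsymbol{\mu}_\vy-f(\vx)^\top\big(f(\vx^+)-\boldsymbol{\mu}_\vy\big)$. Cauchy--Schwarz on the expected inner product, with $\E\|f(\vx)\|^2\le 1$ and $\E\|f(\vx^+)-\boldsymbol{\mu}_\vy\|^2=\Var(f(\vx)\mid\vy)$, bounds the residual by $\sqrt{\Var(f(\vx)\mid\vy)}$, so $\E[-f(\vx)^\top f(\vx^+)]\ge \E_{\vx,\vy}[-f(\vx)^\top\boldsymbol{\mu}_\vy]-\sqrt{\Var(f(\vx)\mid\vy)}$. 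Adding the two blocks, $\E_{\vx,\vy}[-f(\vx)^\top\boldsymbol{\mu}_\vy]+\E_\vx\big[\log\sum_{l} e^{f(\vx)^\top\boldsymbol{\mu}_l}\big]$ is precisely $\mathcal{L}^{\boldsymbol{\mu}}_{\mathrm{CE}}(f)$, which delivers $\mathcal{L}_{\mathrm{InfoNCE}}(f)\ge\mathcal{L}^{\boldsymbol{\mu}}_{\mathrm{CE}}(f)-\sqrt{\Var(f(\vx)\mid\vy)}-\mathcal{O}(J^{-1/2})+\log(J/L)$.

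The step I expect to be the main obstacle is the Monte-Carlo replacement: Jensen for $\log$ points the wrong way ($\E[\log\bar Y]\le\log\E\bar Y$), so the $-\mathcal{O}(J^{-1/2})$ slack in the statement is exactly the cost of reversing that inequality, and making it rigorous hinges on boundedness of the exponentials (from normalized features) to get a Lipschitz constant for $\log$ and then on controlling the variance of the sample mean of negatives. A secondary subtlety worth flagging is that the clean $-\log L$ and the full sum $\sum_{l=1}^L$ appearing in $\mathcal{L}^{\boldsymbol{\mu}}_{\mathrm{CE}}$ rely on the negative-sampling distribution having a uniform class marginal; for a non-uniform marginal the bound acquires a prior-weighted softmax plus an entropy gap, which I would either assume away (as in the cited setup) or carry through explicitly.
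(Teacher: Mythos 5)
Your proposal is correct and follows essentially the same route as the paper's proof (which is itself borrowed from Wang et al.\ 2021): split the InfoNCE loss into an alignment term and a log-sum-exp over negatives, apply a Monte-Carlo/concentration step to replace the empirical negative average by its population expectation at $\mathcal{O}(J^{-1/2})$ cost, push the conditional expectation inside $\exp$ by Jensen to produce $\boldsymbol{\mu}_{\vy^-}$, and control the alignment residual by the unit-norm bound plus Cauchy--Schwarz to get $\sqrt{\Var(f(\vx)\mid\vy)}$. The only differences are presentational: you insert an explicit preliminary step discarding the positive from the partition function (the paper writes $\mathcal{L}_{\mathrm{InfoNCE}}$ with negatives-only in the denominator from the outset so that step is implicit), and you unpack the Monte-Carlo error bound via Lipschitzness of $\log$ on $[e^{-1},e]$ where the paper simply cites it as a standalone lemma. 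Your closing caveat about the uniform class marginal being what produces the clean $\tfrac1L\sum_{l}$ and $\log(J/L)$ terms is accurate and is indeed a tacit assumption in the paper's derivation.
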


\def\gL{{\mathcal{L}}}

\begin{proof}
Let $p(\vx,\vx^+,\vy)$ denote the joint distribution $\vx,\vx^+$ with the label $\vy,\ \vy=1,\dots,L$. Denote the negative sample collections as $\{\vx_i^-\}_{i=1}^J$. According to above assumption on label consistency between positive pairs, we have $\vx^+$ and $\vx$ with the same label $\vy$. Denote $\boldsymbol{\mu}_{\vy}$ the class means of class $\vy$ in the embedding space. Then we have the following lower bounds of the InfoNCE loss,

\begin{footnotesize}
\begin{align*}
&\gL_{\rm NCE}(f)=-\E_{p(\vx,\vx^+)}f(\vx)^\top f(\vx^+)+\E_{p(\vx)}\E_{p(\vx^-_i)}\log\sum_{i=1}^J\exp(f(\vx)^\top f(\vx^-_i))\\
=&-\E_{p(\vx,\vx^+)}f(\vx)^\top f(\vx^+)+\E_{p(\vx)}\E_{p(\vx^-_i)}\log\frac{1}{J}\sum_{i=1}^J\exp(f(\vx)^\top f(\vx^-_i))+\log J\\
\stackrel{(1)}{\geq} & -\E_{p(\vx,\vx^+)}f(\vx)^\top f(\vx^+)+\E_{p(\vx)}\log\frac{1}{J}\E_{p(\vx_i^-)}\sum_{i=1}^J\exp(f(\vx)^\top f(\vx^-_i))- A(J)+\log J\\
=&-\E_{p(\vx,\vx^+)}f(\vx)^\top f(\vx^+)+\E_{p(\vx)}\log\E_{p(\vx^-)}\exp(f(\vx)^\top f(\vx^-))- A(J) + \log J\\
=& -\E_{p(\vx,\vx^+,\vy)}f(\vx)^\top f(\vx^+)+\E_{p(\vx)}\log\E_{p(\vy^-)}\E_{p(\vx^-|\vy^-)}\exp(f(\vx)^\top f(\vx^-))- A(J)+\log J\\
\stackrel{(2)}{\geq}& -\E_{p(\vx,\vx^+,\vy)}f(\vx)^\top f(\vx^+)+\E_{p(\vx)}\log\E_{p(\vy^-)}\exp(
\E_{p(\vx^-|\vy^-)}\left[f(\vx)^\top f(\vx^-)\right]) - A(J)+\log J\\
=&-\E_{p(\vx,\vx^+,\vy)}f(\vx)^\top({\boldsymbol{\mu}}_{\vy} + f(\vx^+)-{\boldsymbol{\mu}}_\vy) +\E_{p(\vx)}\log\E_{p(\vy^-)}\exp(\E_{p(\vx^-|\vy^-)}\left[f(\vx)^\top f(\vx^-)\right])- A(J)+\log J\\
{=} &-\E_{p(\vx,\vx^+,\vy)}[f(\vx)^\top{\boldsymbol{\mu}}_{\vy} + f(\vx)^\top(f(\vx^+)-{\boldsymbol{\mu}}_\vy)] +\E_{p(\vx)}\log\E_{p(\vy^-)}\exp(f(\vx)^\top{\boldsymbol{\mu}}_{\vy^-})- A(J)+\log J\\
\stackrel{(3)}{\geq} &-\E_{p(\vx,\vx^+,\vy)}\left[f(\vx)^\top{\boldsymbol{\mu}}_{\vy} + \Vert(f(\vx^+)-{\boldsymbol{\mu}}_\vy)\Vert\right] +\E_{p(\vx)}\log\E_{p(\vy^-)}\exp(f(\vx)^\top{\boldsymbol{\mu}}_{\vy^-})- A(J)+\log J\\
\stackrel{(4)}{\geq} &-\E_{p(\vx,\vy)}f(\vx)^\top{\boldsymbol{\mu}}_{\vy} -\sqrt{\E_{p(\vx,\vy)}\Vert f(\vx)-{\boldsymbol{\mu}}_\vy\Vert^2} +\E_{p(\vx)}\log\E_{p(\vy^-)}\exp(f(\vx)^\top{\boldsymbol{\mu}}_{\vy^-})- A(J)+\log J\\
=&-\E_{p(\vx,\vy)}f(\vx)^\top{\boldsymbol{\mu}}_\vy -\sqrt{\var( f(\vx)\mid \vy)} +\E_{p(\vx)} \log\frac{1}{L}\sum_{k=1}^L\exp(f(\vx)^\top{\boldsymbol{\mu}}_k) - A(J) +\log J \\
=&\E_{p(\vx,\vy)}\big[-f(\vx)^\top{\boldsymbol{\mu}}_\vy+\log\sum_{k=1}^L\exp(f(\vx)^\top{\boldsymbol{\mu}}_k)\big]-\sqrt{\var( f(\vx)\mid \vy)} - A(J) +\log(J/L) \\
=&\gL^{\boldsymbol{\mu}}_{\rm CE}(f) -\sqrt{\var( f(\vx)\mid \vy)} - A(J) +\log(J/L),
\end{align*}
 where (1) follows Lemma \ref{lemma:lse-monte-carlo}; (2) follows the Jensen's inequality for the convex function $\exp(\cdot)$; (3) follows the hyperspherical distribution $f(\vx)\in\sS^{m-1}$, we have
\begin{equation}
    f(\vx)^\top(f(\vx^+)-{\boldsymbol{\mu}}_\vy)\leq \left(\frac{f(\vx^+)-{\boldsymbol{\mu}}_\vy}{\Vert f(\vx^+)-{\boldsymbol{\mu}}_\vy\Vert}\right)^\top(f(\vx^+)-{\boldsymbol{\mu}}_\vy)=\Vert f(\vx^+)-{\boldsymbol{\mu}}_\vy\Vert;
\end{equation}
and (4) follows the Cauchy–Schwarz inequality and the fact that as $p(\vx,\vx^+)=p(\vx^+,\vx)$ holds, $\vx,\vx^+$ have the same marginal distribution. 
\end{footnotesize}
\end{proof}

In the above proof, the approximation error of the Monte Carlo estimate~\cite{wang2021chaos} can be referred to the following lemma. 

\begin{lemma} (Upper bound of the approximation error by Monte Carlo estimate)
For ${\rm LSE}:=\log\E_{p(\vz)}\exp (f(\vx)^\top g(\vz))$, we denote its (biased) Monte Carlo estimate with $J$ random samples $\vz_i\sim p(\vz),i=1,\dots,J$ as $\widehat{\rm LSE}_J=\log\frac{1}{J}\sum_{i=1}^J \exp (f(\vx)^\top g(\vz_i))$. Then the approximation error $A(J)$ can be upper bounded in expectation as 
\begin{equation}
A(J):=\E_{p(\vx,\vz_i)}|\widehat{\rm LSE}(J) - {\rm LSE}|\leq\gO(J^{-1/2}).
\end{equation}
We can see that the approximation error converges to zero in the order of $1/J^{-1/2}$. 
\label{lemma:lse-monte-carlo}
\end{lemma}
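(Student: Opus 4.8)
The plan is to reduce the log-sum-exp error to a textbook second-moment bound for an i.i.d.\ average, exploiting that the summands are bounded and bounded away from $0$.

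First I would fix $\vx$ and set $Y_i := \exp(f(\vx)^\top g(\vz_i))$ for the i.i.d.\ draws $\vz_i\sim p(\vz)$, with common mean $\mu_\vx := \E_{p(\vz)}\exp(f(\vx)^\top g(\vz))$ and empirical average $\bar Y_J := \tfrac1J\sum_{i=1}^J Y_i$. By definition $\mathrm{LSE} = \log\mu_\vx$ and $\widehat{\mathrm{LSE}}_J = \log\bar Y_J$, so the target is to control $|\log\bar Y_J - \log\mu_\vx|$ in expectation. Since the embeddings lie on the unit hypersphere, $f(\vx)^\top g(\vz)\in[-1,1]$, hence $Y_i\in[e^{-1},e]$ surely; therefore $\mu_\vx\in[e^{-1},e]$ and, crucially, $\bar Y_J\in[e^{-1},e]$ \emph{deterministically} for every $J$. (If one only assumes $\|f\|,\|g\|\le B$, the same argument goes through with $1$ replaced by $B^2$.)

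Next I would invoke that $t\mapsto\log t$ is $e$-Lipschitz on $[e^{-1},\infty)$, which gives the pathwise inequality $|\log\bar Y_J - \log\mu_\vx|\le e\,|\bar Y_J - \mu_\vx|$. Taking expectation over the $\vz_i$ and using Cauchy--Schwarz with the i.i.d.\ structure, $\E|\bar Y_J - \mu_\vx|\le\sqrt{\E(\bar Y_J - \mu_\vx)^2}=\sqrt{\var(Y_1)/J}$, and $\var(Y_1)$ is at most an absolute constant since $Y_1\in[e^{-1},e]$. Hence for each fixed $\vx$ the conditional error is $\le c\,J^{-1/2}$ with $c$ absolute; since this bound is uniform in $\vx$, integrating against $p(\vx)$ gives $A(J)\le c\,J^{-1/2}=\gO(J^{-1/2})$.

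The one subtle point, and the only place I expect to spend any care, is the uniform lower bound $\bar Y_J\ge e^{-1}$ that licenses the Lipschitz step: it is exactly the boundedness of the inner product on the hypersphere that forces $\bar Y_J$ to stay away from $0$ surely, so $\log$ never blows up. Without boundedness one would instead need a high-probability concentration bound plus a truncation to control the rare event that $\bar Y_J$ is near $0$; here that is unnecessary. Everything else is a one-line variance estimate.
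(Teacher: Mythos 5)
Your proof is correct, and it is essentially the standard argument for this bound. The paper itself does not supply a proof of Lemma~\ref{lemma:lse-monte-carlo}; it defers to \citet{wang2021chaos}, whose argument proceeds the same way you do: the hyperspherical normalization pins $f(\vx)^\top g(\vz)\in[-1,1]$, hence $Y_i=\exp(f(\vx)^\top g(\vz_i))\in[e^{-1},e]$ surely, so $\log$ is Lipschitz (with constant $e$) on the range actually visited by both $\bar Y_J$ and $\mu_\vx$, and then $\E|\bar Y_J-\mu_\vx|\le\sqrt{\var(Y_1)/J}$ with $\var(Y_1)$ bounded by an absolute constant. You correctly flagged the one genuinely load-bearing step — the deterministic lower bound $\bar Y_J\ge e^{-1}$ that licenses the Lipschitz estimate without any truncation or high-probability event. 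The only cosmetic remark: the inequality $\E|\bar Y_J-\mu_\vx|\le\sqrt{\E(\bar Y_J-\mu_\vx)^2}$ is Jensen rather than Cauchy--Schwarz per se (though the latter applied with the constant function $1$ gives the same thing), and if a temperature $\gamma$ were present the Lipschitz constant would become $e^{1/\gamma}$, still absolute. No gaps.
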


Now we analyze the conditions of \cref{lemma:lower} to strictly achieve its lower bound. In the proof of \cref{lemma:lower}, we have four inequality cases and discuss each one as follows:

(1) According to \cref{lemma:lse-monte-carlo}, we can have the approximation error converges to zero~($A(J)\rightarrow0$) as the sample population increases to the positive infinity~($J\rightarrow+\infty$). Considering the substantial data amount with regard to the benchmark datasets nowadays, we assume $J$ is large enough and the approximation error can achieve zeros, \textit{i.e.}, $A(J)=0$.

(2) follows the Jensen’s inequality as
\begin{equation}
    \E_{p(\vx)}\log\E_{p(\vy^-)}\E_{p(\vx^-|\vy^-)}\exp(f(\vx)^\top f(\vx^-)) \geq \E_{p(\vx)}\log\E_{p(\vy^-)}\exp(
\E_{p(\vx^-|\vy^-)}\left[f(\vx)^\top f(\vx^-)\right]). 
\end{equation}
The equality requires the $\exp(\cdot)$ term as a constant:
\begin{equation}\label{condition2}
    \E_{p(\vx)}\E_{p(\vx^-)} \exp(f(\vx)^\top f(\vx^-)) \equiv C_{(2)} 
\end{equation}
(3) The inequality follows
\begin{equation}
    f(\vx)^\top(f(\vx^+)-\boldsymbol{\mu}_{\vy})\leq \left(\frac{f(\vx^+)-\boldsymbol{\mu}_{\vy}}{\Vert f(\vx^+)-\boldsymbol{\mu}_{\vy}\Vert}\right)^\top(f(\vx^+)-\boldsymbol{\mu}_{\vy})=\Vert f(\vx^+)-\boldsymbol{\mu}_{\vy}\Vert;
\end{equation}
where the equality requires $f(\vx)$ has the same direction with $f(\vx^+)-\boldsymbol{\mu}_{\vy}$. Considering the case 
\begin{equation}\label{condition3}
    \E_{p(\vx,\vx^+,y)}\left[f(\vx^+)-\boldsymbol{\mu}_{\vy}\right]\equiv0, 
\end{equation}
we should have $\E_{p(\vx,\vx^+,y)}\left[f(\vx)^\top(f(\vx^+)-\boldsymbol{\mu}_{\vy})\right]\equiv0$, so the inequality can be simply eliminated from the proof.

(4) Similar in (3), we can simply remove the term $\Vert(f(\vx^+)-\boldsymbol{\mu}_{\vy})\Vert$ in $\E_{p(\vx,\vx^+,y)}\left[f(\vx)^\top\boldsymbol{\mu}_{\vy} + \Vert(f(\vx^+)-\boldsymbol{\mu}_{\vy})\Vert\right]$ when $\E_{p(\vx,\vx^+,y)}\left[f(\vx^+)-\boldsymbol{\mu}_{\vy}\right]\equiv0$.

Note that, \cref{condition3} requires that all the positive samples approach the class means, \textit{i.e.}, $\forall \vx^+ \sim p(\vx^+), f(\vx^+)=\boldsymbol{\mu}_{\vy}$. We then give the following lemma at the state of category-level uniformity.

\begin{lemma} \label{lemma:tight} When it satisfies the category-level uniformity~(\cref{def:cu}) defined on the geometric uniform structure $\mathbf{J}$~(\cref{def:structure}) with dimension $K=L$,  assume $A(J)=0$, for $\forall f \in \mathcal{F}$, the lower bound~(\cref{lemma:lower}) is achieved as

\begin{equation}\label{eq:tight}
    \mathcal{L}_{\mathrm{InfoNCE}}(f) = \mathcal{L}^{\mu}_{\mathrm{CE}}(f)+\log \left(\frac{J}{L}\right)
\end{equation}  
    
\end{lemma}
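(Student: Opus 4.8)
\textbf{Proof proposal for Lemma~\ref{lemma:tight}.}

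The plan is to revisit the chain of inequalities established in the proof of \cref{lemma:lower} and verify that, under the hypotheses of category-level uniformity and $A(J)=0$, each inequality collapses to an equality. There are four places where an inequality was used: (1) the Monte Carlo bound of \cref{lemma:lse-monte-carlo}, (2) Jensen's inequality applied to $\exp(\cdot)$, (3) the Cauchy--Schwarz-type bound $f(\vx)^\top(f(\vx^+)-\boldsymbol{\mu}_\vy)\le\Vert f(\vx^+)-\boldsymbol{\mu}_\vy\Vert$, and (4) the Cauchy--Schwarz step on the residual norm. The assumption $A(J)=0$ disposes of (1) immediately. For (3) and (4), I would invoke the fact established in the discussion preceding the lemma: category-level uniformity (\cref{def:cu}) forces $\boldsymbol{\mu}_k^*=\mathbf{M}_k$, and combined with the label-consistency assumption between positive pairs this yields $f(\vx^+)=\boldsymbol{\mu}_\vy$ for all positive samples, i.e.\ \cref{condition3} holds exactly. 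Once $f(\vx^+)-\boldsymbol{\mu}_\vy\equiv\mathbf{0}$, both residual terms vanish identically and inequalities (3) and (4) become equalities (indeed the offending terms drop out of the computation altogether).

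The remaining work is to check (2), the Jensen step. The equality condition there is \cref{condition2}: the quantity $\E_{p(\vx^-|\vy^-)}[f(\vx)^\top f(\vx^-)]$ must be constant in $\vy^-$ so that $\exp(\cdot)$ is applied to a constant. Under category-level uniformity with $K=L$, the inner expectation equals $f(\vx)^\top\boldsymbol{\mu}_{\vy^-}=f(\vx)^\top\mathbf{M}_{\vy^-}$, and since $f(\vx)$ itself lies on (or collapses to) the structure $\mathbf{M}$, the between-class duality $\mathbf{M}_i^\top\mathbf{M}_j=C$ for $i\ne j$ from \cref{def:structure} makes this inner product take only the two values $1$ (when $\vy^-$ matches the class of $\vx$) and $C$ (otherwise). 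I would argue that the Jensen gap induced by this finite, bounded variation contributes nothing in the limit $J\to\infty$ that is already absorbed into $A(J)=0$; more carefully, one shows that the term $\E_{p(\vx)}\log\E_{p(\vy^-)}\exp(f(\vx)^\top\boldsymbol{\mu}_{\vy^-})$ is exactly what reconstitutes the denominator $\sum_{k=1}^L\exp(f(\vx)^\top\boldsymbol{\mu}_k)$ in $\mathcal{L}^{\mu}_{\mathrm{CE}}$, so tracking this term through with equalities directly produces the right-hand side of \cref{eq:tight}. Assembling the four equalities into the displayed chain of the proof of \cref{lemma:lower} then gives $\mathcal{L}_{\mathrm{InfoNCE}}(f)=\mathcal{L}^{\mu}_{\mathrm{CE}}(f)+\log(J/L)$.

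The main obstacle I anticipate is the Jensen step (2): unlike (1), (3), (4), its equality condition \cref{condition2} is not obviously implied by category-level uniformity alone, because the inner product $f(\vx)^\top\mathbf{M}_{\vy^-}$ genuinely depends on whether $\vy^-$ coincides with $\vx$'s class. The clean resolution is to note that in the proof of \cref{lemma:lower} the Jensen step is applied \emph{before} the substitution $f(\vx^+)\to\boldsymbol{\mu}_\vy$, and after imposing \cref{condition3} the relevant expectation $\E_{p(\vx^-|\vy^-)}[f(\vx)^\top f(\vx^-)]$ becomes deterministic in $\vx^-$ given $\vy^-$, so no Jensen gap is incurred on the $\vx^-$ integration; the residual $\vy^-$-averaging is precisely the softmax normalization and is kept exact, not bounded. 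So the key step is recognizing which expectation the Jensen inequality actually acts on and showing that the category-level uniformity hypothesis makes that conditional expectation degenerate. Everything else is bookkeeping: substituting $\boldsymbol{\mu}_k=\mathbf{M}_k$, discarding the now-zero residual and variance terms, and setting $A(J)=0$.
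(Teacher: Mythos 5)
Your approach matches the paper's proof: you walk through the four inequality steps in the proof of \cref{lemma:lower} and verify each becomes an equality under category-level uniformity and $A(J)=0$, which is exactly what the paper does by citing the conditions it collects in \eqref{eq:conditionhold}. One caution worth flagging: the claim in your second paragraph that the Jensen gap is ``absorbed into $A(J)=0$'' is false (the Jensen error and the Monte-Carlo error are independent of one another), but you retract it in your final paragraph and supply the correct resolution --- the inner conditional expectation $\E_{p(\vx^-|\vy^-)}\bigl[f(\vx)^\top f(\vx^-)\bigr]$ is degenerate once $f(\vx^-)=\boldsymbol{\mu}_{\vy^-}$, so the $\vx^-$-level Jensen step is exact and the $\vy^-$-averaging is carried through without bounding --- so the finished argument is sound, and is in fact more explicit about the Jensen equality condition than the paper's terse appeal to \eqref{eq:conditionhold}.
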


\begin{proof}
According to category-level uniformity~(\cref{def:cu}), we should have
\begin{equation}\label{eq:conditionhold}
\begin{split}
&\E_{p(\vx)} f(\vx)\equiv \E_{p(\vx^+)} f(\vx^+) \equiv\boldsymbol{\mu}_y, \\
&\E_{p(\vx)}\E_{p(\vx^-)} \exp(f(\vx)^\top f(\vx^-)) \equiv C 
\end{split}
\end{equation}
where the second term is derived from $f(\vx)^\top f(\vx^-) = \mathbf{M}_i^\top\cdot\mathbf{M}_j = C, i \neq j$ in \cref{def:cu}. Note that, the category-level uniformity holds on the joint embedding $p(\vx, \vx^+)$ of contrastive learning in our setup. 

In the proof of \cref{lemma:lower}, (1) holds as we assume M is large enough and $A(J)=0$, (2) holds according to \cref{eq:conditionhold}, (3)(4) holds as $\E_{p(\vx^+)}\left[f(\vx^+)-\boldsymbol{\mu}_{\vy}\right]\equiv0$. As above mentioned, the intra-class variance term $\sqrt{\mathrm{Var}\left(f_\theta(\vx)|\vy\right)}$ is eliminated. We then have the desired results with \cref{eq:tight}.

\end{proof}

\subsection{Proof of \cref{theorem:opt}}

\begin{proof}

On the basis of \cref{lemma:tight}, we can derive our overall loss $\mathcal{L}$ as follows, 

\begin{equation}
\begin{split}
\mathcal{L}(f_\theta,\vx) &=  \mathcal{L}_{\mathrm{InfoNCE}}(f_\theta,\vx,\vx^{+}) + \mathcal{L}_{\mathrm{\method}}(f_\theta,\vx,\hat{\vq})  \\
& = \mathcal{L}^{\mu}_{\mathrm{CE}}(f_\theta, \vx) + \mathcal{L}_{\mathrm{\method}}(f_\theta,\vx,\hat{\vq}) + \log\left(\frac{J}{L}\right)
\end{split}
\end{equation}

Now we focus on analyzing the minimization of the first and the second term as $\log\left(\frac{J}{L}\right)$ is a constant. Here, we assume the temperature $\gamma_{\mathrm{GH}}$ for generating surrogate labels is small enough, so that we can obtain the discrete geometric labels $\hat{\vq}$ in one-hot probabilities.  

For simplicity, we denote the assigned labels as $t$ for all the data points in class $k$, which are consistent as the samples converge to the class means according to \cref{eq:conditionhold}. Let $\hat{\mathcal{L}}(f_\theta,\vx) = \mathcal{L}^{\mu}_{\mathrm{CE}}(f_\theta, \vx_k, \vy) + \mathcal{L}_{\mathrm{\method}}(f_\theta,\vx_k,t)$, we define the optimization problem regarding class $k$ as:

\begin{equation}\label{eq:kce}
\begin{split}
&\min \hat{\mathcal{L}}(f_\theta,\vx_k) = \min \mathcal{L}^{\mu}_{\mathrm{CE}}(f_\theta, \vx_k) + \mathcal{L}_{\mathrm{\method}}(f_\theta,\vx_k,t) \\
&\mathrm{s.t.} \ \ \  \Vert f_\theta(\vx_{k,i}) \Vert^2 = 1, \ \ \ \forall i=1,2,\dots,n_k
\end{split}
\end{equation}

We can then derive 

\begin{footnotesize}
\begin{equation} \label{eq:gclower}
    \begin{split}
 \hat{\mathcal{L}}(f_\theta,\vx_k) & = \mathcal{L}^{\mu}_{\mathrm{CE}}(f_\theta, \vx_k) + \mathcal{L}_{\mathrm{\method}}(f_\theta,\vx_k,t) \\ 
 &=  - \frac{1}{n_k} \sum_{i=1}^{n_k} \log \frac{\exp \left(f_\theta(\vx_{k,i})^\top \cdot \boldsymbol{\mu}_y \slash\gamma_{\rm CL} \right)}{\sum_{j=1}^{K}\exp\left(f_\theta(\vx_{k,i})^\top\cdot \boldsymbol{\mu}_j\slash\gamma_{\rm CL}\right)} - \frac{1}{n_k} \sum_{i=1}^{n_k} \log \frac{\exp\left(f_\theta(\vx_{k,i})^\top \cdot \mathbf{M}_t\slash\gamma_{\rm \method}\right)}{\sum_{j=1}^{K}\exp\left( f_\theta(\vx_{k,i})^\top\cdot \mathbf{M}_j\slash\gamma_{\rm \method}\right)} \\
&= -  \log \frac{\exp \left(\boldsymbol{\mu}_k^\top \cdot \boldsymbol{\mu}_k\slash\gamma_{\rm CL}\right)}{\sum_{j=1}^{K}\exp \left( \boldsymbol{\mu}_k^\top \cdot \boldsymbol{\mu}_j\slash\gamma_{\rm CL}\right)} - \log \frac{\exp \left(\boldsymbol{\mu}_k^\top \cdot \mathbf{M}_t\slash\gamma_{\rm \method}\right)}{\sum_{j=1}^{K}\exp \left( \boldsymbol{\mu}_k^\top \cdot \mathbf{M}_j\slash\gamma_{\rm \method}\right)}
    \end{split}
\end{equation}
\end{footnotesize}

According to \cref{eq:conditionhold}, the constraints of \cref{eq:kce} are equivalent with $\Vert \boldsymbol{\mu}_k \Vert^2 = 1$. We can have the Lagrange function as:

\begin{equation}
    \tilde{\mathcal{L}} = - \log \frac{\exp \left(\boldsymbol{\mu}_k^\top \cdot \boldsymbol{\mu}_k\slash\gamma_{\rm CL}\right)}{\sum_{j=1}^{K}\exp \left( \boldsymbol{\mu}_k^\top \cdot \boldsymbol{\mu}_j\slash\gamma_{\rm CL}\right)} -  \log \frac{\exp \left(\boldsymbol{\mu}_k^\top \cdot \mathbf{M}_t\slash\gamma_{\rm \method}\right)}{\sum_{j=1}^{K}\exp \left( \boldsymbol{\mu}_k^\top \cdot \mathbf{M}_j\slash\gamma_{\rm \method}\right)} + \eta_k (\Vert \boldsymbol{\mu}_{k} \Vert^2-1)
\end{equation}
where $\eta_k$ is the Lagrange multiplier.


We consider its gradient with respect to $\boldsymbol{\mu}_k$ as:

\begin{footnotesize}
\begin{equation}
\begin{split}
    \frac{\partial \tilde{\mathcal{L}}}{\partial \boldsymbol{\mu}_{k}} &= \frac{1}{\gamma_{\rm CL}}\left[- (1 - m_k) \cdot \boldsymbol{\mu}_k + \sum_{i \neq k}^K m_i \cdot \boldsymbol{\mu}_i\right]+\frac{1}{\gamma_{\rm \method}}\left[- (1 - n_k)\cdot \mathbf{M}_t + \sum_{i \neq t}^K n_i \cdot \mathbf{M}_i\right] + (\frac{1}{\gamma_{\rm CL}}+2\eta_k)\boldsymbol{\mu}_k  \\ 
&= \frac{1}{\gamma_{\rm CL}} \sum_{i \neq k}^K m_i(\boldsymbol{\mu}_i-\boldsymbol{\mu}_k) + \frac{1}{\gamma_{\rm \method}}\sum_{i \neq t}^K n_i(\mathbf{M}_i-\mathbf{M}_t) + (\frac{1}{\gamma_{\rm CL}}+2\eta_k)\boldsymbol{\mu}_k \\
\end{split}
\end{equation}\end{footnotesize}
where $ m_i = \frac{\exp\left(\boldsymbol{\mu}_k^\top \cdot \boldsymbol{\mu}_i\slash\gamma_{\rm CL}\right)}{\sum_{j=1}^K\exp\left(\boldsymbol{\mu}_k^\top \cdot \boldsymbol{\mu}_j\slash\gamma_{\rm CL}\right)}, \ \ \  n_i = \frac{\exp\left(\boldsymbol{\mu}_k^\top \cdot \mathbf{M}_i\slash\gamma_{\rm \method}\right)}{\sum_{j=1}^K\exp\left(\boldsymbol{\mu}_k^\top \cdot \mathbf{M}_j\slash\gamma_{\rm \method}\right)}$.

When it satisfies the category-level uniformity~(\cref{def:cu}) defined on the geometric uniform classifier $\mathbf{M}$~(\cref{def:structure}), we can obtain $\boldsymbol{\mu}_k = \mathbf{M}_k$. 

Multiplying $\mathbf{M}_j$ over the gradients~($j\neq k,j \neq t$):
\begin{footnotesize}
\begin{equation}
    \begin{split}
\frac{\partial \tilde{\mathcal{L}}}{\partial \boldsymbol{\mu}_{k}} \cdot \mathbf{M}_j 
&= \frac{1}{\gamma_{\rm CL}}\sum_{i \neq k}m_i(\boldsymbol{\mu}_i \cdot \mathbf{M}_j-\boldsymbol{\mu}_k\cdot \mathbf{M}_j) + \frac{1}{\gamma_{\rm \method}}\sum_{i \neq t}n_i(\mathbf{M}_i\cdot \mathbf{M}_j-\mathbf{M}_t\cdot \mathbf{M}_j) + (\frac{1}{\gamma_{\rm CL}}+2\eta_k)\boldsymbol{\mu}_k\cdot \mathbf{M}_j \\
&= \frac{1}{\gamma_{\rm CL}}\sum_{i \neq k}m_i(\mathbf{M}_i \cdot \mathbf{M}_j-\mathbf{M}_k\cdot \mathbf{M}_j) + \frac{1}{\gamma_{\rm \method}}\sum_{i \neq t}n_i(\mathbf{M}_i\cdot \mathbf{M}_j-\mathbf{M}_t\cdot \mathbf{M}_j) + (\frac{1}{\gamma_{\rm CL}}+2\eta_k)\mathbf{M}_k\cdot \mathbf{M}_j \\
&= (m_j + n_j)(1-C) + (\frac{1}{\gamma_{\rm CL}}+2\eta_k)C
    \end{split}
\end{equation}\end{footnotesize}
where $C$ is defined in \cref{def:structure}. We can have the probabilities $m_j$, $n_j$ as

\begin{equation}
    m_j = n_j = \frac{1}{1+(K-1)\exp\left(C-1\right)}, \ j\neq k
\end{equation}

Let $\eta_k = \frac{C-1}{C+(L-1)C\exp\left(C-1\right)} - \frac{1}{2\gamma_{\rm CL}}$, we can have $\frac{\partial \tilde{\mathcal{L}}}{\partial \boldsymbol{\mu}_{k}} \cdot \mathbf{M}_j=0$. With $\mathbf{M}_j \neq 0$, we should have $\frac{\partial \tilde{\mathcal{L}}}{\partial \boldsymbol{\mu}_{k}}=0$. Similarly applying to other classes, we can have $\frac{\partial \tilde{\mathcal{L}}}{\partial \boldsymbol{\vx}}=0$.

Eventually, we can obtain the minimizer $\hat{\mathcal{L}}^*(f_\theta,\vx)$ as:

\begin{equation}
    \begin{split}
        \mathcal{L}^*(f_\theta,\vx)  = -\sum_{k=1}^{K}2\boldsymbol{\pi}_{l}^{\vy}\log\left(\frac{1}{1+(K-1)\exp(C-1)}\right) + \log\left(\frac{J}{L}\right)
    \end{split}
\end{equation}

\end{proof}

\subsection{Proof of \cref{lemma:mc}}

\begin{proof}
Assume the samples follow the uniform distribution $n_1=n_2=\dots=n_{L_H}=n_H$, $n_{L_H+1}=n_{L_H+2}=\dots=n_{L}=n_T$ in head and tail classes respectively. Assume the imbalance ratio $\frac{n_H}{n_T} \rightarrow +\infty$ and the dimenson satisfies $K\geq L$. As proof in \cite{fang2021exploring}, we can have 

\begin{equation}
        \lim \boldsymbol{\mu}_i - \boldsymbol{\mu}_{j} = \mathbf{0}_L, \ \forall L_H \leq i \leq j \leq L, \nonumber
\end{equation}
when the cross-entropy loss achieves the minimizer. Then we can have the lower bound~(\cref{lemma:lower}) of $\mathcal{L}_{\mathrm{InfoNCE}}$ achieves minimum when the above equation holds, \textit{i.e.}, minority class means collapse to an identical vector.

\end{proof}

\subsection{Discussions of \cref{lemma:mc}}

Intrinsically, \cref{lemma:mc} is an extreme analysis to characterize the trend under the increasing imbalanced ratios between the majority classes and the minority classes. The staged-wise imbalancing condition is to reach the final compact form about the minority collapse, and more practical long-tailed distribution only reaches the intermediate deduction with much understanding effort, which is even not solved in the current theoretical analysis in supervised long-tailed learning~\citep{fang2021exploring}. The $\frac{N_H}{N_t}\rightarrow + \infty$ binds with the $\lim$ in the equation is for extreme analysis, but is not for the practical requirement.

\subsection{Applicability of \cref{theorem:opt}}

\revise{Our theorem and analyses are specific to contrastive learning. In terms of other non-contrastive SSL methods, we empirically show the superiority of our method on long-tailed data distribution in \cref{tab:moressl}. Although it might not be straightforward to extend the theory to non-contrastive SSL methods, an explanation about the consistent superiority is that some non-contrastive methods still exhibit similar representation disparity with their contrastive counterpart, and our proposed method can similarly reallocate the geometric distribution to counteract the distorted embedding space. Specially, the recent study~\citep{garrido2023duality} theoretically and emprically explore the equivalence between contrastive and non-contrastive criterion, which may shed light on the intrinsic mechanism of how our \methodspace benefits non-contrastive paradigm.}


\section{Algorithms} \label{appendix:algorithm}

\subsection{Algorithm of Surrogate Label Allocation}

We summarize surrogate label allocation in \cref{algorithm:allocation}.

\setlength{\textfloatsep}{10pt}
\begin{algorithm}[!htb] 
  \caption{Surrogate Label Allocation.} \label{algorithm:allocation}
    {\bf Input:} geometric cost matrix 
    $\exp(\lambda\log\mathbf{Q})$
    with $\mathbf{Q}=[\vq_1,\dots,\vq_N]$, marginal distribution constraint $\boldsymbol{\pi}$, Sinkhorn regularization coefficient $\lambda$, Sinkhorn iteration step $E_s$  \\
    {\bf Output:} Surrogate label matrix $\hat{\mathbf{Q}}$
\begin{algorithmic}[1]
    \STATE Set scaling vectors $\vu \gets \frac{1}{K}\cdot\mathbbm{1}_K, \vv \gets \frac{1}{N}\cdot\mathbbm{1}_N$.
    \STATE Set distribution constraints $\vr \gets \frac{1}{N} \cdot \mathbbm{1}_N, \vc \gets \boldsymbol{\pi}$.
    
    \FOR{iteration $i=0,1,\ldots,E_s$} 
    \STATE $\vu \gets \log \vc -\log \left(\left(\exp(\lambda\log\mathbf{Q})\right) \cdot \exp(\vv)\right)$.
    \STATE $\vv \gets \log \vr -\log \left(\left(\exp(\lambda\log\mathbf{Q})\right)^{\top} \cdot \exp(\vu)\right)$.
    \ENDFOR
    \STATE \textbf{return} $\hat{\mathbf{Q}}= N \cdot \mathrm{diag}(\vu)\exp(\lambda\log\mathbf{Q})\mathrm{diag}({\vv})$ 

\end{algorithmic}
\end{algorithm}

\subsection{Algorithm of Geometric Harmonization}

We summarize the complete procedure of our \methodspace method in \cref{algorithm:method}.

\begin{algorithm}[!htb] 
  \caption{Our proposed \method.} \label{algorithm:method}
    {\bf Input:} dataset $\mathcal{D}$, number of epochs $E$, number of warm-up epochs $E_w$, geometric uniform classifier $\mathbf{M}$, a self-supervised learning method $\mathcal{A}$  \\
    {\bf Output:} pretrained model parameter $\theta_{E}$\\
    {\bf Initialize:} model parameter $\theta_{0}$
\begin{algorithmic}[1]
    \STATE Warm up model $\theta$ for $E_w$ epochs according to $\mathcal{A}$.
    
    \FOR{epoch $e=E_w,E_w+1,\ldots,E$}
    \STATE Compute the geometric predictions $\mathbf{Q}$ for input samples.
    \STATE Compute the surrogate class prior $\boldsymbol{\pi}$ on training dataset $\mathcal{D}$.
    \FOR{mini-batch $k=1,2,\ldots,B$}
    \STATE Obtain the surrogate label $\hat{\mathbf{Q}}$ by \cref{algorithm:allocation}.
    \STATE Compute $\mathcal{L}_{\mathrm{CL}}$  according to $\mathcal{A}$ and the proposed $\mathcal{L}_{\mathrm{\method}}$ according to \cref{eq:overall}.
    \STATE Uptate model $\theta$ by minimizing $\mathcal{L}_{\mathrm{CL}}+\mathcal{L}_{\mathrm{\method}}$.
    \ENDFOR
    \ENDFOR

\end{algorithmic}
\end{algorithm}

\section{Supplementary Experimental Setups}\label{apd:exp}

\subsection{Dataset Statistics}\label{appendix:dataset}


We conduct experiments on three benchmark datasets for long-tailed learning, including CIFAR-100-LT~\citep{cao2019learning}, ImageNet-LT~\citep{liu2019large} and Places-LT~\citep{liu2019large}. For small-scale datasets, we adopt the widely-used CIFAR-100-LT with the imbalanced factor of 100, 50 and 10~\citep{cao2019learning}. 

In \cref{table:data_stat}, we summarize the benchmark datasets used in this paper. Long-tailed versions of CIFAR-100~\citep{krizhevsky2009learning,fan2022fedskip} are constructed following the exponential distribution. For large-scale datasets, ImageNet-LT~\citep{liu2019large} has 115.8K images with 1000 categories, ranging from 1,280 to 5 in terms of class cardinality and Places-LT~\citep{liu2019large} contains 62,500 images with 365 categories, with the sample number per category ranging from 4,980 to 5. The large-scale datasets follow Pareto distribution.

As for fine-grained group partitions, we divide each dataset to Many/Medium/Few according to the class cardinality. Concretely, we choose that the largest 34 classes for Many group, the medium 33 classes for Medium group and the smallest 33 classes for Few group on CIFAR-100-LT. On ImageNet-LT and Places-LT, we define Many group with class number over 100, Medium group with 20-100 samples, Few group as under 20 samples~\cite{liu2019large}.

\begin{table}[!htb]
\centering
\caption{Statistics of the benchmark long-tailed datasets. Exp represents exponential distribution.} \label{table:data_stat}
\begin{tabular}{lccccc}
\toprule[1.5pt]
Dataset           & \# Class & Type & Imbalanced Ratio & \# Train data  & \# Test data  \\ \midrule
CIFAR-100-LT-R100 & 100   & Exp            & 100              & 10847  & 10000 \\
CIFAR-100-LT-R50  & 100   & Exp            & 50               & 12608  & 10000 \\
CIFAR-100-LT-R10  & 100   & Exp            & 10               & 19573  & 10000  \\ \midrule
ImageNet-LT       & 1000  & Pareto         & 256              & 115846 & 50000 \\ \midrule
Places-LT         & 365   & Pareto         & 996              & 62500  & 36500 \\ \bottomrule[1.5pt] 
\end{tabular}
\end{table}

\subsection{Linear probing statistics on the large-scale dataset}\label{appendix:linear}

 The 100-shot evaluation follows the setting in previous works \citep{jiang2021self,zhou2022contrastive}. As shown in \cref{tab:100stat}, full-shot evaluation requires 10x - 30x the amount of data compared with the pre-training dataset, which might not be very practical. In contrast, the scale of 100-shot data is consistent with the pre-training dataset. We also present full-shot evaluation in \cref{exp:fullshot}.

\begin{table}[!htb]
\centering

\caption{Statistics of linear probing on the large-scale dataset.} \label{tab:100stat}
\begin{tabular}{lccccc}
\toprule[1.5pt] 
Dataset           & \# Class  & \# Training data & \# 100-shot data & \# full-shot data  & \# Test data  \\ \midrule
ImageNet-LT       & 1000   & 115,846 & 100,000 & 1,261,167   & 50,000 \\ 
Places-LT         & 365    & 62,500  & 36,500  & 1,803,460  & 36,500 \\ \bottomrule[1.5pt]
\end{tabular}
\end{table}

\subsection{Implementation Details}\label{appendix:impdetail}

\textbf{Toy Experiments.} We use a 2-Layer ReLU network with 20 hidden units and 2 output units for visualization. For \cref{fig:toy}, the SimCLR algorithm~\citep{chen2020simple} is adopted in the warm-up stage with proper Gaussian noise as augmentation. After the warm-up stage, we train \methodspace according to \cref{eq:overall}. We use the orthogonal classifier [(1,1),(-1,1),(-1,-1),(1,-1)] as the geometric uniform structure. For \cref{fig:toyR}, only the SimCLR algorithm is adopted for representation learning.

\textbf{More Experimental Setup for Main Results.} (SimCLR, Focal, SDCLR, DnC, BCL) In our experiments, we defaultly set the contrastive learning temperature $\gamma_{\rm CL}$ as 0.2 and the smoothing coefficient $\beta$ as 0.999 for training stability. For updating the marginal distribution constraint $\boldsymbol{\pi}$, we compute every 20 epochs on CIFAR-100-LT due to the small data size. On ImageNet-LT and Places-LT, we compute $\boldsymbol{\pi}$ every training epoch. Following previous work~\cite{jiang2021self,zhou2022contrastive}, we adopt a 2-layer MLP as the projector with 128 output dimension. For default data augmentations of contrastive learning,  random crop ranging from [0.1, 1], random horizontal flip, color jitter with probability as 0.8 and strength as 0.4 are adopted on CIFAR-100-LT. Random crop ranging from [0.08, 1], random horizontal flip, color jitter with probability as 0.8 and strength as 0.4 and the gaussian blur with probability as 0.5 are adopted on ImageNet-LT and Places-LT.

\textbf{Linear Probing Evaluation.} We follow~\citet{zhou2022contrastive} to conduct Adam optimizer for 500 epochs based on batch size 128, weight decay factor $5 \times 10^{-6}$ and the learning rate decaying from $10^{-2}$ to $10^{-6}$. For few-shot evaluation on ImageNet-LT and Places-LT, we use the same subsampled 100-shot subsets proposed in~\citep{zhou2022contrastive}.

\subsection{Focal Loss}

Focal loss~\citep{lin2017focal} is discussed and compared in \citep{jiang2021self,zhou2022contrastive} in the context of self-supervised long-tailed learning. Specifically, we use the term inside log(·) of SimCLR loss as the likelihood to replace the probabilistic term of the supervised Focal loss and obtain the self-supervised Focal loss as:

 \begin{equation}
\mathcal{L}_{focal} =-\frac{1}{|\mathcal{D}|} \sum_{\vx \in \mathcal{D}}(1-\vp)^{\gamma_{\rm F}} \log(\vp), \ \ \ 
\vp= \frac{\exp \left(f(\vx)^\top f(\vx^{+})/\gamma_{\rm F}\right)}{\sum_{x^{-} \in \mathcal{X}_b^{-}\cup \{\vx^{+}\}} \exp \left(f(\vx)^\top f(\vx^{-})/\gamma_{\rm F}\right)} \nonumber
\end{equation}
where $\gamma_{\rm F}$ is a temperature factor and $\mathcal{X}_b$ denotes the negative sample set. We defaultly set $\gamma_{\rm F}$ as 2 in all experiments.

\subsection{Toy Experiments on Various Imbalanced Ratios}

In Figure~\ref{fig:toyR}, we provide a concrete visualization on a 2-D toy dataset that the sample-level uniformity of the contrastive learning loss leads to the more space invasion of head classes and space collapse of tail classes with increasing the imbalance ratios. According to the results, we can observe that the head classes gradually occupy the embedding space as the imbalanced ratios increase. This further demonstrates the importance of designing robust self-supervised learning method to counteract the distorted embedding space in the long-tailed context.

\begin{figure}[!htb]
   
    \centering
    \subfigure[R=1(Balanced)]{
    \includegraphics[height=0.22\textwidth,width=0.22\textwidth]{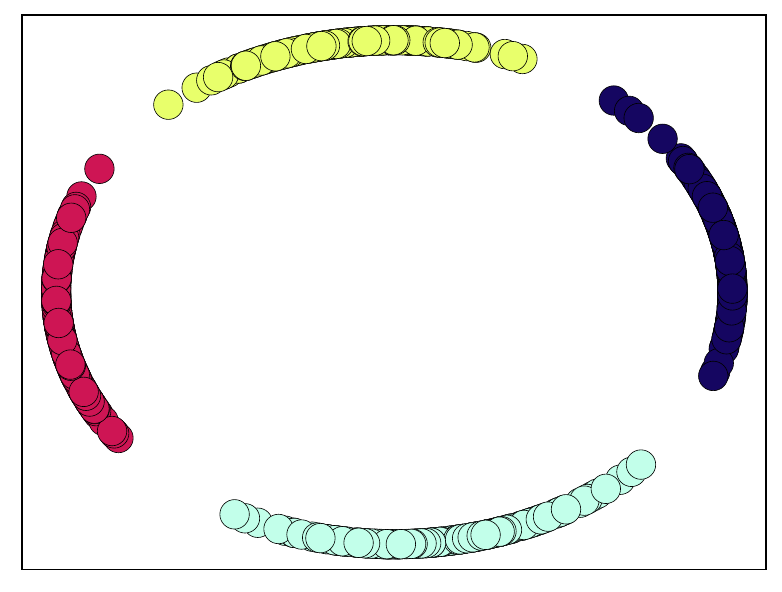}
    }
    \hspace{10mm}
    \subfigure[R=4]{
    \includegraphics[height=0.22\textwidth,width=0.22\textwidth]{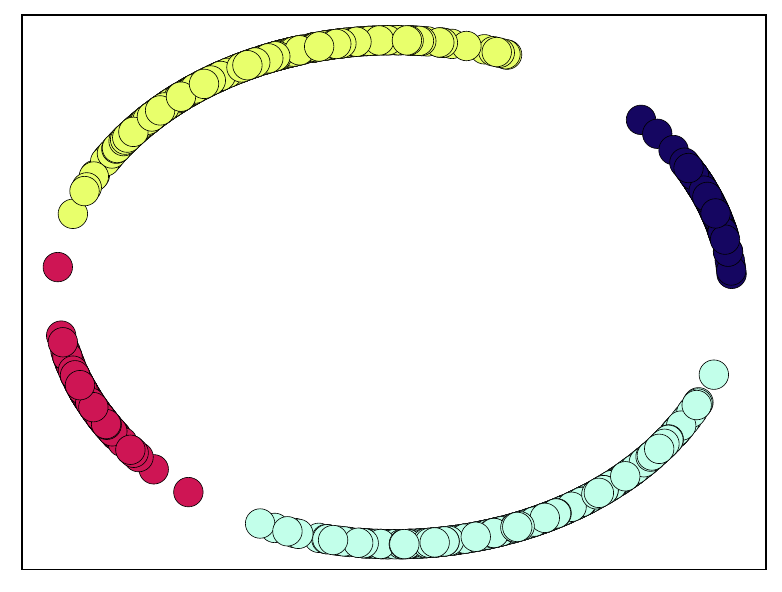}
    }

    \subfigure[R=16]{
    \includegraphics[height=0.22\textwidth,width=0.22\textwidth]{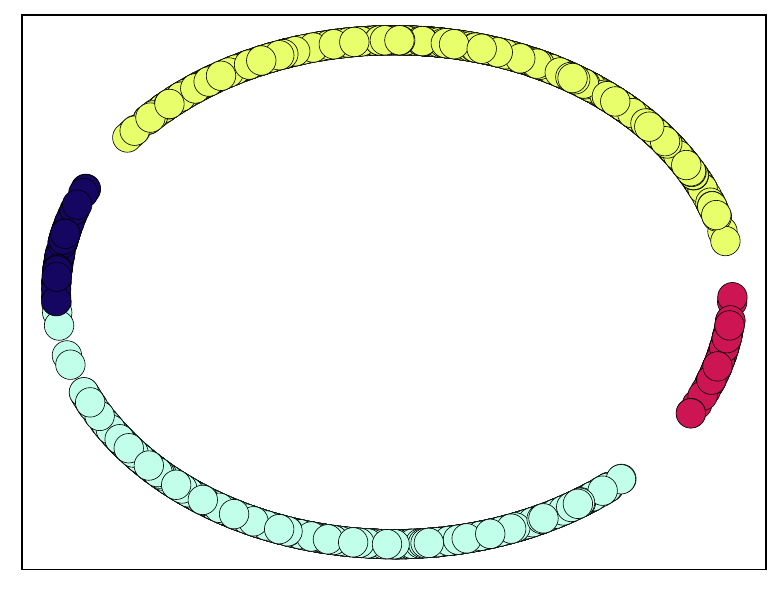}
    }
    \hspace{10mm}
    \subfigure[R=64]{
    \includegraphics[height=0.22\textwidth,width=0.22\textwidth]{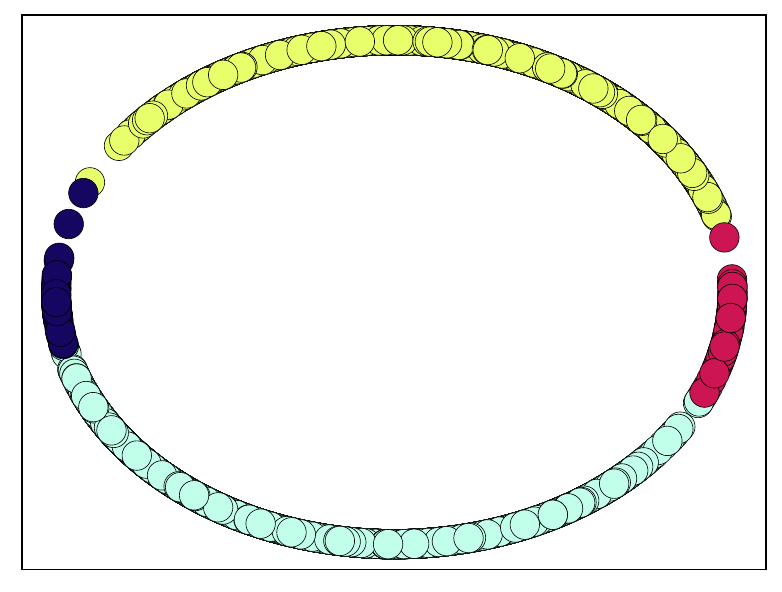}
    }
    \caption{Visualization of the embedding space learnt by vanilla contrastive learning loss on the 2-D imbalanced synthetic dataset with different imbalanced ratios~(1,4,16,64). As the ratio increases, head classes gradually occupy the embedding space with the collapse of the tail classes. }\label{fig:toyR}
\end{figure}

\section{Additional Experimental Results and Further Analysis}
\label{appendix:results}

\subsection{Error Bars for the Main Results}

In this part, we present main results with error bars calculated over 5 trials.

\begin{table}[!htb]

\caption{Linear probing results (average accuracy, \%) over 5 trials on CIFAR-LT with different imbalanced ratios~(100,50,10), ImageNet-LT and Places-LT.}\label{tab:errorbar}
\begin{tabular}{c|ccc|c|c}
\toprule[1.5pt]
          & CIFAR-LT-R100 & CIFAR-LT-R50 & CIFAR-LT-R10 & ImageNet-LT & Places-LT  \\\midrule
SimCLR    & 50.72$\pm$0.26    & 52.24$\pm$0.31   & 55.67$\pm$0.44   & 36.65$\pm$0.16  & 33.61$\pm$0.12 \\
+\method & 53.96$\pm$0.23    & 55.42$\pm$0.22   & 57.36$\pm$0.39   & 38.28$\pm$0.13  & 34.33$\pm$0.10 \\\midrule
Focal     & 51.04$\pm$0.27    & 52.22$\pm$0.38   & 56.23$\pm$0.45   & 37.49$\pm$0.11  & 33.65$\pm$0.14 \\
+\method  & 53.92$\pm$0.19    & 55.06$\pm$0.28   & 58.05$\pm$0.28   & 38.92$\pm$0.14  & 34.42$\pm$0.17 \\\midrule
SDCLR     & 52.87$\pm$0.22    & 53.87$\pm$0.21   & 55.44$\pm$0.25   & 36.25$\pm$0.18  & 33.99$\pm$0.14 \\
+\method  & 54.81$\pm$0.26    & 55.34$\pm$0.28   & 56.97$\pm$0.34   & 38.53$\pm$0.14  & 34.70$\pm$0.10 \\\midrule
DnC       & 52.52$\pm$0.32    & 53.21$\pm$0.35   & 57.59$\pm$0.36   & 37.23$\pm$0.21  & 33.90$\pm$0.18 \\
+\method    & 54.88$\pm$0.23    & 56.33$\pm$0.31   & 58.94$\pm$0.25   & 38.67$\pm$0.19  & 34.52$\pm$0.23 \\\midrule
BCL       & 56.45$\pm$0.40    & 57.18$\pm$0.26   & 59.12$\pm$0.28   & 38.33$\pm$0.10  & 34.76$\pm$0.15 \\
+\method    & 57.65$\pm$0.33    & 59.00$\pm$0.33   & 60.34$\pm$0.29   & 39.95$\pm$0.15  & 35.32$\pm$0.17 \\ \bottomrule[1.5pt]
\end{tabular}
\end{table}

\subsection{Convergence of the Surrogate Label Allocation}

In \cref{tab:convergence}, we provide the experiments to verify the convergence of the Sinkhorn-Knopp algorithm, which adopts the criterion as the stopping reference.

\begin{table}[!htb]
\centering

\caption{The value of $e$ during the convergence of surrogate label allocation on CIFAR-LT-R100.} \label{tab:convergence}
\begin{tabular}{c|cccccccc}
\toprule[1.5pt]
Iter & 0 & 10 & 20 & 30 & 50 & 70 & 100 & 150 \\
\midrule
$e$ & 67.89 & 4.28 & 0.53 & 0.076 & 0.0054 & 0.0005 & 2.08$\times 10^{-5}$ & 3.58$\times 10^{-7}$ \\
\bottomrule[1.5pt]
\end{tabular}
\end{table}

We define the criterion $e = sum(| u./u' - 1 |)$ as the relative changes of one scaling vectors $u$, where $u'$ represents the vector in the latest iteration. Then, the algorithm converges as the criterion $e \rightarrow 0$. As shown in \cref{tab:convergence}, we can see that the criterion diminishes rapidly. Let $e < 10^{-6}$ represent the indicator of the convergence, we further obtain the averaging convergence iterations as $141\pm45$ (statistics under 1000 runs). In practice, we set the default Sinkhorn iterations as 300 to guarantee the convergence, as detailed in \cref{sec:exp}.

\subsection{Empirical Comparison with More Baselines}\label{appendix:moresslbaseline}

\revise{In \cref{tab:moresslbaseline}, we conduct a range of experiments to compare PMSN\citep{kukleva2023temperature} and TS~\citep{assran2023hidden} with our proposed GH on CIFAR-LT with different imbalanced ratios. }

\begin{table}[!htb]
\centering
\revise{\caption{Linear probing accuracy of more SSL-LT baselines on CIFAR-100-LT with different imbalanced ratios.}\label{tab:moresslbaseline}
\vspace{.4em}
\begin{tabular}{c|c|c|c|c|c}
\toprule[1.5pt]
 & Method & Many & Med & Few & Avg \\
\midrule[0.6pt]\midrule[0.6pt]
\multirow{5}{*}{\rotatebox[origin=c]{90}{CIFAR-R100}}
& SimCLR & 54.97	&49.39	&47.67	&50.72 \\
& SimCLR+TS &55.53	&50.33	&50.06	&52.01\\
& PMSN &55.62	&52.12	&49.85	&52.56 \\
& \cellcolor{greyL}SimCLR\textbf{+GH} & \cellcolor{greyL}57.38	&\cellcolor{greyL}52.27	&\cellcolor{greyL}\textbf{52.12}	&\cellcolor{greyL}53.96 \\
& \cellcolor{greyL}SimCLR+TS\textbf{+GH} &\cellcolor{greyL}\textbf{57.44}	&\cellcolor{greyL}\textbf{52.76}	&\cellcolor{greyL}51.79	&\cellcolor{greyL}\textbf{54.03} \\
\midrule[0.6pt]
\multirow{5}{*}{\rotatebox[origin=c]{90}{CIFAR-R50}}
& SimCLR & 56.00	&50.48	&50.12	&52.24 \\
& SimCLR+TS &56.44	&52.58	&51.91	&53.67\\
& PMSN &56.76	&52.52	&53.09	&54.15 \\
& \cellcolor{greyL}SimCLR\textbf{+GH} &\cellcolor{greyL}\textbf{58.88}	&\cellcolor{greyL}53.00	&\cellcolor{greyL}54.27	&\cellcolor{greyL}55.42\\
& \cellcolor{greyL}SimCLR+TS\textbf{+GH} &\cellcolor{greyL}58.47	&\cellcolor{greyL}\textbf{54.61}	&\cellcolor{greyL}\textbf{54.70}	&\cellcolor{greyL}\textbf{55.95}\\
\midrule[0.6pt]
\multirow{5}{*}{\rotatebox[origin=c]{90}{CIFAR-R10}}
& SimCLR & 57.85	&55.06	&54.03	&55.67\\
& SimCLR+TS &58.26	&56.24	&54.97	&56.51\\
& PMSN &56.91	&54.61	&55.67	&55.74 \\
& \cellcolor{greyL}SimCLR\textbf{+GH} & \cellcolor{greyL}59.26	&\cellcolor{greyL}56.91	&\cellcolor{greyL}55.85	&\cellcolor{greyL}57.36 \\
& \cellcolor{greyL}SimCLR+TS\textbf{+GH} &\cellcolor{greyL}\textbf{59.44}	&\cellcolor{greyL}\textbf{57.15}	&\cellcolor{greyL}\textbf{56.48}	&\cellcolor{greyL}\textbf{57.71} \\
\bottomrule[1.5pt]
\end{tabular}}
\end{table}

\revise{From the results, we can see that the proposed method consistently outperforms PMSN\citep{kukleva2023temperature} and TS~\citep{assran2023hidden} across different imbalanced ratios on CIFAR-LT. Besides, we can observe that combining GH and TS~\citep{assran2023hidden} consistently improves the performance of contrastive learning on CIFAR-LT.}

\subsection{Empirical Comparison with K-Means Algorithm}

\revise{K-means algorithm~\citep{hartigan1979algorithm} tends to generate clusters with relatively uniform sizes, which will affect the cluster performance under the class-imbalanced scenarios~\citep{liang2012k}. To gain more insights, we conduct empirical comparisons using K-means as the clustering algorithm and evaluate the NMI score with ground-truth labels and the linear probing accuracy on CIFAR-LT-R100. }

\begin{table}[!htb]
\centering
\revise{
\caption{Linear probing accuracy and NMI score on CIFAR-100-LT-R100.}
\label{tab:kmeans}
\begin{tabular}{lcc}
\toprule[1.5pt]
Method        & Accuracy                      & NMI score                                             \\ \midrule
SimCLR        & 50.72                     & 0.28                                        \\
+K-means  & 51.44 & 0.35  \\
+\method      & 53.96                     & 0.50                                          \\ \bottomrule[1.5pt]
\end{tabular}}
\end{table}

\revise{From the results, we can see that K-means generates undesired assignments with lower NMI score and achieves unsatisfying performance compared with our GH. This observation is consistent with previous studies~\citep{liang2012k}.}

\subsection{Compatibility on the Class-Balanced Data}

\revise{
In \cref{tab:balanced}, we present the results on the balanced dataset CIFAR-100 across different methods.} 

\begin{table}[!htb]
\centering
\caption{Linear probing on class-balanced CIFAR-100. We report Accuracy($\%$) for comparison.}\label{tab:balanced}\vspace*{-5pt}
\resizebox{\textwidth}{!}{
\begin{tabular}{c|cc|cc|cc|cc|cc}
\toprule[1.5pt]
Method & SimCLR        & +\method           & Focal         & +\method            & SDCLR         & +\method            & DnC         & +\method              & BCL           & +\method      \\\midrule[0.6pt]
Accuracy    & 66.75  & 66.41  &  66.42  & 66.79  & 65.46  & 66.17   &  67.78  & 67.57  & 69.16  & 69.33  \\\bottomrule[1.5pt]
\end{tabular}}
\end{table}

\revise{From the results, we can see that \methodspace shows comparable performance with the baseline methods when the data distribution is balanced. According to the neural collapse theory~\cite{papyan2020prevalence}, well-trained neural networks can inherently produce the category-level uniformity on class-balanced data. As expected, our \methodspace will degenerate to the vanilla SSL baselines as the geometric labels can easily be aligned with the latent ground-truth labels. The empirical findings are also consistent with recent explorations~\cite{kasarla2022maximum} in supervised learning context. \revise{Besides, the minor decrease in performance could potentially be attributed to some random factors during training or the negligible effect of GH loss as it might not reach an absolute zero value.}}

\subsection{Computational Cost}

In \cref{tab:ccost}, we present the mini-batch training time of different baseline methods on CIFAR-100-LT, ImageNet-LT and Places-LT. 

\begin{table}[!htb]
\centering
\caption{The time cost~(seconds) of mini-batch training on CIFAR-100-LT, ImageNet-LT and Places-LT.}\label{tab:ccost}
\resizebox{\textwidth}{!}{%
\begin{tabular}{c|cc|cc|cc|cc|cc}
\toprule[1.5pt]
Dataset      & SimCLR &+\method  & Focal &+\method & SDCLR &+\method & DnC &+\method & BCL &+\method \\\midrule[0.6pt]\midrule[0.6pt]
CIFAR-LT    & 0.38 &0.41                    &    0.37 &0.40       & 0.42 &0.47         & 0.39  &  0.41     & 0.38 &0.41                \\
ImageNet-LT& 0.76 &0.79                   &    0.75 &0.77            & 0.94 &1.01         &  0.76 &   0.79     & 0.76 &0.78                \\
Places-LT   & 0.72 &0.75                   & 0.76 &0.78              &  1.00 &1.05        &  0.72 &   0.76    & 0.72 &0.75        \\\bottomrule[1.5pt]    
\end{tabular}
}
\end{table}

In our runs, the proposed \methodspace only incurs a minor computational overhead on CIFAR-100-LT, ImageNet-LT and Places-LT, respectively, which is relatively lightweight compared to the total computational cost of the contrastive baselines. This indicates the great potential of \methodspace to collaborate with more SSL methods to acquire the robustness on data imbalance in a low-cost manner. 

\subsection{Ablations on Hyper-parameters}

In this part, we present ablation studies \textit{w.r.t.} temperature $\gamma_{\mathrm{GH}}$, regularization coefficient $\lambda$ and Sinkhorn iteration $E_s$ on CIFAR-LT.

\begin{figure}[!htb]
    \centering
    \subfigure{
    \begin{minipage}{0.28\textwidth}
    \centering
    \includegraphics[width=\linewidth]{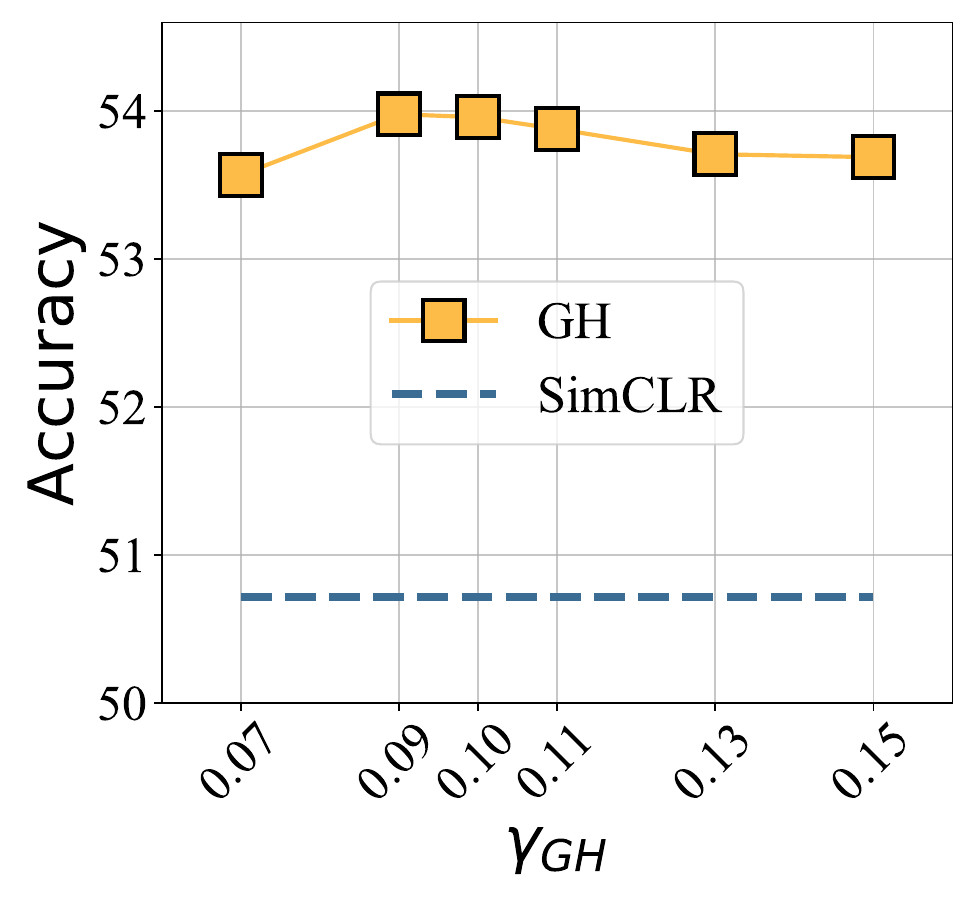}
    \end{minipage}
    }
    \hspace{0.3cm}
    \subfigure{
    \begin{minipage}{0.28\textwidth}
    \centering
    \includegraphics[width=\linewidth]{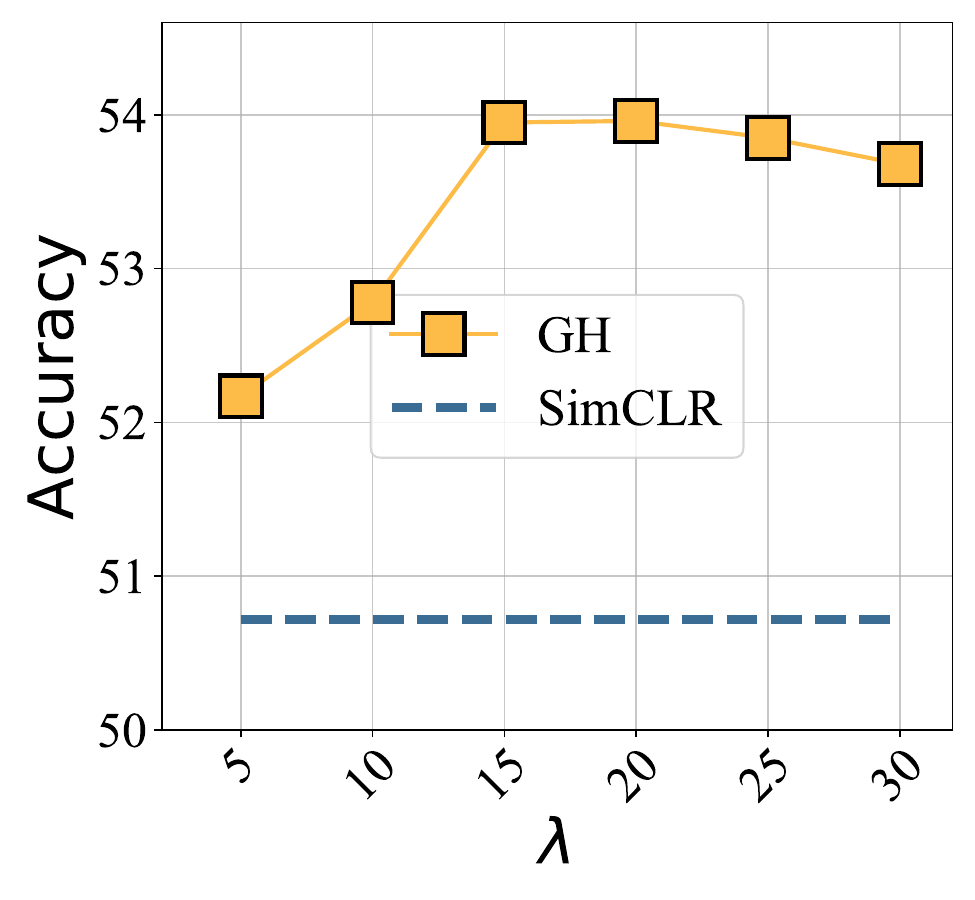}
    \end{minipage}
    }
    \hspace{0.3cm}
    \subfigure{
    \begin{minipage}{0.28\textwidth}
    \centering
    \includegraphics[width=\linewidth]{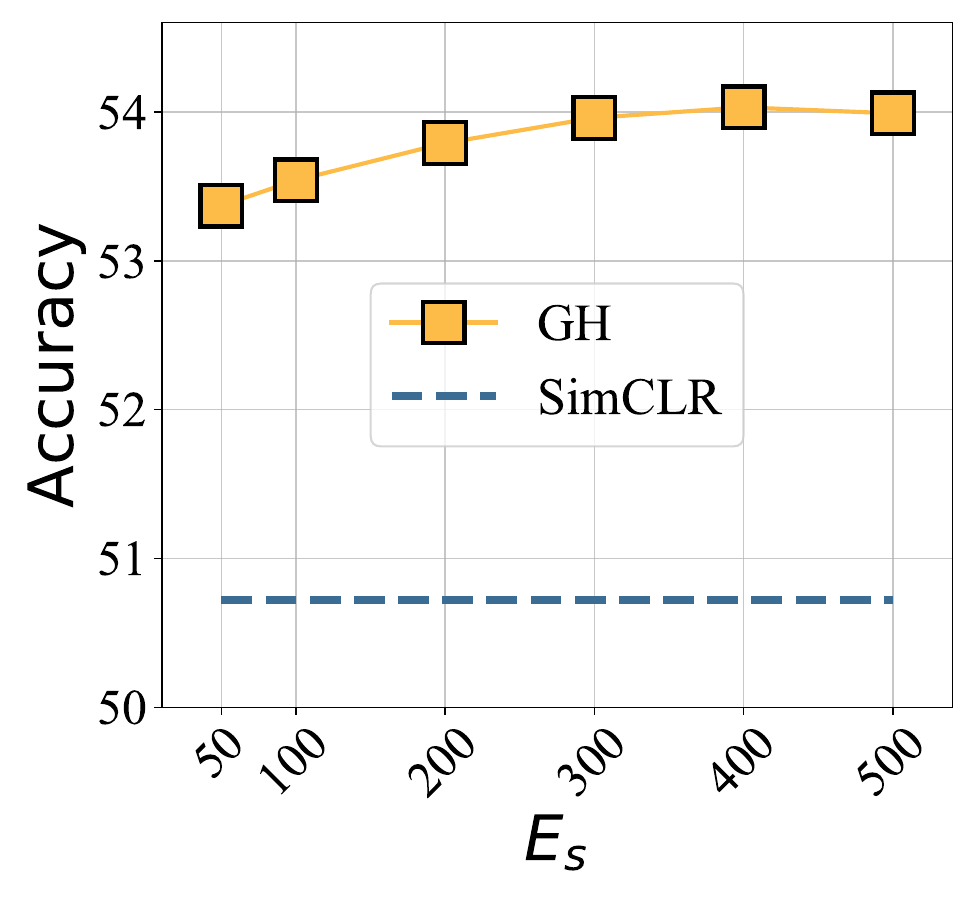}
    \end{minipage}
    }
    \vspace{-10pt}
    \caption{Ablations of temperature $\gamma_{\mathrm{GH}}$, coefficient $\lambda$ and Sinkhorn iteration $E_s$ on CIFAR-LT.}
    
    \label{fig:appendix_ablation}
\end{figure}

\subsection{Ablations on the Training Epoch}

In \cref{tab:epoch}, we present the comparison between SimCLR and SimCLR+\methodspace on CIFAR-LT-R100 under different training epochs.

\begin{table}[!htb]
\centering
\caption{Linear probing results on CIFAR-LT-R100 with different training epochs.} \label{tab:epoch}
\begin{tabular}{c|ccccc}
\toprule[1.5pt]
epoch & 200 & 500 & 1000 & 1500 & 2000 \\
\midrule
SimCLR & 49.53 & 50.32 & 50.72 & 50.84 & 50.27 \\
+\method & 50.89 & 54.00 & 53.96 & 53.95 & 53.91 \\
\bottomrule[1.5pt]
\end{tabular}
\end{table}

According to the table, we can see that both methods appropriately reach the saturated performance when the training epochs are larger than 500. To guarantee the converged performance, we thus set the default training epochs as 1000. 

\subsection{Ablations on the Batch Size}

To explore the effect of the training batch size, we conduct the experiments with different batch size on CIFAR-LT-R100 as follows.

\begin{table}[!htb]
\centering

\caption{Linear probing results on CIFAR-LT-R100 w.r.t the methods with different batch size.} \label{tab:batchsize}
\begin{tabular}{c|ccccc}
\toprule[1.5pt]
Batch size & 128 & 256 & 512 & 768 & 1024 \\
\midrule
SimCLR & 50.14 & 51.08 & 50.72 & 50.25 & 50.07 \\

+\method & 52.72 & 53.43 & 53.96 & 53.95 & 53.18 \\
\bottomrule[1.5pt]
\end{tabular}
\end{table}

\begin{table}[!htb]
\centering
\revise{
\caption{Linear probing results on ImageNet-LT w.r.t the methods with different batch size.} \label{tab:batchsize_imagenet}
\begin{tabular}{c|ccccc}
\toprule[1.5pt]
Batch size & 256 & 384 & 512 & 768  \\
\midrule
SimCLR & 36.65 & 36.97 & 37.85 & 38.04  \\

+\method & 38.28 & 39.22 & 41.06 & 41.34  \\
\bottomrule[1.5pt]
\end{tabular}}
\end{table}

\revise{From the results, we can see that our \methodspace consistently outperforms the baseline SimCLR. It is worth noting that our method still provides siginificant improvements when the batch size is small (e.g. 2.6\% with batch size as 128 on CIFAR-LT), which reflects the robustness of the proposed \methodspace in terms of small batch sizes. Besides, we observe that the performance drops when reducing the batch size for both baseline method and our \methodspace on CIFAR-LT and ImageNet-LT, as shown in \cref{tab:batchsize}. This can potentially be attributed to the higher probability of encountering situations where certain classes are missing under smaller batch size. Intuitively, it might easily generate biased estimation when there is no support for a certain class in the mini-batch. Then, the cluster quality might be affected by the probability of encountering missing class, which potentially correlates the important factor, \textit{i.e.}, batch size.}

\subsection{Ablations on Geometric Uniform Structure}

In \cref{tab:gus}, we conduct experiments with the geometric uniform structure as the projector on top of the baseline contrastive learning methods. As can be seen, if geometric uniform structure alone is used to balance the representation
learning, the improvement is minor and sometimes degrades. This is because the direct estimation
from the geometric uniform structure is noisy during training when the representation is not ideally distributed. 

\begin{table}[!htb]
\centering
\caption{Ablations of the geometric uniform structure on CIFAR-100-LT with different imbalanced ratios~(100, 50, 10).}
\label{tab:gus}
\renewcommand\arraystretch{1}
\begin{tabular}{lccc}
\toprule[1.5pt]
Method        & CIFAR-LT-R100                      & CIFAR-LT-R50                       & CIFAR-LT-R10                       \\ \midrule
SimCLR        & 50.72                     & 52.24                     & 55.67                     \\
+GUS  & 51.10 & 51.99 & 55.56 \\
+\method      & 53.96                     & 55.42                     & 57.36                     \\ \bottomrule[1.5pt]
\end{tabular}
\end{table}

\subsection{Ablations on the Momentum Hyper-parameter}

In our proposed \method, the hyper-parameter $\beta$ controls the smoothing degree on the historical statistics regarding the dynamically estimated surrogate label distribution $\boldsymbol{\pi}$. We conduct empirical comparison with different $\beta$ to validate the stability of our method, as depicted in \cref{fig:beta}. From the results, we can see that our \methodspace can achieve consistent performance at the most cases. To guarantee the performance, we thus set the default hyper-parameter $\beta$ as 0.999.

\begin{figure}[!htb]
	\centering
	\includegraphics[width=0.45\textwidth]{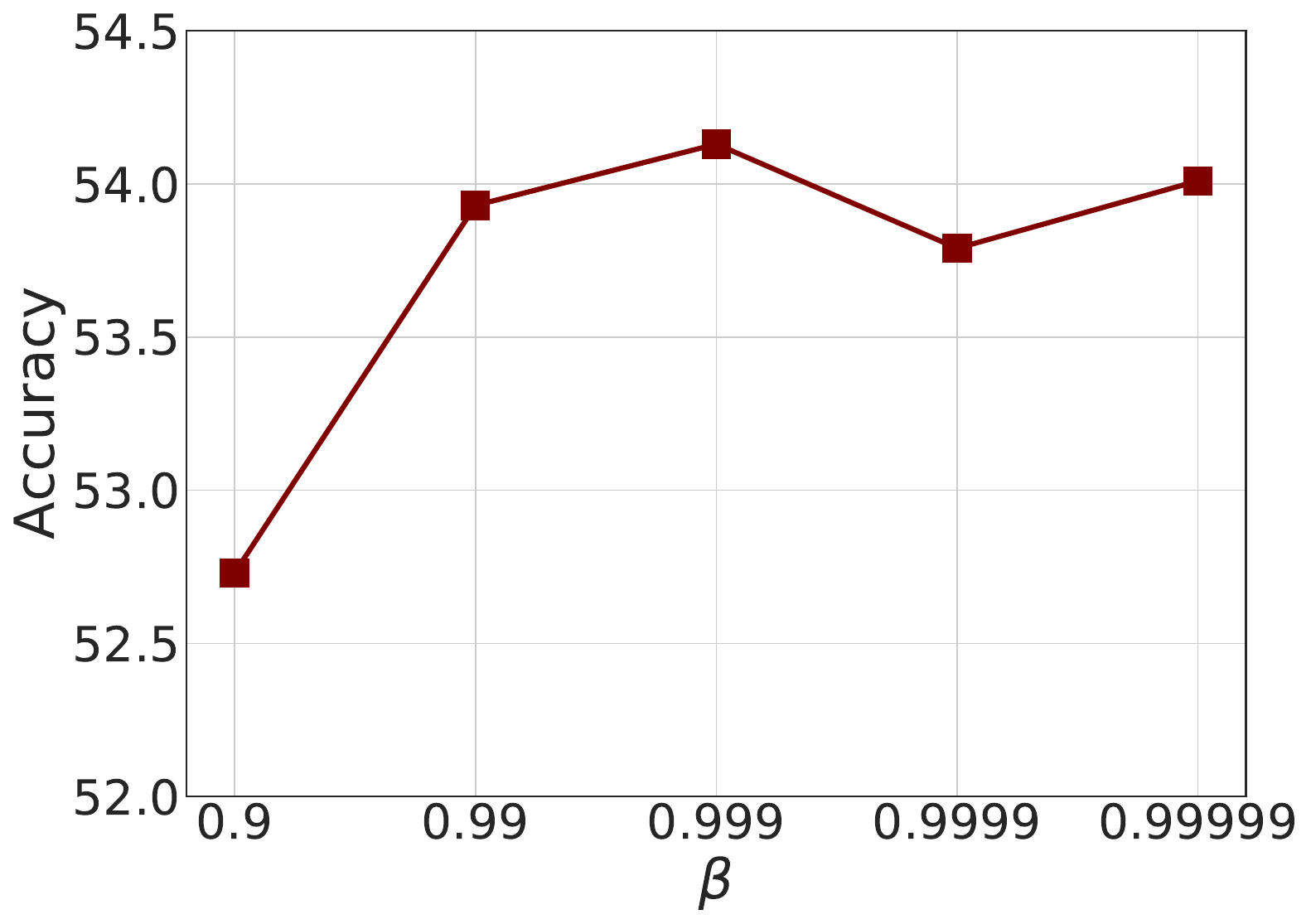}
    \caption{Linear probing \textit{w.r.t} hyper-parameter $\beta$ on CIFAR-LT-R100.}\label{fig:beta}

\end{figure}

\subsection{Implementations of Geometric Uniform Structure}\label{sec:impgus}

In \cref{fig:etfapp}, we empirically compare 
the results of analytical geometric uniform structure (Simplex ETF) with those of proxy variants. We thus conduct the sensitivity analysis w.r.t a smaller span of K, ranging from 30 to 220. From the results, we observe that the comparable performance is achieved in both geometric structures. This indicates that our method is effective to two forms of the geometric structure, relaxing the hard dimensional constraints in the analytical solution. It is also worth noting that our method's efficacy remains unaffected by the dimension of the geometric uniform structure when appropriately choosing the dimension, highlighting its ease of application.

\begin{figure}[!htb]
\centering
\includegraphics[width=0.5\textwidth]{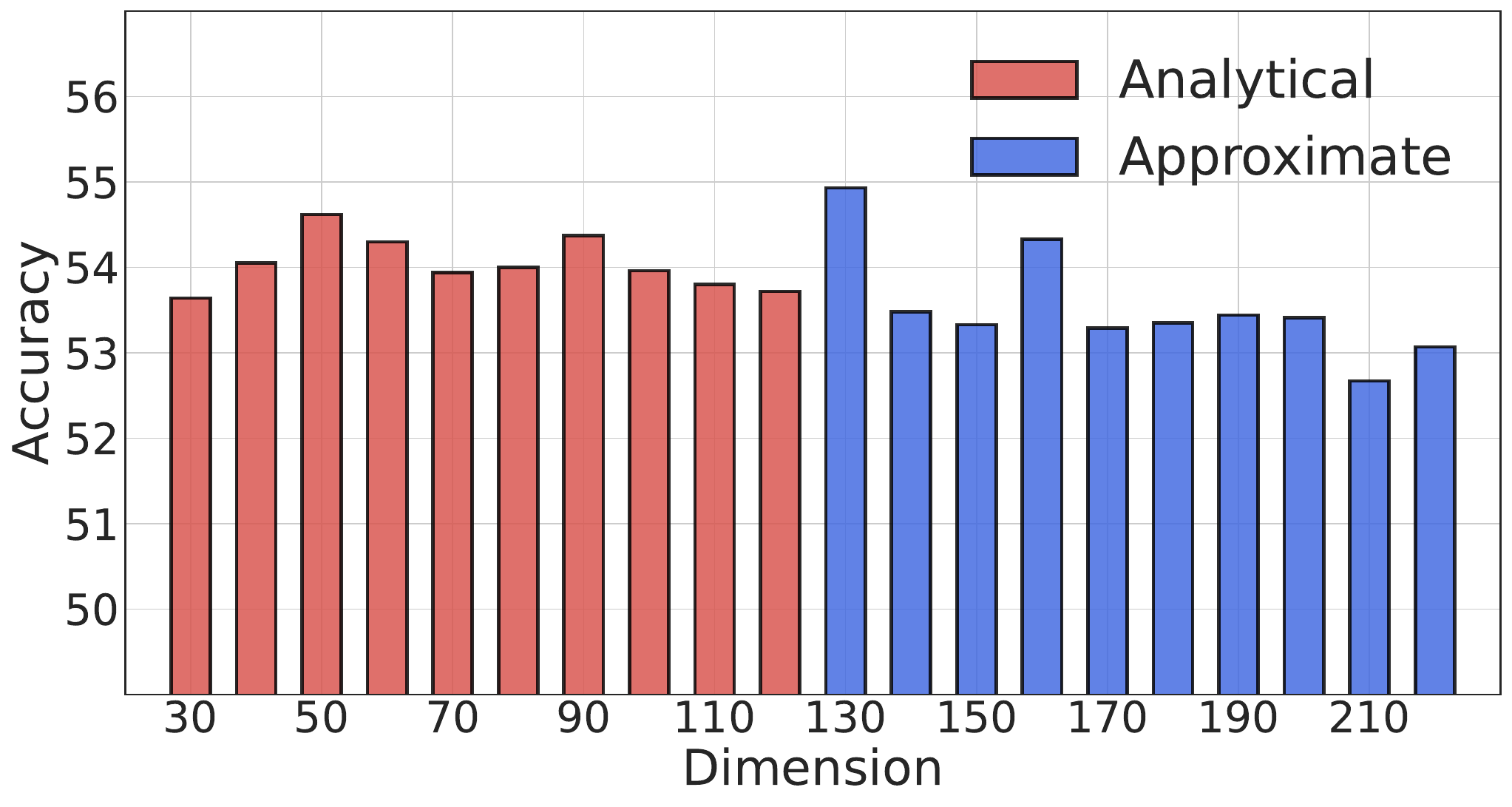}
\caption{Linear probing performance \textit{w.r.t.} the dimension $K$ of the geometric uniform structure $\mathbf{M}$ on CIFAR-LT-R100. Analytical or approximate solution are applied according to the dimensional constraints. More details can be referred to \cref{appendix:structure}.}\label{fig:etfapp}
\end{figure}

\subsection{Full-shot Evaluation on Large-scale Dataset}\label{exp:fullshot}

 Here we provide 100-shot evaluation and full-shot evaluation on ImageNet-LT, as shown in \cref{tab:full-shot}. We observe that the performance improvements and representation balancedness~(Std) are consistent with both evaluations, indicating the rationality of the 100-shot evaluation.

\begin{table}[!htb]
\centering

\caption{Full-shot linear evaluation and 100-shot evaluation on ImageNet-LT.} \label{tab:full-shot}
\begin{tabular}{llccccc}
\toprule[1.5pt]
Evaluation                 & Method   & Many  & Medium & Few   & Std  & Avg   \\ \midrule
\multirow{2}{*}{100-shot}  & SimCLR   & 41.69 & 33.96  & 31.82 & 5.19 & 36.65 \\
                           & +\method & 41.53 & 36.35  & 35.84 & 3.15 & 38.28 \\\midrule
\multirow{2}{*}{Full-shot} & SimCLR   & 42.86 & 35.17  & 33.13 & 5.13 & 37.86 \\
                           & +\method & 44.11 & 38.59  & 37.87 & 3.41 & 40.62 \\ \bottomrule[1.5pt] 
\end{tabular}
\end{table}

\subsection{Comprehensive Evaluation on More Real-world Scenarios}

To further validate the generalization of the proposed method, we conduct more comprehensive comparisons on various datasets with distinct characteristics and tasks, and conduct more experiments as follows: 

\begin{itemize}
    \item Marine-tree dataset~\citep{boone2022marine}:  This dataset is a large-scale dataset for marine organism classification. It contains more than 160K images divided into 60 classes with the number of images per class ranging from 14 to 16761.
    \item IMDB-WIKI-DIR dataset~\citep{yang2021delving}: IMDB-WIKI-DIR (age) dataset is subsampled from IMDB-WIKI dataset~\citep{rothe2018deep} to construct the deep imbalanced regression benchmark. It contains 202.5K images with the number of images per bin varied between 1 and 7149. 
    \item CUB-200~\citep{wah2011caltech} and Aircrafts~\citep{maji2013fine} dataset. Caltech-UCSD Birds 200 (CUB-200) and Aircrafts dataset are two fine-grained datasets, which contains 11K images with 200 classes and 10K images with 102 classes, respectively.
\end{itemize}

\begin{table}[!htb]
\centering

\caption{Linear probing results (average accuracy, \%) on Marine-tree dataset.} \label{tab:marine}
\begin{tabular}{c|cccc}
\toprule[1.5pt]
Marine & Many & Medium & Few & Avg \\
\midrule
SimCLR & 36.05 & 47.01 & 48.80 & 43.95 \\
+\method & 35.70 & 47.14 & 51.62 & 44.82 \\
\bottomrule[1.5pt]
\end{tabular}
\end{table}

\begin{table}[!htb]
\centering

\caption{Vanilla finetuning results under the metric of mean average error (MAE~\citep{yang2021delving}, \textbf{lower is better}) on IMDB-WIKI-DIR dataset.} \label{tab:mae}
\begin{tabular}{c|cccc}
\toprule[1.5pt]
IMDB-WIKI-DIR (MAE) & Many & Medium & Few & Avg \\
\midrule
SimCLR & 8.10 & 18.31 & 29.99 & 9.14 \\
+\method & 7.77 & 17.18 & 29.29 & 8.75 \\
\bottomrule[1.5pt]
\end{tabular}
\end{table}

\begin{table}[!htb]
\centering

\caption{Vanilla finetuning results under the metric of Geometric Mean (GM~\citep{yang2021delving}, \textbf{lower is better}) on IMDB-WIKI-DIR dataset.} \label{tab:gm}
\begin{tabular}{c|cccc}
\toprule[1.5pt]
IMDB-WIKI-DIR (GM) & Many & Medium & Few & Avg \\
\midrule
SimCLR & 4.87 & 15.01 & 26.61 & 5.43 \\
+\method & 4.64 & 13.49 & 24.54 & 5.14 \\
\bottomrule[1.5pt]
\end{tabular}
\end{table}

\begin{table}[!htb]
\centering

\caption{Downstream linear probing results (Top1/Top5 accuracy, \%) on CUB-200 dataset.} \label{tab:cub}
\resizebox{\textwidth}{!}{
\begin{tabular}{c|cc|cc|cc|cc|cc}
\toprule[1.5pt]
CUB-200 & SimCLR & +\method & Focal & +\method & SDCLR & +\method & DnC & +\method & BCL & +\method \\
\midrule
TOP1 & 28.97 & 29.89 & 30.13 & 30.68 & 28.98 & 29.63 & 29.64 & 30.46 & 28.46 & 28.97 \\
TOP5 & 57.28 & 57.92 & 58.01 & 58.78 & 57.34 & 57.95 & 57.63 & 58.55 & 56.92 & 57.66 \\
\bottomrule[1.5pt]
\end{tabular}}
\end{table}

\begin{table}[!htb]
\centering

\caption{Downstream linear probing results (Top1/Top5 accuracy, \%) on Aircrafts dataset.} \label{tab:aircraft}
\resizebox{\textwidth}{!}{%
\begin{tabular}{c|cc|cc|cc|cc|cc}
\toprule[1.5pt]
Aircrafts & SimCLR & +\method & Focal & +\method & SDCLR & +\method & DnC & +\method & BCL & +\method \\
\midrule
TOP1 & 29.82 & 30.63 & 31.02 & 31.74 & 30.99 & 31.85 & 31.18  & 32.05 & 32.79 & 35.88 \\
TOP5 & 56.14 & 57.95 & 57.82 & 58.99 & 58.09 & 59.13 & 58.11 & 59.42 & 60.79 & 63.34 \\
\bottomrule[1.5pt]
\end{tabular}}
\end{table}

On large-scale dataset (Marine-tree dataset and IMDB-WIKI-DIR dataset), we adopt the training schedule similar to ImageNet-LT and Places-LT, except the training epochs reduced from 500 to about 200 epochs. Besides, we crop the images with the low resolution (112x112) to speed up the training. We conduct linear probing on Marine-tree, CUB-200 and Aircrafts dataset. The former is pretrained with Marine-tree dataset, while the latter is pretrained with ImageNet-LT. As for IMDB-WIKI-DIR dataset, we pretrain the network for initializing the weights of the downstream supervised imbalanced regression task. Specially, the geometric mean (GM) is defined for better prediction fairness~\citep{yang2021delving}. Both the evaluation metrics (MAE, GM) are the smaller the better.

From the results in \cref{tab:marine,tab:mae,tab:gm,tab:cub,tab:aircraft}, we can see that our proposed \methodspace consistently outperforms the baseline methods for all the metrics (linear probing accuracy, finetuning accuracy, MAE and GM) on various datasets/settings. This indicates the potential of \methodspace for adapting to a wide range of real-world data scenarios to counteract the negative impact of the long-tailed distribution.

\subsection{More Results on Joint Optimization with Warm-up Strategy}

We can potentially adopt warm-up strategy to initialize the weights $\theta$ against the degenerate solutions in the joint optimization. In this subsection, we conduct more comprehenvive experiments on CIFAR-LT-100 with different warm-up epochs to further verify the superiority of the proposed bi-level optimization.

\begin{table}[!htb]
\centering

\caption{Linear probing results of joint optimization on CIFAR-LT-R100 with different warm-up epochs.} \label{tab:joint_apx}
\begin{tabular}{c|ccccccc}
\toprule[1.5pt]
Epoch & 0 & 10 & 50 & 100 & 200 & 300 & 400 \\
\midrule
Accuracy & 50.18 & 51.14 & 50.77 & 50.97 & 50.44 & 50.21 & 50.57\\
\bottomrule[1.5pt]
\end{tabular}
\end{table}

From the results in \cref{tab:joint_apx}, we can see that the warm-up strategy has the potential to improve the linear probing performance by 1\% over the vanilla joint training. However, it seems that this strategy is sensitive to the proper epochs for warming-up, and the overall performance is not better than the bi-level optimization.

\section{Broader Impacts}

Learning long-tailed data without annotations is a vital element in the deployment of robust deep learning systems in the real-world applications~\citep{hong2023long,zhou2023balanced,zhang2021exploiting,zhang2021complementary}. The attribution is that real-world natural resources inevitably exhibit the long-tailed distribution~\citep{reed2001pareto}. The importance of self-supervised long-tailed learning is further emphasized when extended to a range of safety-critical scenarios~\citep{chen2023enhanced,huang2022registration,huang2022self}, including medical intelligence~\citep{wu2022integrating,wu2023towards,zhang2023grace}, autonomous driving~\citep{sun2021three, sun2022human, sun2023modality} and criminal surveillance, where the data imbalance may lead to the distorted representation. In this paper, we study a general and practical research problem in representation learning parity for self-supervised long-tailed learning, considering the intrinsic limitation of conventional contrastive learning that can not adequately address the over-expansion of the majorities and the passive-collapse of the minorities in the embedding space. Our method regularizes long-tailed learning from a geometric perspective and motivates more benign representation, which helps improve the downstream generalization and representation balancedness. \revise{Besides, our method has the potential to be applied in fairness research scenarios~\citep{liu2017sphereface} where both majority and minority classes~(or attributes) are present. Given the guidance of label information, we can explicitly constrain a consistent embedding space for each subgroup, thereby promoting category-level uniformity.}

Nevertheless, it is important to acknowledge that our method may have negative impact, such as employment disruption, as our study endeavors to reduce annotation costs by enabling robust self-supervised learning on hard-to-collect tail data resources. Specially, if self-supervised learning can extract the tail distribution with sufficient accuracy, the necessity for the human manipulation on the quality of data distribution will diminish. 


\section{Limitations}\label{appendix:limitations}

Roughly, our design is built upon the intrinsic clustering patterns that can inclusively represent the information for the downstream tasks. Although we demonstrate the appealing performance in the current benchmark, it cannot be always guaranteed in all scenarios. Once such a condition is not satisfied, namely, clustering only captures the task-irrelevant patterns but ignores the task-relevant details, the improvement might be limited or even negative. A potential way to overcome this drawback is using a small auxiliary labeling set to calibrate the clustering dynamic aligned with the downstream tasks, namely, a semi-supervised paradigm. The methods to encourage learning the stable features in the area of causal inference can also be borrowed to this problem to alleviate this dilemma.


\end{document}